\documentclass[11pt]{article}

\usepackage[margin=1in]{geometry}
\usepackage{times}
\usepackage{natbib}
\usepackage{hyperref}
\usepackage{url}
\usepackage{amsmath,amssymb,amsfonts}
\usepackage{graphicx}
\usepackage{booktabs}

\usepackage[T1]{fontenc}
\usepackage[utf8]{inputenc}

\usepackage{amsmath, amssymb, amsfonts, amsthm}
\usepackage{mathtools}
\usepackage{mathrsfs}
\usepackage{stmaryrd}
\usepackage{empheq}

\usepackage{graphicx}
\usepackage{wrapfig}
\usepackage{xcolor}

\usepackage{booktabs}
\usepackage{nicefrac}
\usepackage{enumitem}

\usepackage{algorithm}
\usepackage{algorithmic}

\usepackage{caption}
\usepackage{subcaption}

\usepackage{microtype}
\usepackage{hyperref}

\usepackage{etoolbox}
\usepackage{appendix}
\usepackage{wrapfig}

\usepackage{aliascnt}
\usepackage{cleveref}
\AtBeginEnvironment{appendices}{\crefalias{section}{appendix}}
\usepackage{tikz}
\mathtoolsset{showonlyrefs}
\usetikzlibrary{calc}

\newtheorem{rem}{Remark}
\newaliascnt{defn}{rem}
\newaliascnt{prop}{rem}
\newaliascnt{conjecture}{rem}
\newaliascnt{example}{rem}
\newaliascnt{ex}{rem}
\newaliascnt{lem}{rem}
\newaliascnt{cor}{rem}
\newaliascnt{thm}{rem}
\newaliascnt{ass}{rem}

\newtheorem{prop}[prop]{Proposition}
\newtheorem{lem}[lem]{Lemma}
\newtheorem{cor}[cor]{Corollary}
\newtheorem{thm}[thm]{Theorem}

\aliascntresetthe{defn}
\aliascntresetthe{prop}
\aliascntresetthe{conjecture}
\aliascntresetthe{example}
\aliascntresetthe{ex}
\aliascntresetthe{lem}
\aliascntresetthe{cor}
\aliascntresetthe{thm}
\aliascntresetthe{ass}
\crefname{prop}{proposition}{propositions}
\Crefname{prop}{Proposition}{Propositions}
\crefname{lem}{lemma}{lemmas}
\Crefname{lem}{Lemma}{Lemmas}
\crefname{cor}{corollary}{corollaries}
\Crefname{cor}{Corollary}{Corollaries}
\crefname{thm}{theorem}{theorems}
\Crefname{thm}{Theorem}{Theorems}
\crefname{defn}{definition}{definitions}
\Crefname{defn}{Definition}{Definitions}
\crefname{rem}{remark}{remarks}
\Crefname{rem}{Remark}{Remarks}
\crefname{ass}{assumption}{assumptions}
\Crefname{ass}{Assumption}{Assumptions}
\crefname{ex}{example}{examples}
\Crefname{ex}{Example}{Examples}
\crefname{example}{example}{examples}
\Crefname{example}{Example}{Examples}
\crefname{conjecture}{conjecture}{conjectures}
\Crefname{conjecture}{Conjecture}{Conjectures}
\newcommand{\norm}[1]{\left\|{#1}\right\|}

\newcommand{\eqdef}{\coloneqq}
\newcommand{\id}{\operatorname{Id}}

\newcommand{\R}{\mathbb{R}}

\newcommand{\softmax}{\operatorname{softmax}}
\newcommand{\Span}{\operatorname{span}}

\newcommand{\E}[1]{\mathbb{E}\left[{#1}\right]}

\definecolor{linkcolor}{RGB}{82, 82, 192}
\hypersetup{
    colorlinks=true,
    linkcolor=linkcolor,
    citecolor=linkcolor
}

\definecolor{codepurple}{rgb}{0.58,0,0.82}

\title{Tessellations of Semi-Discrete Flow Matching}
\author{
Emile Pierret\footnotemark[1] \\
DMA, ENS-PSL, MAP5 \\
\texttt{pierret@math.cnrs.fr} 
\and
Johannes Hertrich\footnotemark[1] \\
DMA, ENS-PSL \\
\texttt{johannes.hertrich@ens.fr} 
\and
\hspace{-1.5cm} Samuel Hurault\footnotemark[1] \\
\hspace{-1.5cm} LIGM, Univ. Gustave Eiffel, CNRS \\
\hspace{-1.5cm} \texttt{samuel.hurault@univ-eiffel.fr} 
\and 
Julie Delon \\
DMA, ENS-PSL  \\
\texttt{julie.delon@ens.fr} 
}

\date{}

\begin{document}

\maketitle
\footnotetext[1]{Equal contribution.}

\begin{abstract}
We study Flow Matching in a semi-discrete setting where a Gaussian source is transported toward a discrete target supported on finitely many points. This semi-discrete regime is the theoretical setting behind the use of Flow Matching for generative modeling, where the target distribution is represented by a finite dataset.
In this semi-discrete regime, the exact Flow Matching velocity field is available in closed form, which makes it possible to analyze the geometry induced by the terminal flow map independently of optimization and approximation effects. We investigate the terminal assignment regions, namely the preimages of the target atoms under the terminal flow. We show that these regions are open, simply connected and, under an additional assumption, homeomorphic to the unit ball.  At the same time, a planar four-point example shows that these cells can differ sharply from Laguerre cells arising in semi-discrete optimal transport: they may be non-convex, have curved boundaries, and exhibit different boundedness and adjacency patterns. These results clarify the geometry intrinsically induced by the exact semi-discrete Flow Matching objective before neural approximation enters the picture.
\end{abstract}

\section{Introduction}

Flow Matching (FM) and closely related regression-based formulations of generative transport appeared almost simultaneously in 2022 in three works: \cite{albergo2023building,lipman2023flow,liu2023flow}. These papers share the core idea of learning a continuous-time transport by regressing a velocity field along a prescribed interpolation path, thereby avoiding backpropagation through an ODE solver during training. Since then, FM has emerged as a powerful framework for training continuous normalizing flows and as a useful conceptual bridge between diffusion-type models and deterministic generative flows~\cite{Lipman_Guide_Code_2024_Arxiv}. In particular, several recent works have emphasized the role of rectified flows and their relation to transport efficiency and straightness of trajectories~\cite{liu2023flow,bansal2025wassersteinconvergencestraightnessrectified,PSM2026}, while the precise connection between Flow Matching and optimal transport (OT) has only started to be clarified more recently~\cite{hertrich2025relationrectifiedflowsoptimal}.

Among the probability paths that can be used in FM, OT-inspired paths play a particularly important role. They often lead to faster sampling and improved training behavior~\cite{lipman2023flow,liu2022rectified,liu2023flow,Zhang:2025ab,pooladian,tong}. At the same time, recent results show that the relation between FM or rectified flows and OT is more subtle than early analogies might suggest, and that apparent similarities can break down outside rather restrictive regimes~\cite{hertrich2025relationrectifiedflowsoptimal,lavenantflow}. This raises a basic question: what geometry is intrinsically induced by the exact Flow Matching vector field itself, before any neural approximation is introduced?

We address this question in a semi-discrete setting where the source is Gaussian and the target is a finitely supported measure. This regime is appealing for two reasons. First, semi-discrete OT provides a classical reference geometry through Laguerre cells, whose geometry underlies both the theory and algorithms of semi-discrete transport~\cite{aurenhammer1998minkowski,merigot2011multiscale,kitagawa2019convergence,peyre2019computational}. Second, when one starts from the independent coupling, the exact  velocity field of semi-discrete FM is available in closed form~\cite{bertrand2026closedformflowmatchinggeneralization}, making the model analytically tractable while retaining genuinely nontrivial geometric behavior.

Our objects of study are the terminal assignment regions
\begin{equation}\label{eq:ass_regions}
\Gamma_k \coloneqq \{x\in\R^d : \gamma(x)=a_k\},
\end{equation}
where $\gamma$ is the terminal flow map and $(a_k)_{1\leq k\leq n}$ are the support points of the target measure. These sets are the exact FM analogue of semi-discrete transport cells. A natural first guess would be that they inherit at least part of the usual OT geometry, for instance convexity, affine boundaries, or adjacency patterns comparable to those of Laguerre diagrams. 
Figure~\ref{fig:introfig} already suggests that this intuition is misleading. While FM cells appear topologically simple (connected and without holes), they can be visibly non-convex, separated by curved boundaries and organized in a way that does not completely match the boundedness or neighborhood structure of the corresponding Laguerre cells.

\paragraph{Contributions} In this paper, we make the observations from Figure~\ref{fig:introfig} formal. 
First, we present some simplification results regarding the dynamics of semi-discrete FM. 
Second, as our first main contribution, we establish global topological properties of the FM cells, showing that the sets $\Gamma_k$ are open and simply connected, and under an additional assumption we prove that they are homeomorphic to the unit ball. 
Third, for our second main contribution, we exhibit a simple planar four-point example showing that if FM cells may initially look like Laguerre cells, they can differ qualitatively from them, inducing a genuinely different geometry. Specifically, we show that they may be non-convex, have curved boundaries, and display different boundedness and adjacency patterns. 



\newlength{\threeimagesfigure}
\setlength{\threeimagesfigure}{0.32\linewidth}

\begin{figure}
    \centering
    \begin{subfigure}{\threeimagesfigure}
    \centering
    \includegraphics[width=\linewidth]{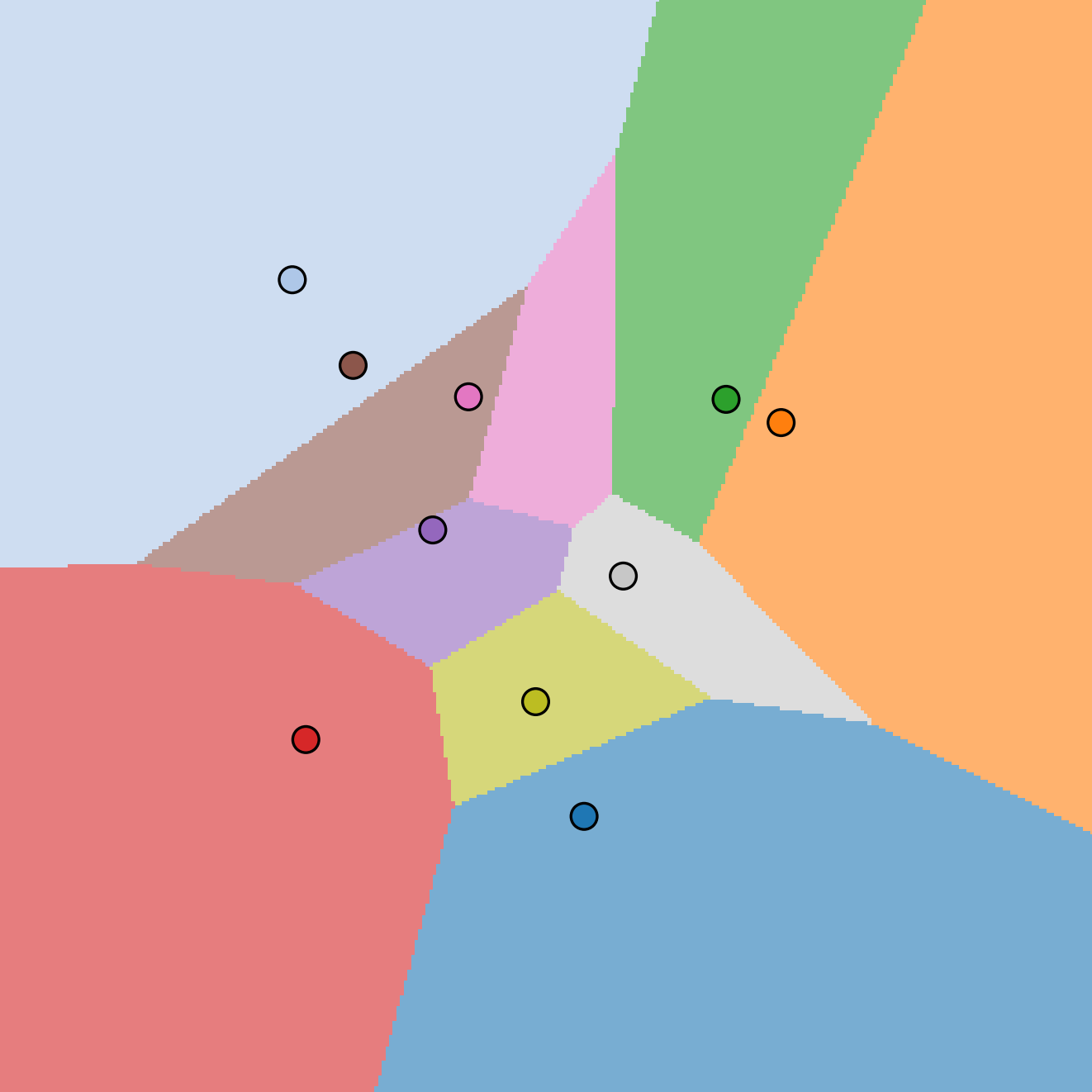}
     \caption{OT}
     \end{subfigure}
    \begin{subfigure}{\threeimagesfigure}
    \centering
    \includegraphics[width=\linewidth]{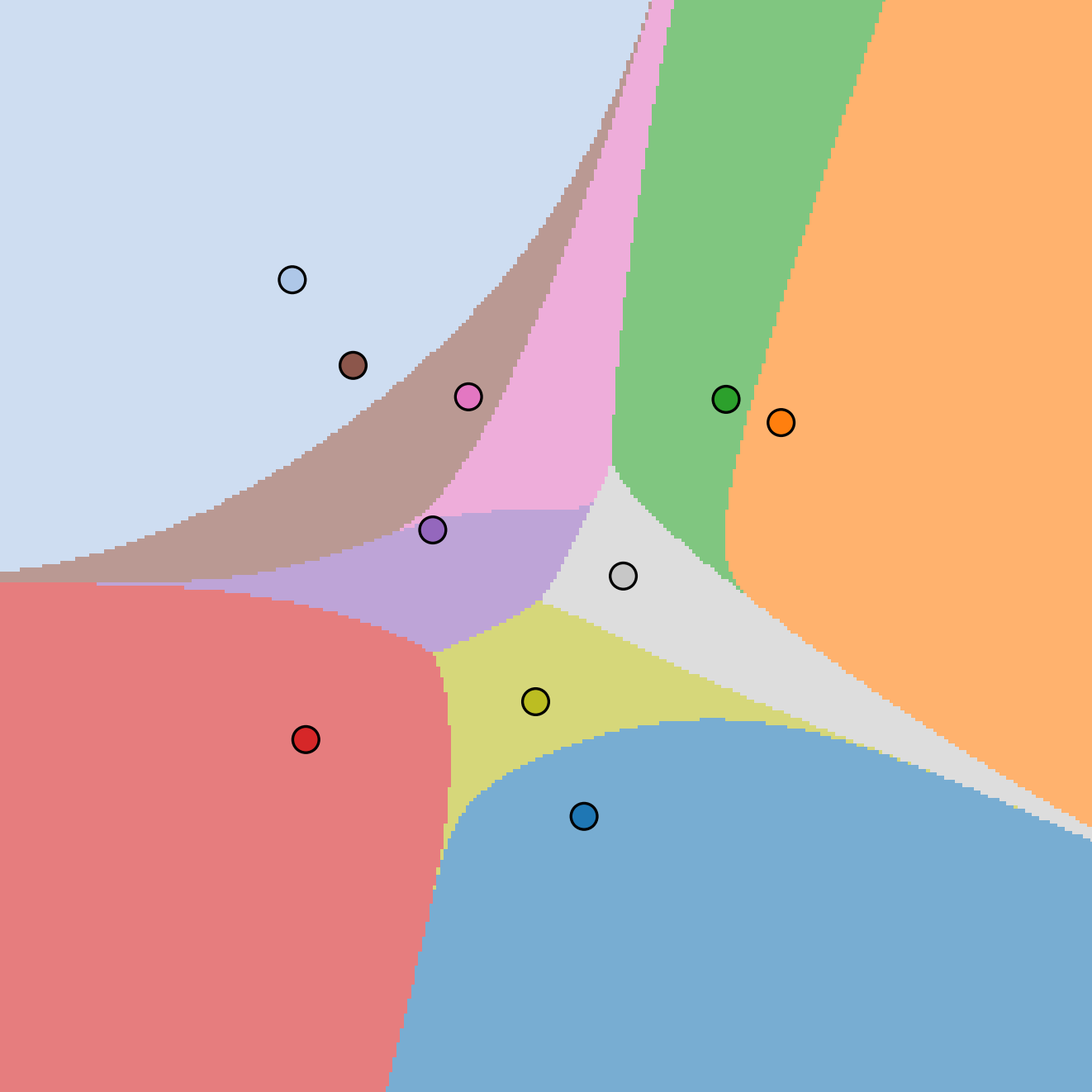}
     \caption{FM with exact velocity}
     \end{subfigure}
     \begin{subfigure}{\threeimagesfigure}
    \centering
    \includegraphics[width=\linewidth]{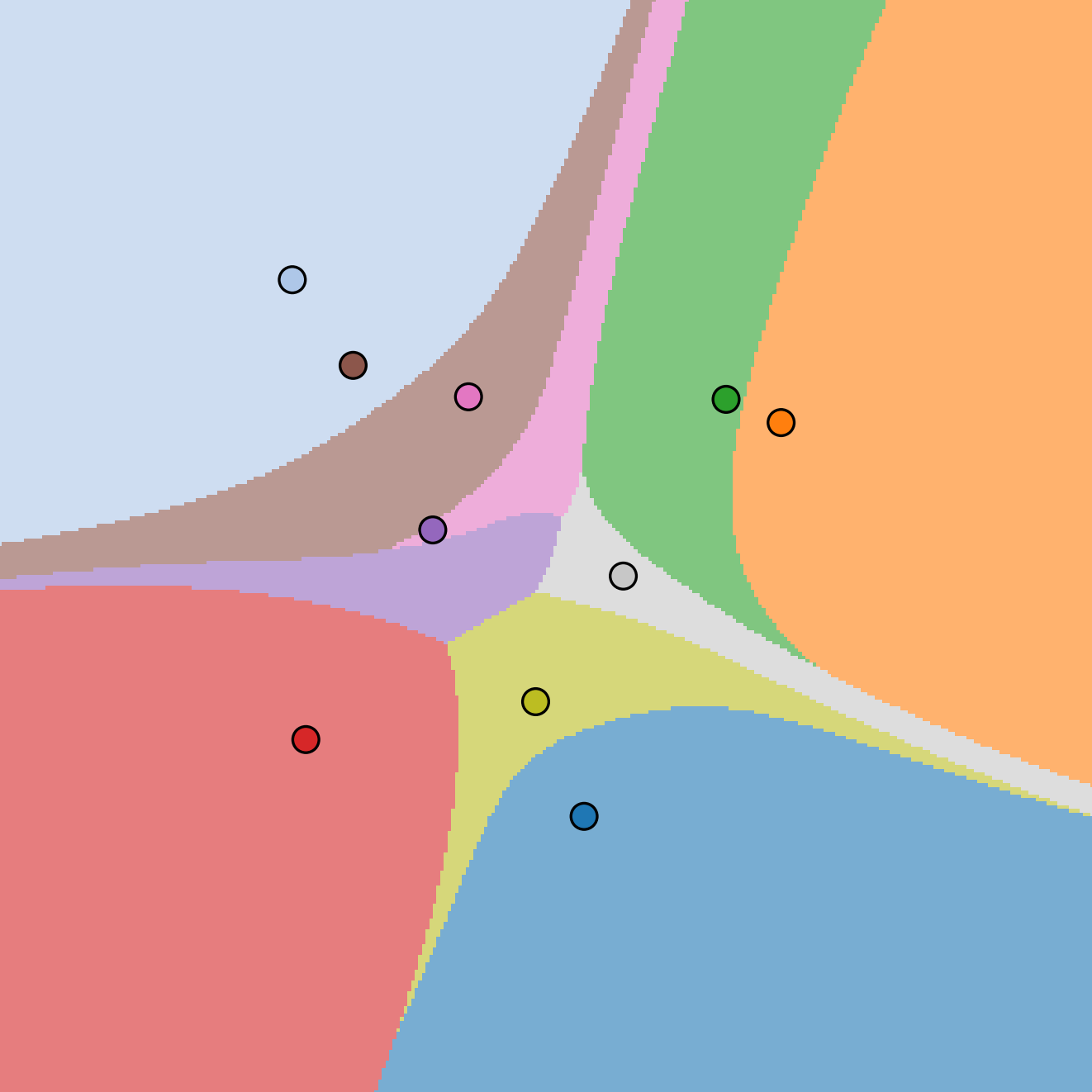}
     \caption{FM with learned velocity}
     \end{subfigure}

     \caption{\label{fig:introfig} Semi-discrete assignment cells $\Gamma_k$ for $10$ points in dimension $2$, for (a) Optimal Transport, (b) Flow Matching with closed-form velocity and (c) Flow Matching with velocity approximated by a neural network. In this last case, each grid point is assigned the color of the target point closest to its image under the terminal flow. }
\end{figure}

This perspective is relevant for generative modeling even though practical neural FM models only learn an approximation of the exact field studied here.
In discrete or nearly discrete regimes these approximations  may smooth, relax, or altogether blur the hard terminal assignments present in the exact dynamics. Our point is therefore not that trained networks should literally reproduce the cells $\Gamma_k$, although the right part of Figure~\ref{fig:introfig} suggests that these cells are still visible in the trained velocity field, at least in small dimension.  
Rather, these cells reveal the canonical geometric bias encoded in the FM objective before parameterization, optimization, and approximation effects enter the picture. Understanding this reference geometry is a first step toward disentangling what is intrinsic to the objective itself from what is induced by approximation, architecture, or optimization.


\section{Background on Flow Matching and Semi-discrete Optimal Transport}

\paragraph{Basics of Flow Matching}
We briefly recall the Flow Matching (FM)  principle before specializing to the discrete target setting considered in this paper. Flow Matching connects a source distribution $\mu_0$ and a target distribution $\mu_1$, using a continuous sequence of probability measures $(\mu_t)_{t\in[0,1]}$. FM proposes a time-dependent velocity field $v_t$ such that the associated continuity equation transports $\mu_0$ into $\mu_1$, or equivalently such that the flow map generated by the ODE
\begin{equation}
\label{eq:general_fm_ode}
\dot \gamma_t(x)=v_t(\gamma_t(x)),
\qquad \gamma_0(x)=x,
\end{equation}
satisfies $(\gamma_t)_\#\mu_0=\mu_t$ (provided that it exists and is unique). 

A key feature of FM is that one does not need to integrate the ODE during training. Instead, one regresses the velocity field against a target conditional velocity induced by a prescribed family of conditional paths~\cite{albergo2023building,lipman2023flow,Lipman_Guide_Code_2024_Arxiv, liu2023flow}.
More precisely, if $(X_0,X_1)$ is a coupling between $\mu_0$ and $\mu_1$ and if $X_t=(1-t)X_0+tX_1$, then the target FM velocity is given by the conditional expectation
\begin{equation}
\label{eq:general_conditional_velocity}
v_t^\star(x)=\E{X_1 - X_0\mid X_t=x}.
\end{equation}
This $L^2$ conditional expectation is naturally characterized as the minimizer of a least-squares regression problem. In other words, one can train a parametric field $v_t^\theta$ by regressing it against the conditional path derivative $X_1-X_0$. Observe that in the case where $\mu_0\sim\mathcal N(0,I)$ and where $(X_0,X_1)\sim\mu_0\otimes\mu_1$ is the independent coupling, existence and uniqueness of the flow map for the velocity $v_t^\star$ is easy to show (see \Cref{sec:existence_uniqueness}). Further, the flow map $\gamma_t$ is a bi-Lipschitz diffeomorphism for $0\leq t < 1$, even though it is not invertible for $t=1$.

\paragraph{Link to Optimal Transport.} For the quadratic cost, Monge optimal transport is classically formulated as the optimization problem
\begin{equation}
\label{eq:static_ot}
\inf_{T_\#\mu_0=\mu_1}\int_{\R^d}\|x-T(x)\|^2\,d\mu_0(x).
\end{equation}
When an optimal map $T^*$ exists (which holds as soon as $\mu_0$ has a density for instance), it induces a canonical path $\mu_t:= \left((1-t)\mathrm{Id}+tT^*\right)\#\mu_0$ between $\mu_0$ and $\mu_1$. This is the basic intuition behind the link with Flow Matching: both frameworks describe transport through time-dependent probability paths and velocity fields, but they optimize different objects. In particular, Flow Matching looks for a velocity field associated with a prescribed interpolation $(\mu_t)_{t\in[0,1]}$, whereas OT selects directly the map minimizing the transport cost. The two viewpoints coincide in special cases, for instance in one dimension or for suitable Gaussian settings (for independent $X_0$ and $X_1$ with commuting covariances), but not in general~\cite{hertrich2025relationrectifiedflowsoptimal,liu2022rectified,pierret2026flow}.

\paragraph{Semi-discrete case.}

We focus on the specific case where $(X_0,X_1)$ is an independent coupling with
$X_0 \sim \mu_0:=\mathcal{N}(0,\id)$
and
$X_1 \sim \mu_1:=\frac{1}{n}\sum_{k=1}^n \delta_{a_k}$,
with pairwise distinct support points $a_k\neq a_l$ for $k\neq l$.
In this situation, both the optimal transport problem and the flow matching algorithm admit more tractable formulations.

The optimal transport map (with quadratic cost) between $\mu_0$ and $\mu_1$ is characterized by so-called Laguerre cells: 
there exists a vector of weights $\psi=(\psi_k)_{1\le k\le n}$ such that it holds $T(x)=a_k$ for $x\in \mathcal{L}_k(\psi)$ almost everywhere, where
\begin{equation}\label{eq:cells_laguerre}
\mathcal{L}_k=\mathcal{L}_k(\psi)\coloneqq \left\{x\in\R^d:\ \frac12\|x-a_k\|^2-\psi_k\le \frac12\|x-a_j\|^2-\psi_j,\ \forall j\right\}.
\end{equation}
These cells form a polyhedral partition of the source space. For all weights equal, for instance when $\psi_k=0$ for every $k$, they are also known as Voronoi cells~\cite{aurenhammer1998minkowski,merigot2011multiscale,kitagawa2019convergence,peyre2019computational}.


For flow matching, the semi-discrete setting allows the derivation of a closed formula for the velocity field given by
\begin{equation}
\label{eq:velocity_Gaussian_to_Dirac}
    v_t(x) := \sum_{k=1}^n \alpha_k(t,x) \frac{a_k-x}{1-t},\quad \alpha(t,x) := \softmax\left(-\frac{1}{2(1-t)^2}\|x-ta_i\|^2\right)_{1 \leq i \leq n}.
\end{equation}
The velocity is a convex combination of the elementary drifts $(a_k-x)/(1-t)$ pointing toward the support points.
The coefficients $\alpha_k(t,x)$ can be interpreted as posterior assignment weights: at time $t$ and position $x$, they measure how likely the endpoint $a_k$ is under the conditional interpolation model.  We denote by $\gamma_t\colon\R^d\to\R^d$ the flow map generated by this field, defined by~\eqref{eq:general_fm_ode}, and {\bf we write $\gamma=\gamma_1$ for the terminal map.}
Our work studies the topology and geometry induced by this terminal flow map $\gamma$.

\section{Some Simplification Results}\label{sec:simplifications}

Before presenting our main results, we first explain how several related problems can be reduced to the framework studied in this paper. Specifically, we establish three auxiliary statements. First, we observe that scaling, translating, or rotating the cells under consideration leaves our problem unchanged. Second, we prove that it suffices to restrict our analysis to the affine hull of the points $a_k$, and that the dynamics outside this hull are trivial. Third, we demonstrate that discrete FM arises as a limiting case of Gaussian mixture targets when the variances of the modes tend to zero.

\paragraph{Invariance under Scalings, Shifts and Rotations}

We note that the cells are invariant under scalings and shifts and equivariant under orthogonal transformations. More precisely, we have the following corollary, which directly follows from \cite[Thm 2]{hertrich2025relationrectifiedflowsoptimal}.
\begin{cor}
\label{cor:scaling}
Let $A\in\R^d\times\R^d$ be an orthogonal matrix, $b\in\R^d$ and $c>0$. Let $\gamma$ be the flow map for terminal points $a_1,...,a_n\in\R^d$ and $\tilde \gamma$ be the flow map for terminal points $\tilde a_1,...,\tilde a_n$ defined by $\tilde a_k=cAa_k+b$. Then it holds $\tilde\gamma(x)=cA\gamma(A^{-1}x)+b$ and $\tilde\Gamma_k=\{Ax:x\in\Gamma_k\}$, where $\Gamma_k$ and $\tilde \Gamma_k$ are the assignment regions \eqref{eq:ass_regions} corresponding to $\gamma$ and $\tilde\gamma$.
\end{cor}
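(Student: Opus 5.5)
The plan is to exhibit an explicit interpolation-level conjugacy between the two Flow Matching problems rather than manipulate the softmax formula \eqref{eq:velocity_Gaussian_to_Dirac} directly. This is essentially the content of \cite[Thm 2]{hertrich2025relationrectifiedflowsoptimal}, specialised to the present semi-discrete setting. Let $X_0\sim\mathcal N(0,\id)$ and $X_1\sim\mu_1$ be independent, with $X_t=(1-t)X_0+tX_1$. Since $\mathcal N(0,\id)$ is rotation invariant, $\tilde X_0\coloneqq AX_0\sim\mathcal N(0,\id)$, and $\tilde X_1\coloneqq cAX_1+b$ has law $\frac1n\sum_k\delta_{\tilde a_k}$. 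Independence is preserved, so $\tilde X_t=(1-t)\tilde X_0+t\tilde X_1$ is the interpolation defining $\tilde\gamma$.

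\textbf{Step 1: time reparametrisation.} Set
\[
\lambda(t)=(1-t)+tc, \qquad s(t)=\frac{tc}{\lambda(t)} .
\]
Then $s(t)$ is a smooth increasing bijection $[0,1]\to[0,1]$ with $s(0)=0$, $s(1)=1$, $\lambda(0)=1$ and $\lambda(1)=c$. A one-line computation gives the identity
\[
\tilde X_t=\lambda(t)\,A\,X_{s(t)}+t\,b .
\]
Because $y\mapsto \lambda(t)Ay+tb$ is a bijection, conditioning on $\tilde X_t=\lambda(t)Ay+tb$ is the same as conditioning on $X_{s(t)}=y$.

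\textbf{Step 2: relating the velocity fields.} Differentiate the identity in $t$ and take conditional expectations, using \eqref{eq:general_conditional_velocity} for both problems. This yields
\[
\tilde v_t\bigl(\lambda(t)Ay+tb\bigr)=\lambda'(t)Ay+\lambda(t)s'(t)\,A\,v_{s(t)}(y)+b
\qquad\text{for } t<1 .
\]
Here I use $\dot X_s=X_1-X_0$, so that $\mathbb E[\dot X_s\mid X_s=y]=v_s(y)$. Alternatively, one could verify this identity directly from the closed form \eqref{eq:velocity_Gaussian_to_Dirac} if one wants to avoid conditional-expectation arguments.

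\textbf{Step 3: transporting the flow.} Define the curve
\[
z(t)\coloneqq\lambda(t)A\gamma_{s(t)}(x)+tb .
\]
By the chain rule and Step 2, $z$ solves $\dot z=\tilde v_t(z)$ with $z(0)=Ax$. Uniqueness of the flow on $[0,1)$ (see \Cref{sec:existence_uniqueness}) then gives $\tilde\gamma_t(Ax)=\lambda(t)A\gamma_{s(t)}(x)+tb$ for all $t<1$.

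\textbf{Main obstacle: the endpoint $t=1$.} The velocity blows up at $t=1$, so the terminal maps are defined as the limits $t\to1$. The step needing care is passing to this limit. Since $s(t)\to1$ and $\lambda(t)\to c$, existence of $\gamma=\lim_{s\to1}\gamma_s$ gives
\[
\tilde\gamma(Ax)=cA\gamma(x)+b .
\]
Substituting $x\mapsto A^{-1}x$ gives the claimed formula.

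\textbf{Conclusion for the cells.} Since $u\mapsto cAu+b$ is injective, $\tilde\gamma(x)=\tilde a_k$ if and only if $\gamma(A^{-1}x)=a_k$, that is, if and only if $x\in A\Gamma_k$. Hence $\tilde\Gamma_k=\{Ax:x\in\Gamma_k\}$.
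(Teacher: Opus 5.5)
Your proof is correct. The conjugacy $\tilde X_t=\lambda(t)AX_{s(t)}+tb$ with $s(t)=tc/\lambda(t)$ yields the velocity identity pointwise: indeed $\tilde\alpha(t,\lambda(t)Ay+tb)=\alpha(s(t),y)$ directly from the closed form. Uniqueness on $[0,1)$ and the limit $t\to1$ (which exists for $\tilde\gamma$ at $Ax$ exactly when it exists for $\gamma$ at $x$) then give both claims. The paper gives no argument beyond invoking \cite[Thm 2]{hertrich2025relationrectifiedflowsoptimal}, and your time-reparametrisation argument is the same approach, unpacked in a self-contained way for the semi-discrete Gaussian setting.
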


This corollary is illustrated in Figure~\ref{fig:cor_scaling}. In particular, as in the case of Laguerre cells in optimal transport, a point may fail to belong to its associated cell.

\begin{figure}[h]
    \centering
    \begin{subfigure}{\threeimagesfigure}
    \centering
    \includegraphics[width=\linewidth]{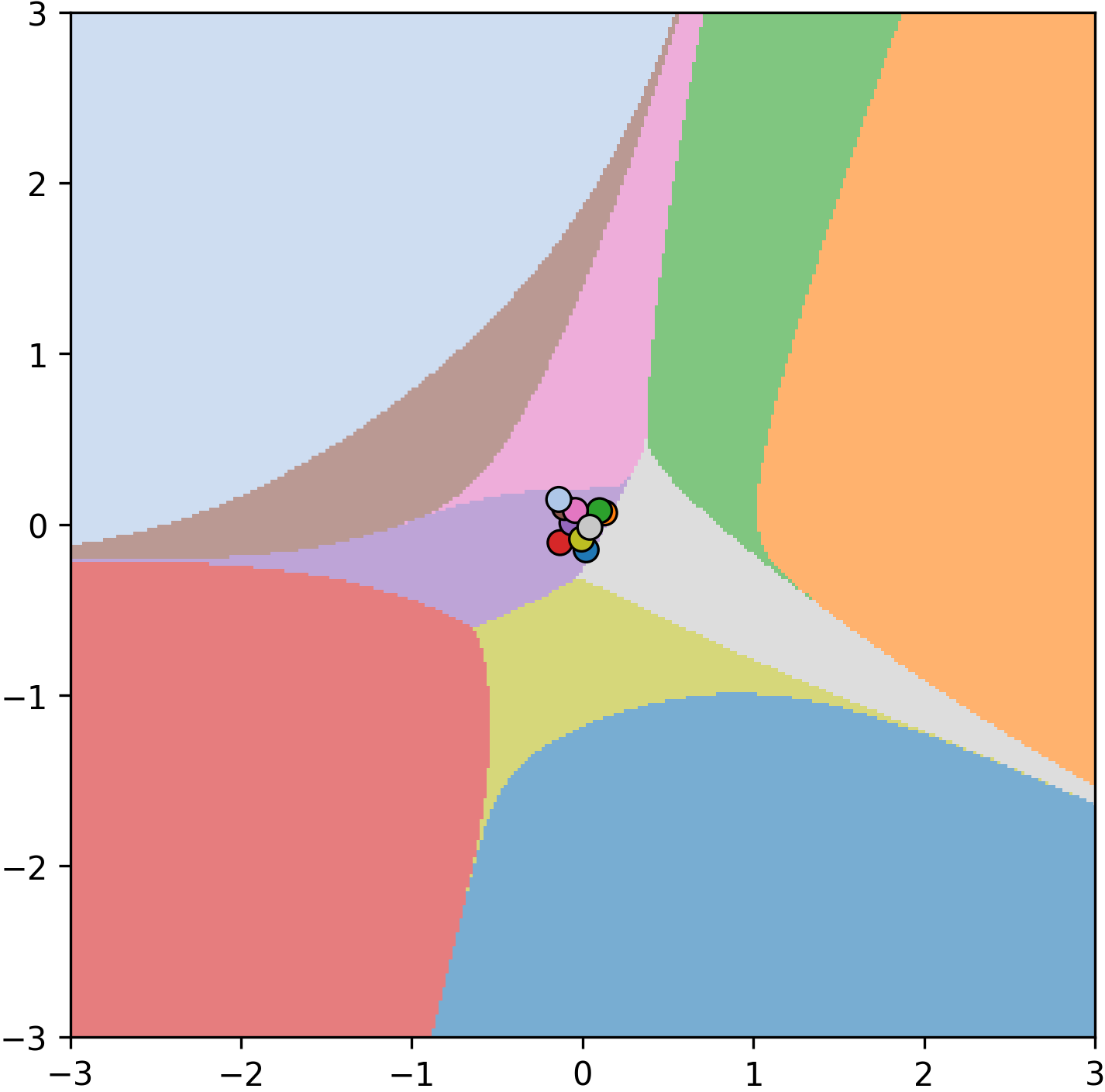}
     \caption{$c = 0.1$}
     \end{subfigure}%
    \begin{subfigure}{\threeimagesfigure}
    \centering
    \includegraphics[width=\linewidth]{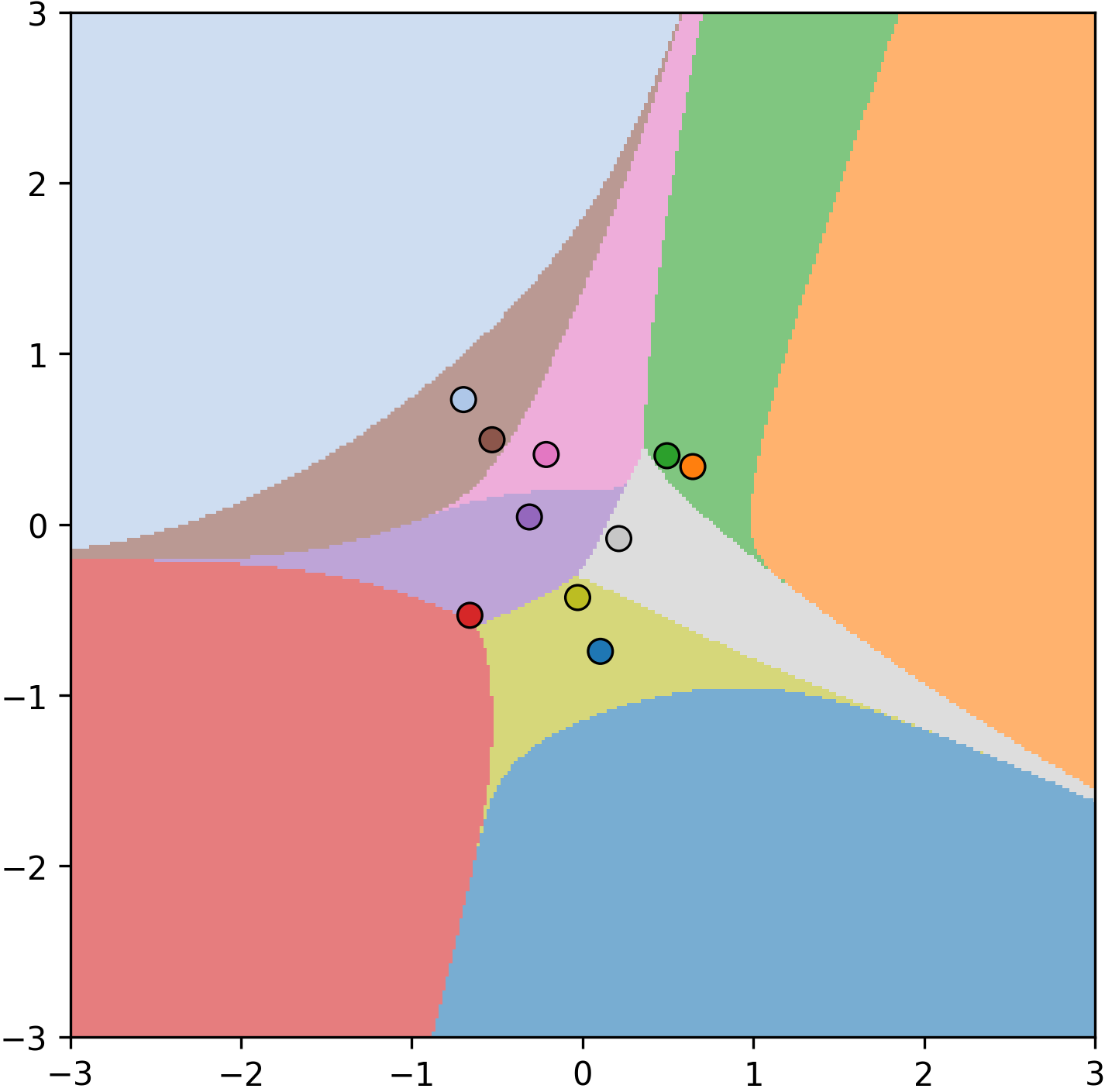}
     \caption{$c = 0.5$}
     \end{subfigure}%
     \begin{subfigure}{\threeimagesfigure}
    \centering
    \includegraphics[width=\linewidth]{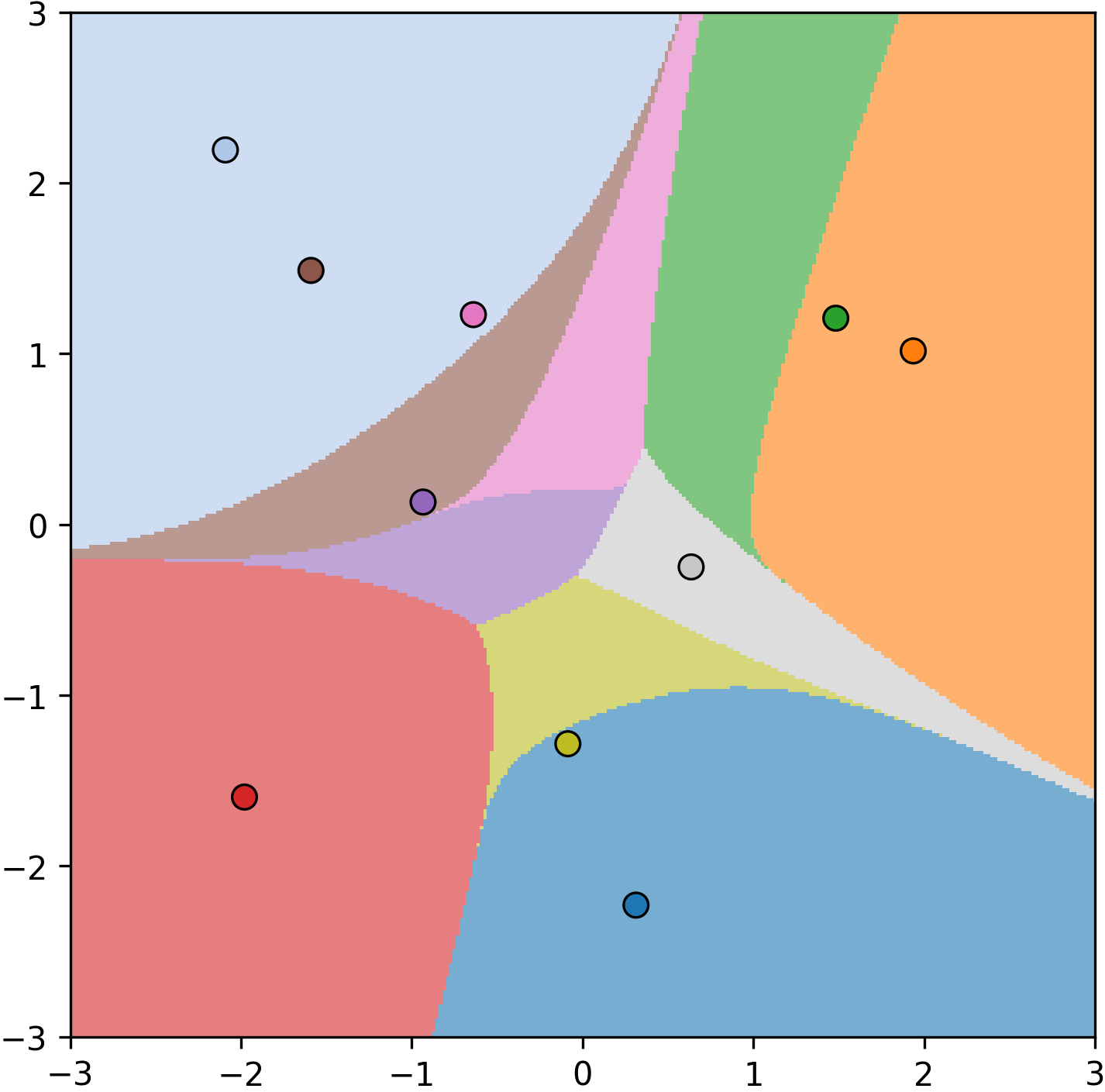}
     \caption{$c = 1.5$}
     \end{subfigure}

     \caption{\label{fig:cor_scaling} Illustration of \Cref{cor:scaling}. For \(n = 10\) points \((a_k)_{1 \leq k \leq n}\), we display the associated semi-discrete flow matching cells of \((c a_k)_{1 \leq k \leq n}\) for \(c = 0.1, 0.5,\) and \(1.5\), which coincide. This also shows that a point may or may not belong to its associated cell. }
\end{figure}


\paragraph{Reduction to the Affine Hull}

Next, we show that the FM dynamics can be completely reduced to the dynamics on the affine hull of the points $a_k$. More precisely, let
\[
E\coloneqq \mathrm{Aff}(a_k)_{1\le k\le n}=a_1+\Span\{a_k-a_1:k=2,...,n\},
\]
and decompose $x=x^\parallel+x^\perp\in\R^d$ into the orthogonal projection $x^\parallel\in E$ and its complement $x^\perp=x-x^\parallel$.
Now the next proposition shows that the dynamics $\gamma_t(x)$ can be decomposed into the dynamics of $x^\parallel$ and a linear shift towards the affine space $E$.
The proof is given in \Cref{proof:span_reduction}.

\begin{prop}
\label{prop:only_depends_on_span}
For every $t<1$ and $x=x^\parallel+x^\perp\in\R^d$, one has
$
\alpha_k(t,x)=\alpha_k(t,x^\parallel)
$ for all $k$ and $\gamma_t(x)=\gamma_t(x^\parallel)+(1-t)x^\perp$.
\end{prop}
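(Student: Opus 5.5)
The plan is to verify the first identity algebraically and then prove the flow decomposition by exhibiting a candidate trajectory and invoking uniqueness of solutions of the ODE~\eqref{eq:general_fm_ode} on $[0,1)$, which holds by \Cref{sec:existence_uniqueness}.

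\emph{Step 1: the weights only see $x^\parallel$.} Let $V=\Span\{a_k-a_1\}$ be the direction space of $E$, so that $x^\perp\in V^\perp$. Since $x^\parallel\in E$ and $a_i\in E$, the point $x^\parallel - t a_i = (x^\parallel - ta_1) - t(a_i-a_1)$ is not in general in $V$ (because $x^\parallel - t a_1 = (1-t)a_1 + v$ with $v\in V$). So I will expand directly:
\[
\|x-ta_i\|^2=\|x-ta_1\|^2-2t\langle x-ta_1,a_i-a_1\rangle+t^2\|a_i-a_1\|^2 .
\]
The first term does not depend on $i$, so it cancels in the softmax. In the second term, $a_i-a_1\in V$ and $x^\perp\perp V$, hence $\langle x,a_i-a_1\rangle=\langle x^\parallel,a_i-a_1\rangle$. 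Thus the $i$-dependent part of the exponent is the same for $x$ and $x^\parallel$, and $\alpha_k(t,x)=\alpha_k(t,x^\parallel)$ for all $k$ and all $t<1$.

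\emph{Step 2: decomposition of the velocity.} Using $\sum_k\alpha_k=1$,
\[
v_t(x)=\frac{1}{1-t}\Big(\sum_k\alpha_k(t,x)a_k - x\Big),
\]
and by Step 1 this equals $v_t(y)-\frac{1}{1-t}z$ whenever $x=y+z$ with $y\in E$ and $z\in V^\perp$. (Step 1 applies to any such splitting, since it only uses $z\perp V$.)

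\emph{Step 3: the candidate trajectory.} Set $y_t=\gamma_t(x^\parallel)$ and $\eta_t=y_t+(1-t)x^\perp$. Then
\[
\dot\eta_t=v_t(y_t)-x^\perp = v_t\big(y_t+(1-t)x^\perp\big)=v_t(\eta_t),
\]
where the middle equality is Step 2 with $z=(1-t)x^\perp$. For this we need $y_t\in E$, so that the splitting is legitimate. Invariance of $E$ is obtained by applying the same calculation within $E$: $v_t(y)\in V$ for $y\in E$, since it is a combination $\sum_k\alpha_k(a_k-y)/(1-t)$ of vectors of $V$. Equivalently, one can observe that the projection of the trajectory onto $V^\perp$ satisfies a linear ODE $\dot w=-w/(1-t)$ with $w_0=0$. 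Since $\eta_0=x$, uniqueness gives $\gamma_t(x)=\eta_t$ for all $t<1$.

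The only delicate point is this invariance and the uniqueness argument on $[0,1)$, where the field is singular at $t=1$. I would handle it by working on $[0,T]$ for every $T<1$, where $v$ is locally Lipschitz with linear growth, and then pass to $t\to1$ if the terminal statement is desired.
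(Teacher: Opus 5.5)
Your proof is correct and rests on the same two facts as the paper's. The softmax weights only see the component of the point along $V=\Span\{a_k-a_1\}$. The $V^\perp$-component of a trajectory decouples and obeys $\dot w=-w/(1-t)$.

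The execution differs in two places.

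First, the paper reduces to $a_1=0$ via \Cref{cor:scaling}, so that $E$ is a linear subspace and Pythagoras gives $\|x-ta_k\|^2=\|x^\parallel-ta_k\|^2+\|x^\perp\|^2$ exactly. You instead expand around $a_1$ and track only the $i$-dependent part of the exponent, which handles the affine case directly. Your remark that $x^\parallel-ta_i\notin V$ in general is exactly why the paper needs that reduction. Your route is also slightly more self-contained. \Cref{cor:scaling} is a statement about the terminal map $\gamma=\gamma_1$, while the proposition concerns every $t<1$. The paper's reduction therefore really needs the intermediate-time identity $\tilde\gamma_t(x)=\gamma_t(x)+tb$ for atoms shifted by $b$; this is true and easy, but it is not what is cited.

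Second, the paper projects the true trajectory and reads off decoupled ODEs for $x_t^\parallel$ and $x_t^\perp$, using uniqueness tacitly to identify $x_t^\parallel$ with $\gamma_t(x^\parallel)$. You instead verify that the candidate $\eta_t$ solves the ODE and invoke uniqueness on each $[0,T]$ explicitly. The two are equivalent.

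One simplification: as you observe yourself, Step 1 only uses $z\perp V$. Hence $v_t(y+z)=v_t(y)-z/(1-t)$ holds for every $y\in\R^d$ and $z\in V^\perp$, and the invariance of $E$ under the flow is not needed in Step 3.
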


This reduction shows in particular that genuinely multidimensional phenomena can only arise when the support points span a space of dimension at least two. In the collinear case, the problem reduces to one dimension, which explains why the comparison with semi-discrete OT is especially natural in that regime~\cite{hertrich2025relationrectifiedflowsoptimal,pierret2026flow}.



\paragraph{Semi-discrete Flow Matching as Limit of Gaussian Mixtures}

Next, we show that semi-discrete FM can be understood as the limit case of FM with Gaussian mixture target for vanishing variance.
More precisely, let $\mu_1^\varepsilon=\frac1n\sum_i \mathcal{N}(a_i, \varepsilon^2I)$ and denote the corresponding velocity field from the FM dynamics by 
{\small
\begin{equation}
\label{eq:velocity_GMM}
    v_t^\varepsilon(x) := \sum_{k=1}^n \alpha_k^\varepsilon(t,x) \frac{(1-t)(a_k-x) + t \varepsilon^2 x}{(1-t)^2 + t^2 \varepsilon^2}, \quad 
    \alpha^\varepsilon(t,x) := \softmax\left(-\frac{\|x-ta_i\|^2}{2\big((1-t)^2 + t^2 \varepsilon^2 \big)}\right)_{1 \leq i \leq n}.
\end{equation}}%
We denote by $\gamma^\varepsilon_t \colon\R^d\to\R^d$ the flow map defined by $\gamma^\varepsilon_0(x)=x$ and $\dot\gamma^\varepsilon_t(x)=v^\varepsilon_t(\gamma^\varepsilon_t(x))$ and use the notation $\gamma^\varepsilon=\gamma^\varepsilon_1$.
For $\varepsilon=0$, we obtain the velocity $v_t=v_t^0$ and flow map $\gamma_t=\gamma_t^0$ of our semi-discrete case.
We then have the following pointwise convergence result for \(\gamma^\varepsilon\) toward \(\gamma\) as \(\varepsilon\to0\), valid inside the limiting cells.
The proof is given in \Cref{app:GMM_0}.
\begin{prop} \label{lem:GMM_0}
Let $x \in \cup_{k} \Gamma_k$. For all $\delta > 0$, there is $\varepsilon_0 > 0$ and $T \in (0,1)$ such that for all $\varepsilon < \varepsilon_0$ and $t \in [T, 1]$ it holds
$
    \norm{\gamma^\varepsilon_t(x) - \gamma_1(x)} \leq \delta
$.
In particular, for $t=1$, $\gamma^\varepsilon(x)\to\gamma(x)$ as $\varepsilon\to 0$.
\end{prop}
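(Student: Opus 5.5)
We first use the structure of the exact semi-discrete flow near the terminal time. Fix $x\in\Gamma_k$, so $\gamma_1(x)=a_k$. Since $\gamma_t(x)\to a_k$, for $t$ close to $1$ the point $y_t:=\gamma_t(x)$ satisfies $\|y_t-ta_k\|<\|y_t-ta_j\|-c$ for all $j\neq k$ and some $c>0$. The exponents in $\alpha$ are divided by $(1-t)^2$, so $\alpha_k(t,y_t)\to1$ exponentially fast. Our plan is to compare $\gamma^\varepsilon_t(x)$ with $\gamma_t(x)$ on two time intervals.

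\textbf{Step 1: compact intervals $[0,T]$.} On $[0,T]\times\R^d$ with $T<1$, the fields $v^\varepsilon_t$ are smooth and Lipschitz in $x$, uniformly in $\varepsilon\in[0,1]$; the bounds follow from the softmax Jacobian and the bounded denominators. Moreover $v^\varepsilon\to v^0$ locally uniformly as $\varepsilon\to0$, since the formulas \eqref{eq:velocity_GMM} depend continuously on $\varepsilon$ when $(1-t)^2\geq(1-T)^2>0$. Grönwall's lemma then gives $\gamma^\varepsilon_T(x)\to\gamma_T(x)$ as $\varepsilon\to0$ for each fixed $T<1$.

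\textbf{Step 2: trapping on $[T,1]$.} This is the main obstacle. A Lipschitz constant of order $1/(1-t)$ would blow up under Grönwall, so we instead argue by an invariant-region argument. We choose $T$ such that $\|\gamma_T(x)-a_k\|\le\delta/4$ and the separation above holds with margin at $t=T$. We then consider the set
\[
K_t=\{y:\|y-ta_k\|\le\|y-ta_j\|-c/2\ \forall j\neq k\}.
\]
On $K_t$, uniformly in $\varepsilon\le\varepsilon_0$, we have $\alpha^\varepsilon_j(t,y)\le\exp(-c'/((1-t)^2+\varepsilon^2))$ for $j\neq k$. The velocity is then $v^\varepsilon_t(y)=\frac{(1-t)(a_k-y)+t\varepsilon^2y}{(1-t)^2+t^2\varepsilon^2}+r$, where $r$ is exponentially small. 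The dominant term is the exact GMM velocity toward the single Gaussian $\mathcal N(a_k,\varepsilon^2I)$. Its flow is explicit: $y_t-ta_k$ evolves by the scalar factor $\sigma_t/\sigma_T$ with $\sigma_t=\sqrt{(1-t)^2+t^2\varepsilon^2}$, i.e., $y_t=ta_k+\frac{\sigma_t}{\sigma_T}(y_T-Ta_k)$. Consequently $\|y_t-ta_k\|$ is essentially nonincreasing for small $\varepsilon$, while $\|y_t-ta_j\|$ stays bounded below. Hence trajectories starting in the interior of $K_T$ remain in $K_t$, and the error term integrates to something negligible: $\int_T^1 \|r\|\,dt\lesssim\int e^{-c'/\sigma_t^2}\sigma_t^{-1}dt$, which is small. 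We then conclude that $\|\gamma^\varepsilon_t(x)-ta_k\|\le\frac{\sigma_t}{\sigma_T}\|\gamma^\varepsilon_T(x)-Ta_k\|+o(1)$.

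\textbf{Step 3: combine.} By Step 1, $\gamma^\varepsilon_T(x)$ lies within $\delta/4$ of $\gamma_T(x)$ for $\varepsilon<\varepsilon_0$, so it is in the interior of $K_T$ and within $\delta/2$ of $a_k$. We take $T$ close to $1$ so that $\|ta_k-a_k\|\le\delta/4$, and note that $\sigma_t/\sigma_T\le C$ uniformly once $\varepsilon\le 1-T$. Shrinking the margins if needed, Step 2 yields $\|\gamma^\varepsilon_t(x)-a_k\|\le\delta$ for all $t\in[T,1]$. The delicate point is making the constants in Step 2 uniform in $\varepsilon$ near $t=1$, where $\sigma_t\approx\varepsilon$.
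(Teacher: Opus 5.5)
Your proposal is correct and follows essentially the same route as the paper: a Gronwall comparison on $[0,T]$ (the paper's \Cref{lem:convergence_eps}), then an exit-time bootstrap near $a_k$ that uses exponentially small off-target softmax weights and rescales the displacement by $\sigma_t=\sqrt{D_\varepsilon(t)}$. The only difference is cosmetic: you center at $ta_k$ and compare with the explicit single-Gaussian flow, which absorbs the drift term $t\varepsilon^2 a_k/D_\varepsilon(t)^{3/2}$ that the paper must integrate separately (it centers at $a_k$ via $q^\varepsilon_t=r^\varepsilon_t/\sqrt{D_\varepsilon(t)}$); in exchange you pay the offset $(1-t)\|a_k\|$ and a Duhamel factor $\sigma_t/\sigma_s\le\sqrt{1+\varepsilon^2}$ in the error integral, which you should state explicitly.
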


\newlength{\fourimagesfigure}
\setlength{\fourimagesfigure}{0.22\linewidth}
We illustrate on Figure~\ref{fig:eps_conv} the convergence from the proposition by considering the generalized assignment regions 
\begin{equation}\label{eq:gen_assignment}
\Gamma^\varepsilon_k=\{x\in\R^d:\|\gamma^\varepsilon(x)-a_k\|<\|\gamma^\varepsilon(x)-a_l\|\text{ for }l\neq k\}.
\end{equation}
For $\varepsilon=0$, these regions coincide with the assignment regions $\Gamma_k$ from \eqref{eq:ass_regions} and the above Proposition implies, for every $x\in \bigcup_{j=1}^n \Gamma_j$,
$
    \mathbf{1}_{\Gamma_k^\varepsilon}(x)
    \to
    \mathbf{1}_{\Gamma_k}(x)
$ (see Corollary~\ref{cor:cell_convergence} in ~\cref{app:GMM_0}).
For $\varepsilon>0$, we find that $\gamma^\varepsilon$ is a bi-Lipschitz diffeomorphism such that the $\Gamma_k^\varepsilon$ and the corresponding neighborhood relations are topologically equivalent to the Voronoi cells.

\begin{figure}[h]
    \centering
    \begin{subfigure}{\fourimagesfigure}
    \centering
    \includegraphics[width=0.9\linewidth]{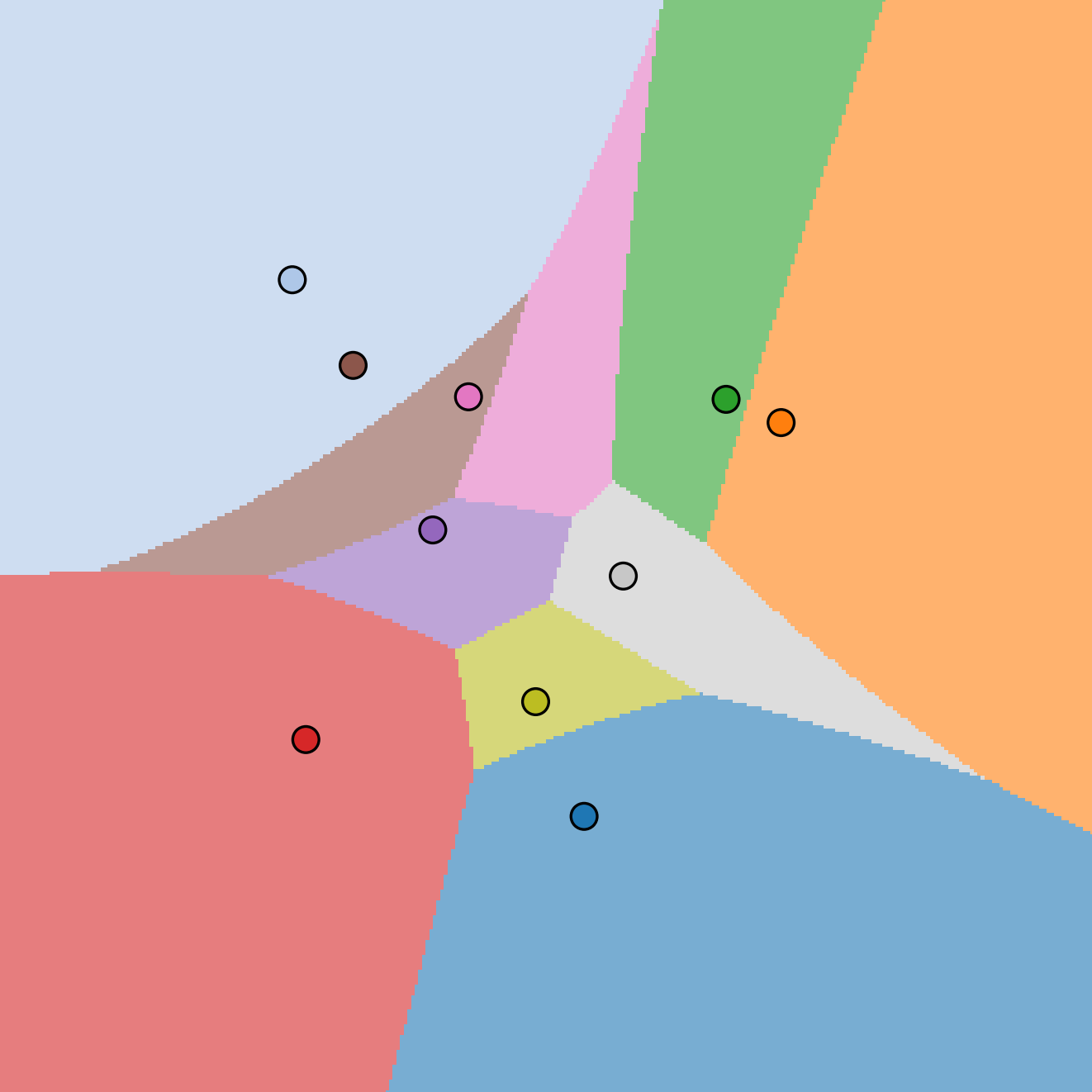}
     \caption{$\varepsilon = 0.75$}
     \end{subfigure}
    \begin{subfigure}{\fourimagesfigure}
    \centering
    \includegraphics[width=0.9\linewidth]{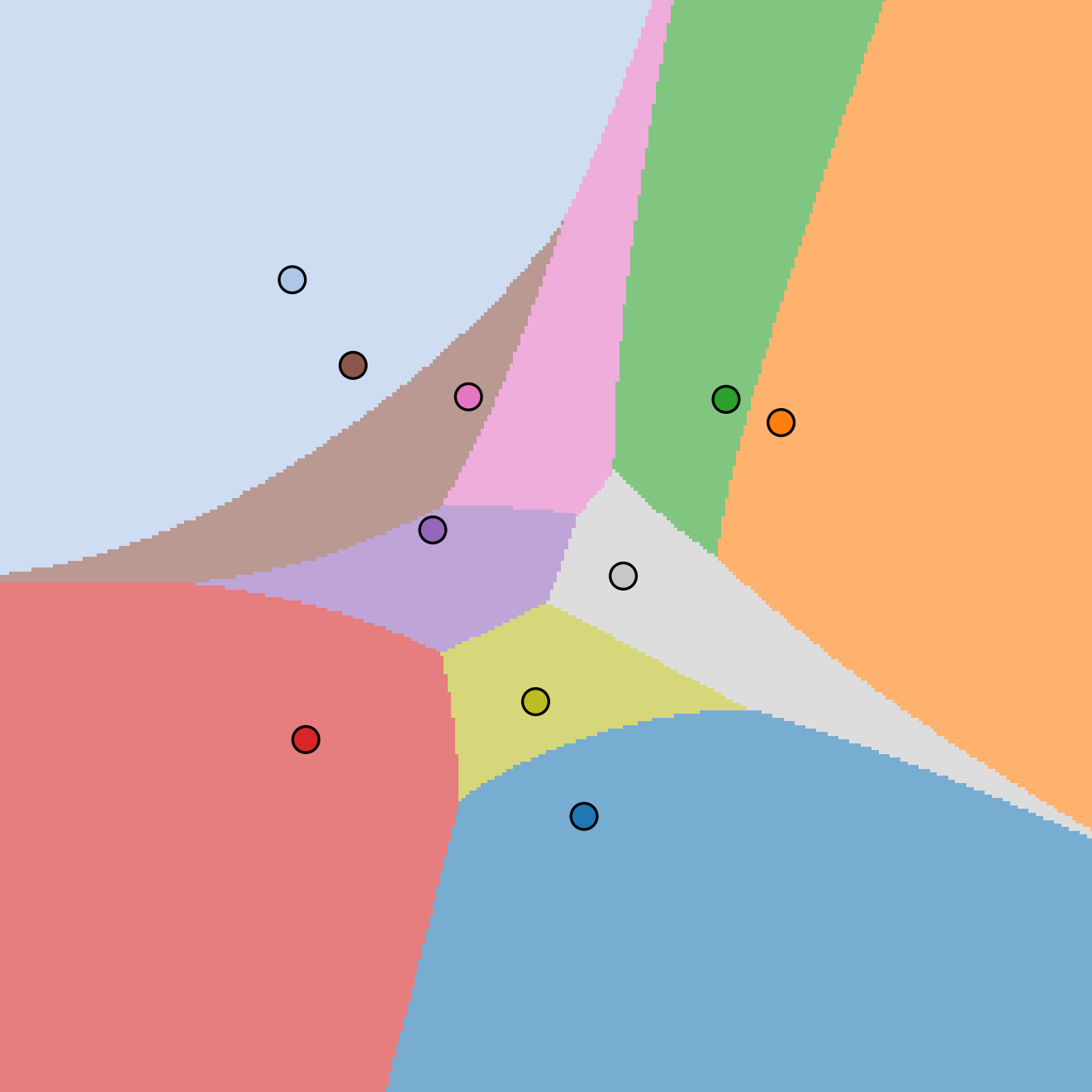}
     \caption{$\varepsilon = 0.5$}
     \end{subfigure}
     \begin{subfigure}{\fourimagesfigure}
    \centering
    \includegraphics[width=0.9\linewidth]{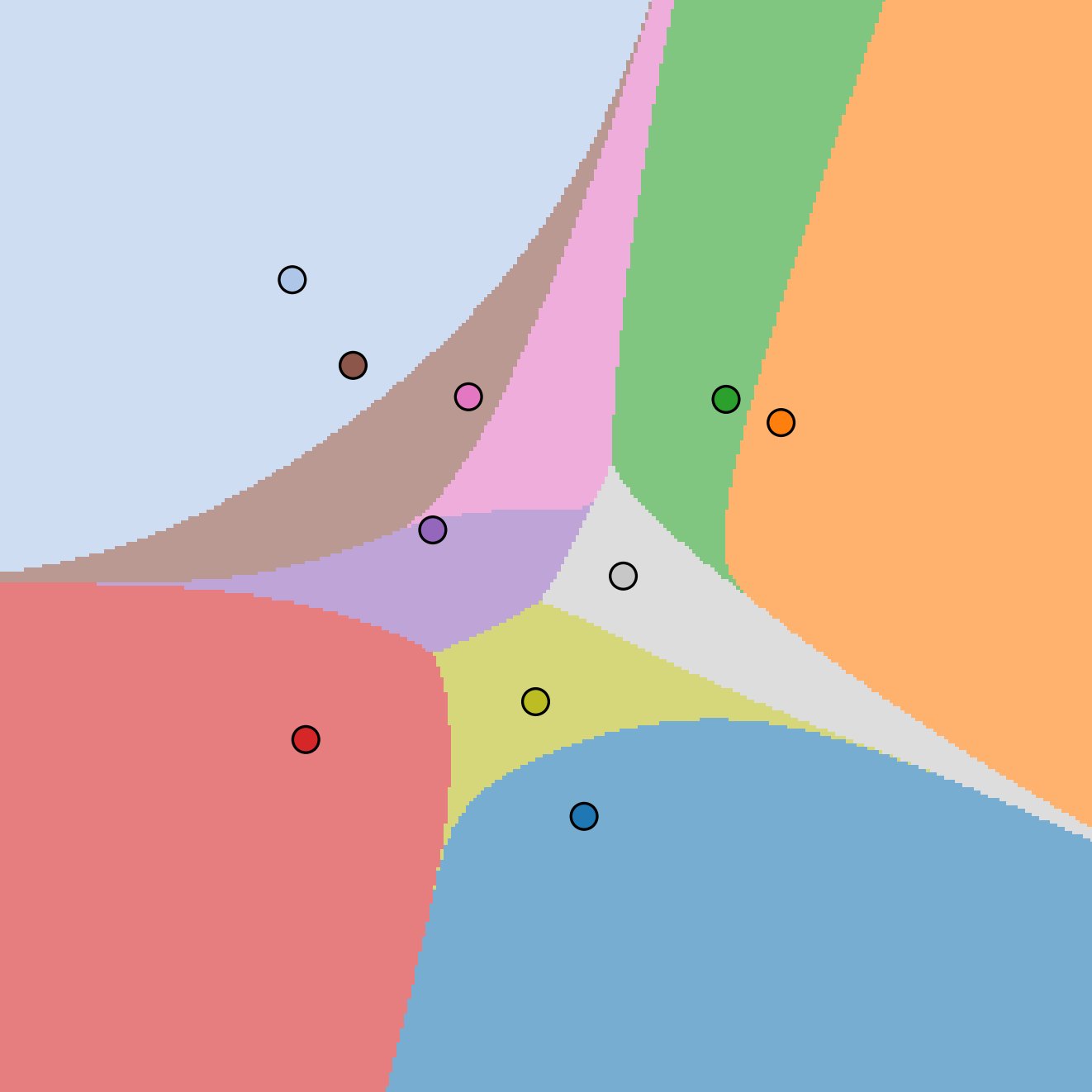}
     \caption{$\varepsilon = 0.2$}
     \end{subfigure}
     \begin{subfigure}{\fourimagesfigure}
    \centering
    \includegraphics[width=0.9\linewidth]{figures/intro/seed_14/trajectory_endpoints_comparison_10_points_true.png}
     \caption{$\varepsilon = 0$}
     \end{subfigure}
     \caption{We plot the generalized assignment regions \eqref{eq:gen_assignment} for different values of $\epsilon$. According to \Cref{lem:GMM_0} these regions approximate the $\Gamma_k$ from \eqref{eq:ass_regions} as $\epsilon\to 0$.}
     \label{fig:eps_conv}
\end{figure}

\section{Topology of the Flow Matching Cells}

We now turn our attention to the topology of the assignment regions $\Gamma_k$. Informally, these regions specify which point in the source distribution is associated with which point in the target distribution. In this sense, the regions $\Gamma_k$ play a role analogous to Laguerre cells, except that the optimal transport coupling is replaced by the coupling induced by flow matching. For this reason, we will also refer to the $\Gamma_k$ as \emph{flow matching cells}.

Although we have already observed numerically in the introduction (Figure~\ref{fig:introfig}) that the cells $\Gamma_k$ do not share the neighborhood relations and boundedness properties of the Laguerre cells, we demonstrate in this section that, from a topological perspective, they are nevertheless well-behaved: each $\Gamma_k$ is an open, connected set without holes. 
Under an additional assumption, we will further prove that these cells are in fact topologically equivalent (homeomorphic) to the unit ball.
The key ingredient for establishing these results is a late-time capture and stability result which we describe in the next subsection.




\subsection{Late-Time Capture and Stability}

Given that the flow map $\gamma_t$ is a bi-Lipschitz diffeomorphism for $0\leq t<1$ but not even invertible for time $t=1$, it will be important to understand in which sense $\gamma=\gamma_1$ is approximated by the maps $\gamma_t$ for $0\leq t < 1$. While it is directly clear from the existence of the flow map (see also \Cref{sec:existence_uniqueness}) that $\gamma_t$ converges to $\gamma$ almost everywhere, the next proposition shows a stronger convergence result.
More precisely, we show that any trajectory $t\mapsto\gamma_t(x)$ which enters a ball around an atom $a_k$ late enough convergences to the atom.

\begin{prop}
\label{prop:into_the_ball_implies_convergence_to_ak}
	Define $r<\frac12 \min_{k\neq l}\|a_k-a_l\|$. Then, there exists $\underline{t}_r\in(0,1)$ such that for $x \in \R^d$, if there exists $\underline{t}_r < t < 1$ such that $\gamma_{t}(x) \in  \bar B_r(a_k)$ then $\gamma_1(x) = a_k$.
\end{prop}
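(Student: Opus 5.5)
The plan is to show that, for $t$ close to $1$, the ball $\bar B_r(a_k)$ is forward invariant for the flow, and that inside it the dynamics contract toward $a_k$ at a rate fast enough to force $\gamma_1(x)=a_k$. Write $y_t=\gamma_t(x)$ and $e_t=y_t-a_k$. By \eqref{eq:velocity_Gaussian_to_Dirac},
\[
\dot e_t=-\frac{e_t}{1-t}+\sum_{j\neq k}\alpha_j(t,y_t)\frac{a_j-a_k}{1-t}.
\]
The first term is the pure contraction $e_t\propto(1-t)$. The second term is a perturbation, weighted by the posterior masses of the other atoms.

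The first step is to bound those masses on $\bar B_r(a_k)$. For $\|y-a_k\|\le r$ and $j\ne k$, I will estimate
\[
\|y-ta_j\|^2-\|y-ta_k\|^2 = 2t\langle y,a_k-a_j\rangle + t^2(\|a_j\|^2-\|a_k\|^2).
\]
Writing $y=a_k+e$ gives $t\big(\|a_k-a_j\|^2(2-t)+2\langle e,a_k-a_j\rangle\big)+\text{rearrangement}$. More simply, for $t$ near $1$ the difference is close to $\|y-a_j\|^2-\|y-a_k\|^2\ge (D-r)^2-r^2=D(D-2r)>0$, where $D=\min_{k\neq l}\|a_k-a_l\|$. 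This is exactly why the hypothesis $r<D/2$ is needed. So there are $c>0$ and $t_0<1$ such that the difference is at least $c$ for all $y\in\bar B_r(a_k)$ and $t\ge t_0$. Hence
\[
\alpha_j(t,y)\le \exp\!\Big(-\frac{c}{2(1-t)^2}\Big)\quad\text{for all } j\ne k.
\]
Consequently the perturbation has norm at most $(n-1)\,\mathrm{diam}\,\exp(-c/(2(1-t)^2))/(1-t)=:\eta(t)$. This is integrable on $[t_0,1)$, and in fact $\eta(t)\le \varepsilon(1-t)$ for $t$ near $1$, since super-exponential decay beats any power.

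The second step is invariance. On the boundary $\|e\|=r$,
\[
\frac{d}{dt}\tfrac12\|e_t\|^2\le -\frac{r^2}{1-t}+r\eta(t),
\]
which is negative once $\eta(t)<r/(1-t)$. This holds for all $t\ge \underline t_r$, with $\underline t_r\in[t_0,1)$ chosen appropriately. Therefore a trajectory that lies in $\bar B_r(a_k)$ at some time $t>\underline t_r$ stays there up to time $1$. Forward invariance follows by a standard argument: consider the first exit time and use the strict inequality.

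The final step is convergence. Inside the ball,
\[
\frac{d}{dt}\|e_t\|\le -\frac{\|e_t\|}{1-t}+\eta(t).
\]
By variation of constants,
\[
\|e_s\|\le (1-s)\Big(\frac{\|e_t\|}{1-t}+\int_t^s\frac{\eta(u)}{1-u}\,du\Big).
\]
The integral is bounded, so $\|e_s\|\to0$ as $s\to1$, i.e.\ $\gamma_1(x)=a_k$. Here I am using that the flow is well defined up to $t=1$ (\Cref{sec:existence_uniqueness}), so $\gamma_1(x)=\lim_{s\to1}\gamma_s(x)$.

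The main obstacle is the uniform lower bound on the logit gap over the ball for all $t$ near $1$. This needs care because $t a_j$ versus $a_j$ introduces an $O(1-t)$ correction, and that correction must be absorbed using the slack $D(D-2r)>0$. I plan to handle it by continuity in $(t,y)$ on the compact set $[t_0,1]\times\bar B_r(a_k)$.
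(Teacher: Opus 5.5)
Your proof is correct. Its first two steps match the paper's. The exponential bound on the off-target weights over $\bar B_r(a_k)$ is \Cref{lem:alpha_abschaetzung}. The ``main obstacle'' you flag is mild: on the ball, the logit gap at time $t$ differs from the gap at $t=1$ by $O(1-t)$ uniformly, so your continuity argument goes through. The inward-pointing estimate on $\partial B_r(a_k)$ with a first-exit argument is \Cref{lem:velocity_points_inwards} together with \Cref{cor:stable_balls}.

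You genuinely differ in the convergence step. The paper argues by contradiction. If $\gamma_t(x)\not\to a_k$, it picks $\varepsilon_0$ and applies ball invariance a second time at radius $\varepsilon_0$, with its own threshold $\underline{t}_{\varepsilon_0}$. This traps the trajectory in the annulus $\varepsilon_0<\|\gamma_t(x)-a_k\|\le r$ for late times. There $\frac{d}{dt}\frac12\|\gamma_t(x)-a_k\|^2\le -\frac{\varepsilon_0^2}{1-t}+(\text{integrable})$, so $\frac12\|\gamma_t(x)-a_k\|^2\le C+\varepsilon_0^2\log(1-t)\to-\infty$.

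You instead use the fact that the equation for $e_s$ is a perturbed linear equation which integrates exactly:
$\frac{d}{ds}\frac{e_s}{1-s}=\sum_{j\neq k}\alpha_j(s,y_s)\frac{a_j-a_k}{(1-s)^2}$.
Its right-hand side has norm at most $\eta(s)/(1-s)$, which is integrable near $s=1$. This route is shorter and needs no second invariance argument. It also gives the quantitative conclusion $\|\gamma_s(x)-a_k\|\le C(1-s)$, and in fact $e_s/(1-s)$ has a limit; the contradiction argument provides neither.

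Two small points. Integrating the vector identity above directly, as your variation-of-constants formula effectively does, avoids the non-differentiability of $\|e_t\|$ at $e_t=0$ in your scalar differential inequality. And $\underline{t}_r$ is made uniform in $k$ by taking a maximum over the finitely many atoms.
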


The proof is deferred to \Cref{proof:into_the_ball_implies_convergence_to_ak} and relies on \Cref{lem:velocity_points_inwards}, which states that close enough to time~$1$, the velocity points inward on a small sphere around each support point.
We also note that a similar result in the context of the probability flow of diffusion models was proven by \cite{baptista2025memorization}.


\subsection{Cell connectedness}


Next, we prove that the flow matching cells $\Gamma_k$ are open and do not admit any holes.
We formalize this property by showing in the next theorem that the cells $\Gamma_k$ are simply connected. Recall that $A\subseteq\R^d$ (viewed as a topological space inheriting the topology of $(\R^d,\|\cdot\|)$) is simply connected if for any loop $f\colon S^1 \to A$ we can find a continuous mapping (homotopy) $H\colon[0,1]\times S^1\to A$ such that $H(0,x)=f(x)$ and $H(1,\cdot)$ is constant.

\begin{thm}
\label{cor:Gamma_k_connectedness}
    For $1 \leq k \leq n$, the set $\Gamma_k$ is open and simply connected.
\end{thm}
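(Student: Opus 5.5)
The plan is to write $\Gamma_k$ as an increasing union of diffeomorphic images of a fixed ball, using \Cref{prop:into_the_ball_implies_convergence_to_ak} together with the fact that $\gamma_t$ is a bi-Lipschitz diffeomorphism for $t<1$.

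Fix $r<\frac12\min_{k\neq l}\|a_k-a_l\|$ and the associated $\underline t_r$. For each $t\in(\underline t_r,1)$ set $U_t\coloneqq\gamma_t^{-1}(B_r(a_k))$, the preimage of the open ball.

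\textbf{Step 1: $\Gamma_k=\bigcup_{t\in(\underline t_r,1)}U_t$.} If $x\in U_t$ with $t>\underline t_r$, then $\gamma_t(x)\in \bar B_r(a_k)$, so \Cref{prop:into_the_ball_implies_convergence_to_ak} gives $\gamma_1(x)=a_k$. Conversely, suppose $\gamma_1(x)=a_k$. Since $t\mapsto\gamma_t(x)$ is continuous on $[0,1]$ (this comes from the existence result of \Cref{sec:existence_uniqueness}), we have $\gamma_t(x)\in B_r(a_k)$ for all $t$ close enough to $1$.

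\textbf{Step 2: openness.} Each $U_t$ is open because $\gamma_t$ is continuous. Hence $\Gamma_k$ is a union of open sets and is open.

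\textbf{Step 3: monotonicity.} We show the family $U_t$ is increasing in $t$. Let $x\in U_s$ with $\underline t_r<s<t<1$. The proof of \Cref{prop:into_the_ball_implies_convergence_to_ak} relies on \Cref{lem:velocity_points_inwards}, which says the velocity points strictly inward on the sphere $\partial B_r(a_k)$ for times past $\underline t_r$. So a trajectory inside $\bar B_r(a_k)$ cannot exit, and in fact the open ball is forward invariant. The standard argument: at a first exit time the trajectory lies on the sphere, where the radial derivative $\frac{d}{dt}\|\gamma_t(x)-a_k\|^2<0$, which is a contradiction. Therefore $\gamma_t(x)\in B_r(a_k)$, and $U_s\subseteq U_t$.

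I expect this step to be the main obstacle. Once it is in place, the topology follows formally. What needs care is extracting the strict inward-pointing property (rather than just convergence) from the lemma, possibly after shrinking $r$ or enlarging $\underline t_r$. If only the closed ball is invariant, I would instead work with $\bar B_r$ and the interiors of its preimages.

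\textbf{Step 4: simple connectivity.} Each $U_t$ is homeomorphic to the ball $B_r(a_k)$ via the homeomorphism $\gamma_t$. It is therefore contractible, and in particular connected and simply connected.

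Now take a loop $f\colon S^1\to\Gamma_k$. Its image is compact and covered by the increasing open family $(U_t)$, so it lies in a single $U_{t_0}$. Since $U_{t_0}$ is simply connected, $f$ is null-homotopic inside $U_{t_0}\subseteq\Gamma_k$.

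Connectedness follows the same way. Any two points lie in a common $U_t$, which is path-connected. Equivalently, an increasing union of connected sets is connected.
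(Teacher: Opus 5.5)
Your proof is correct and follows the paper's route. You write $\Gamma_k$ as the union of the open sets $\gamma_t^{-1}(B_r(a_k))$, $t>\underline t_r$, via \Cref{prop:into_the_ball_implies_convergence_to_ak}, then use compactness and monotonicity of this family to place any loop in a single preimage homeomorphic to a ball, where it contracts. Your first-exit argument based on the strict inequality in \Cref{lem:velocity_points_inwards} proves the monotonicity that the paper only asserts, while the paper's final step uses the closed-ball preimage $\gamma_{t^*}^{-1}(\bar B_r(a_k))$ and so needs only \Cref{cor:stable_balls}, as in your fallback.
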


The detailed proof is provided in \Cref{sec:proofs_topological}. We outline the key arguments as follows: The openness of $\Gamma_k$ is an immediate consequence of the fact that it can be written as a union of open sets of the form $\gamma_t^{-1}(B_r(a_k))$, where $r$ comes from \Cref{prop:into_the_ball_implies_convergence_to_ak} and $t$ is sufficiently large. Using this open cover of $\Gamma_k$, we further show that for every compact subset $A\subseteq \Gamma_k$ there exists a time $t_0$ such that $\gamma_t(A)\subseteq B_r(a_k)$ for all $t\geq t_0$. Applying this to $f(S^1)$ yields the desired homotopy $H$: we first map by $\gamma_{t_0}$, then contract inside the ball $B_r(a_k)$, and finally apply the inverse of $\gamma_{t_0}$. By construction $H$ is continuous, and by \Cref{prop:into_the_ball_implies_convergence_to_ak} it additionally satisfies $H(t,x)\in\Gamma_k$. Altogether, this shows that $\Gamma_k$ is simply connected.

\subsection{\textit{Center} of the cells and contractibility}

\begin{wrapfigure}{r}{0.4\textwidth}
\vspace{-1cm}
    \centering
    \includegraphics[width=\threeimagesfigure]{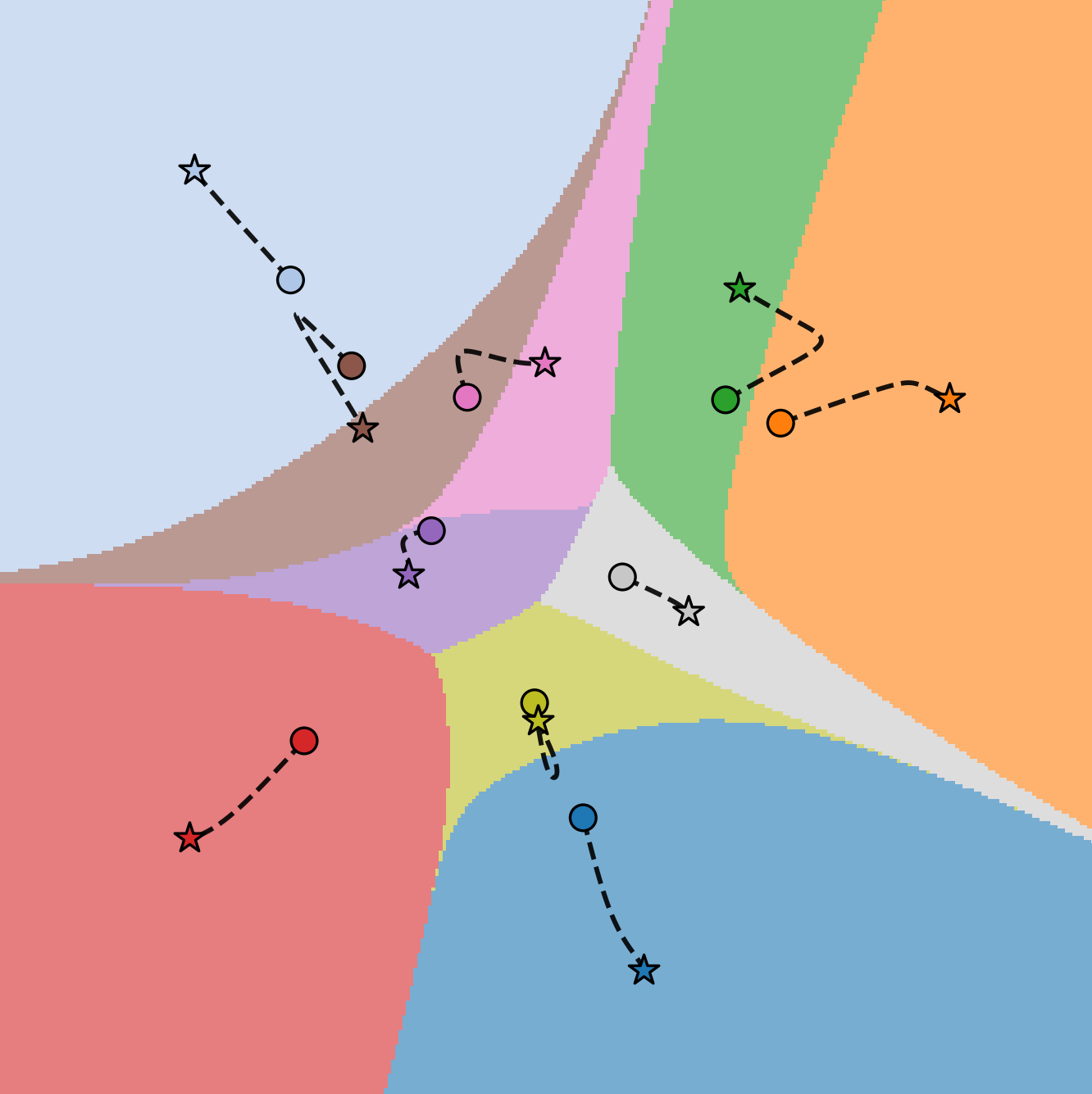}
    \caption{\label{fig:illus_centers} Illustration of the \textit{centers} of the cells. 
For \(n = 10\) points \((a_k)_{1 \leq k \leq n}\), we depict the associated semi-discrete flow matching cells and the corresponding \textit{centers} \(\lim_{t \to 1} \gamma_t^{-1}(a_k)\), represented by star symbols. 
The dotted lines indicate the curves \((\gamma_t^{-1}(a_k))_{0 \leq t < 1}\).
}
\vspace{-15pt}
\end{wrapfigure}

We have seen in the previous subsection that the cells $\Gamma_k$ are simply connected, meaning they do not contain ``one-dimensional'' holes. In order to prove the absence of higher dimensional holes, we use the notion of contractibility. Recall that $A\subseteq\R^d$ is contractible if there exists a continuous mapping (homotopy) $H\colon[0,1]\times A\to A$ such that $H(0,x)=x$ and $H(1,\cdot)$ is constant. Naturally, this property implies simply connectedness, but not vice versa.

To construct such a homotopy for the sets $\Gamma_k$, we again make use of \Cref{prop:into_the_ball_implies_convergence_to_ak}. We define $H(t,\cdot)$ by first applying $\gamma_t$, then (for sufficiently large time) contracting inside the ball $B_r(a_k)$ towards $a_k$, and finally applying $\gamma_t^{-1}$. Intuitively, as $t\to 1$, this converges to the point $\lim_{t\to 1}\gamma_t^{-1}(a_k)$. It is not immediately obvious that this limit exists and lies inside the cell $\Gamma_k$; this is established in the next proposition. The proof is quite technical and deferred to \Cref{appendix:zentren}. Since this limit always lies in (the closure of) the cell and we contract the cell onto it, we refer to it as the \textit{center} of the cell.

\begin{prop}
The curve $t\mapsto\gamma_t^{-1}(a_k)$ has finite length for $t\in(0,1)$. In particular, the limit $\lim_{t\to1}\gamma_t^{-1}(a_k)$ exists and is located in $\overline{\Gamma_k}$.
\end{prop}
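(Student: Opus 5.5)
Set $y(t)\coloneqq\gamma_t^{-1}(a_k)$ for $t\in[0,1)$. This is well defined and smooth, since $\gamma_t$ is a diffeomorphism for $t<1$. Differentiating $\gamma_t(y(t))=a_k$ in $t$ gives
\[
\dot y(t) = -\big(D\gamma_t(y(t))\big)^{-1} v_t(a_k).
\]
The plan is to show $\int_0^1\|\dot y(t)\|\,dt<\infty$. Finite length then makes $y(t)$ Cauchy as $t\to1$, so the limit $y^\ast$ exists. The proof has three steps.

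\textbf{Step 1: the velocity at the atom is exponentially small.} From \eqref{eq:velocity_Gaussian_to_Dirac},
\[
v_t(a_k)=\sum_{j\neq k}\alpha_j(t,a_k)\frac{a_j-a_k}{1-t},
\]
because the $j=k$ term vanishes. Moreover $\|a_k-ta_k\|^2=(1-t)^2\|a_k\|^2$, whereas $\|a_k-ta_j\|^2\to\|a_k-a_j\|^2>0$. Hence $\alpha_j(t,a_k)\le C\exp\!\big(-c/(1-t)^2\big)$ for $j\neq k$, so $\|v_t(a_k)\|\le C(1-t)^{-1}e^{-c/(1-t)^2}$ with $c$ of order $\min_j\|a_j-a_k\|^2/2$.

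\textbf{Step 2: control the inverse Jacobian.} Write $M_t\coloneqq (D\gamma_t(y(t)))^{-1}=D(\gamma_t^{-1})(a_k)$. Let $\phi_{s,t}$ denote the flow of $v$ from time $s$ to time $t$, with $D\phi_{s,t}$ its spatial derivative. For fixed $s<1$, $M_t=D(\gamma_s^{-1})(\phi_{s,t}^{-1}(a_k))\,(D\phi_{s,t})^{-1}$. Grönwall applied to the backward variational equation bounds $\|(D\phi_{s,t})^{-1}\|$ by $\exp\!\big(\int_s^t\|Dv_\tau\|_\infty\,d\tau\big)$ along the relevant trajectory.

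A direct computation gives
\[
Dv_t(x)=-\frac{1}{1-t}I+\frac{t}{(1-t)^3}\operatorname{Cov}_{\alpha}(a),
\]
where $\operatorname{Cov}_\alpha(a)$ is the covariance of the atoms $a_j$ under the weights $\alpha_j(t,x)$. The first term contributes only a factor $(1-t)^{-1}$ to $\|(D\phi_{s,t})^{-1}\|$. The danger is the covariance term, which is large when the trajectory sits near a boundary between cells. The key point is that, by \Cref{prop:into_the_ball_implies_convergence_to_ak}, the backward trajectory $\tau\mapsto\gamma_\tau(y(t))=\phi_{\tau,t}^{-1}(a_k)$ is close to $a_k$ for times $\tau\ge s$ close to $1$. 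More precisely, inside $\bar B_r(a_k)$ at late times we have $\alpha_j=O(e^{-c/(1-\tau)^2})$ for $j\ne k$, so the covariance term is exponentially small there.

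I would therefore establish a lemma in the spirit of \Cref{lem:velocity_points_inwards}: for $\tau\in[\underline t_r,1)$, $\phi_{\tau,t}^{-1}(a_k)$ stays in $\bar B_r(a_k)$. To prove it, run backward from $a_k$; the inward-pointing velocity field near $a_k$ makes the ball backward-invariant only in the wrong direction, so I would instead compare with the explicit contraction $x\mapsto a_k+(1-\tau)(\cdot)$. The outcome is $\|M_t\|\le C(1-t)^{-1}$, or at worst polynomial growth in $(1-t)^{-1}$.

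\textbf{Step 3: conclude.} Combining the two steps,
\[
\|\dot y(t)\|\le C(1-t)^{-p}e^{-c/(1-t)^2},
\]
which is integrable on $[0,1)$. So the curve has finite length and the limit $y^\ast$ exists.

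\textbf{Step 4: the limit lies in $\overline{\Gamma_k}$.} For $t\ge\underline t_r$ we have $\gamma_t(y(t))=a_k\in\bar B_r(a_k)$, so \Cref{prop:into_the_ball_implies_convergence_to_ak} gives $\gamma_1(y(t))=a_k$, i.e.\ $y(t)\in\Gamma_k$. Since $y^\ast$ is a limit of points of $\Gamma_k$, it lies in $\overline{\Gamma_k}$.

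\textbf{Main obstacle.} I expect Step 2 to be the hardest part: bounding $\|D(\gamma_t^{-1})(a_k)\|$ subexponentially in $1/(1-t)^2$. This requires showing that the backward trajectory from $a_k$ never approaches a cell interface at late times, where the covariance term $t(1-t)^{-3}\operatorname{Cov}_\alpha(a)$ could blow up. I would first try to confine it to the region where $\alpha_k\ge 1-e^{-c/(1-t)^2}$, using monotonicity of the softmax logits along the flow started at the atom itself.
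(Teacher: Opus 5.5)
Your proposal is correct and follows essentially the same route as the paper: implicit differentiation gives $\dot y(t)=-D(\gamma_t^{-1})(a_k)\,v_t(a_k)$, the velocity at the atom is $O\bigl(e^{-c/(1-t)^2}\bigr)$, and the inverse Jacobian grows only polynomially. That polynomial bound comes from Gr\"onwall along the backward trajectory from $a_k$, using that the covariance part of $Dv_t$ is exponentially small in $B_r(a_k)$ and splitting off an early-time global bound; the paper obtains $C(1-t)^{-2}$ in \Cref{lem:inverse_Schranke}, and your sharper accounting gives $C(1-t)^{-1}$. Your comparison with the contraction, i.e.\ controlling $\|\gamma_\tau(y(t))-a_k\|/(1-\tau)$, is exactly Step~2 of that lemma; note only that this backward confinement does not follow from \Cref{prop:into_the_ball_implies_convergence_to_ak} itself, may need a threshold time later than $\underline t_r$, and makes the softmax-logit monotonicity idea in your last paragraph unnecessary.
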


We plot the curves $t\mapsto\gamma_t^{-1}(a_k)$ in \Cref{fig:illus_centers}. By definition, they start in $a_k$ for $t=0$ and end at $\lim_{t\to1}\gamma_t^{-1}(a_k)$, which is always inside $\overline{\Gamma_k}$ even when $a_k\not\in \overline{\Gamma_k}$.

Assuming that the limit $\lim_{t\to1}\gamma_t^{-1}(a_k)$ lies strictly in the interior of the cell (and not on its boundary), we can construct a homotopy as outlined above, which shows that $\Gamma_k$ is indeed contractible. In fact, under this assumption the cells turn out to be homeomorphic to $B_1(0)$, i.e., they are topologically equivalent to a ball. We summarize this in the following theorem. The explicit definition of the homotopy (together with a proof of its continuity) is given in \Cref{appendix:zentren}.

\begin{thm}
\label{prop:contractible}
    If $\lim_{t\to1}\gamma_t^{-1}(a_k)\in\Gamma_k$, then $\Gamma_k$ is contractible and homeomorphic to the open ball centered at \(0\) with radius \(1\), \(B_1(0)\) (and hence also to \(\mathbb{R}^d\)).
\end{thm}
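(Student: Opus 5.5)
Write $c_k\coloneqq\lim_{t\to1}\gamma_t^{-1}(a_k)$, which lies in $\Gamma_k$ by assumption. The plan is to realize $\Gamma_k$ as an increasing union of sets that are diffeomorphic images of balls. Then contractibility comes from the explicit homotopy sketched above, and the homeomorphism to $B_1(0)$ comes from a "radial" parametrization along the flow.

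First, fix $r$ and $\underline t_r$ as in \Cref{prop:into_the_ball_implies_convergence_to_ak}. For $t\in(\underline t_r,1)$ set $U_t\coloneqq\gamma_t^{-1}(B_r(a_k))$. Each $U_t$ is an open set diffeomorphic to a ball, since $\gamma_t$ is a bi-Lipschitz diffeomorphism for $t<1$, and $U_t\subseteq\Gamma_k$ by \Cref{prop:into_the_ball_implies_convergence_to_ak}. By the proof of \Cref{cor:Gamma_k_connectedness}, $\Gamma_k=\bigcup_t U_t$, and every compact subset of $\Gamma_k$ lies in $U_t$ for $t$ close to $1$. I would next show that the $U_t$ are nested increasing for $t$ large. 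By \Cref{lem:velocity_points_inwards} the velocity points strictly inward on $\partial B_r(a_k)$ for late times, so a trajectory inside $\bar B_r(a_k)$ at time $s$ stays inside for all later $t$. Hence $U_s\subseteq U_t$ for $s<t$.

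\emph{Contractibility.} Pick $t_0$ with $c_k\in U_{t_0}$. Define $H\colon[0,1]\times\Gamma_k\to\Gamma_k$ by following the flow forward to time $1$ along the trajectory $\gamma_t(x)$, and then undoing it via $\gamma_{t}^{-1}$ applied to points contracted toward $a_k$. Concretely, for $x\in\Gamma_k$ let
\begin{equation}
H(s,x)\coloneqq\gamma_{\tau(s)}^{-1}\bigl((1-s)\gamma_{\tau(s)}(x)+s\,a_k\bigr).
\end{equation}
Here $\tau(s)\uparrow1$ must be chosen so that for each compact $K$ the relevant points already lie in $B_r(a_k)$; convexity of the ball then keeps the contraction inside it. A cleaner route, which I would actually use, is to avoid a global choice of $\tau$: one proves the homeomorphism below, and contractibility then follows trivially, since $B_1(0)$ is contractible.

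\emph{Homeomorphism.} I would use the classical theorem (Brown's monotone union theorem for open cells, or directly a "radial" construction) that an increasing union of open $d$-cells $U_1\subseteq U_2\subseteq\cdots$, with each $\bar U_j$ inside $U_{j+1}$, is homeomorphic to $\R^d$. To apply it, I need $\overline{U_s}\subseteq U_t$ for $s<t$, which follows from the strict inward pointing of the velocity on spheres of radius $\rho\in[r/2,r]$. An alternative is the explicit radial map. For $y\in B_r(a_k)\setminus\{a_k\}$ and late $t$, the trajectory crosses each sphere $\partial B_\rho(a_k)$ transversally. So $\Gamma_k\setminus\{c_k\}$ can be parametrized by (time at which the trajectory crosses $\partial B_{r}(a_k)$, crossing point), giving $\Gamma_k\setminus\{c_k\}\cong(t^\ast,1)\times S^{d-1}$ or a similar product. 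Filling in $c_k$ gives a ball, and here the hypothesis $c_k\in\Gamma_k$ is exactly what is needed: the shrinking cells $\gamma_t^{-1}(B_\rho(a_k))$ collapse onto an interior point rather than onto a boundary point.

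\emph{Main obstacle.} The hard step is the nesting $\overline{U_s}\subseteq U_t$, together with showing that the union is a monotone union of cells and not merely a cover. This requires uniform inward pointing on $\partial B_r(a_k)$ for all $t\ge t_0$, which I expect \Cref{lem:velocity_points_inwards} provides. I would also have to check carefully that the role of $c_k$ (the limit of $\gamma_t^{-1}(a_k)$) is consistent with the cells' exhaustion. If the monotone-union theorem is allowed, it delivers $\Gamma_k\cong\R^d\cong B_1(0)$, and contractibility follows.
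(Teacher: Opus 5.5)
Your main route is correct and differs from the paper's. The paper first builds an explicit contraction of $\Gamma_k$ onto the center $c_k$ (\Cref{prop:contractible2}): flow forward, shrink toward $a_k$ at rate $(1-t)^3$, flow back. Continuity at the terminal time rests on the bound $\|\nabla[\gamma_t^{-1}]\|\le C_1^*(1-t)^{-2}$ near $a_k$ from \Cref{lem:inverse_Schranke}. The hypothesis $c_k\in\Gamma_k$ is used exactly to make $H(1,\cdot)$ land in $\Gamma_k$. The homeomorphism then comes from ``open and contractible'' for $d\le 2$. For $d\ge 3$ it comes from simple connectivity at infinity (\Cref{lem:simply_connected_at_infitinity}) plus the Stallings/Freedman-type characterization of $\R^d$.

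You instead write $\Gamma_k=\bigcup_j U_{t_j}$, with $t_j\uparrow 1$, as a monotone union of open $d$-cells, apply Brown's theorem, and read off contractibility from $\Gamma_k\cong B_1(0)$. This is shorter, uniform in the dimension, and relies on a much more elementary classical result. What the paper's route buys in exchange is an explicit contraction onto a canonical point of the cell.

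The step you flag as the main obstacle is easy. Brown's theorem needs only $U_s\subseteq U_t$, not $\overline{U_s}\subseteq U_t$. Nesting follows from strict inward pointing on the single sphere $\partial B_r(a_k)$ (\Cref{lem:velocity_points_inwards}). At a first exit time $\tau>s$, the map $t\mapsto\|\gamma_t(x)-a_k\|^2$ would have nonnegative left derivative, while $\langle a_k-\gamma_\tau(x),v_\tau(\gamma_\tau(x))\rangle>0$ forces it to be negative. The same inequality also gives $\overline{U_s}\subseteq U_t$ for $s<t$, with no family of spheres of radii in $[r/2,r]$ needed.

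Notice also what your argument does \emph{not} use. The hypothesis $c_k\in\Gamma_k$ appears nowhere in the Brown route, so it shows that every $\Gamma_k$ is homeomorphic to $B_1(0)$, hence contractible, unconditionally. Whether $c_k$ can lie on $\partial\Gamma_k$ remains open, but it becomes irrelevant for this theorem. Your remark that the hypothesis is ``exactly what is needed'' is therefore mistaken.

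Your radial alternative is also misstated. A point with $\gamma_{t^*}(x)\in B_r(a_k)$ never crosses $\partial B_r(a_k)$ after $t^*$, so crossing data parametrize $\Gamma_k\setminus U_{t^*}$, not $\Gamma_k\setminus\{c_k\}$. Done correctly, use the unique crossing time $\tau(x)\ge t^*$, which is continuous by transversality and the implicit function theorem. This gives $\Gamma_k\setminus U_{t^*}\cong[t^*,1)\times S^{d-1}$, glued along $\gamma_{t^*}^{-1}(\partial B_r(a_k))$ to the closed cell $\gamma_{t^*}^{-1}(\bar B_r(a_k))$. That is an elementary, again unconditional, proof that $\Gamma_k\cong\R^d$, avoiding even Brown's theorem.

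Your first homotopy, with a single reparametrization $\tau(s)$, can simply be dropped. As you noticed, it cannot be chosen uniformly on the non-compact set $\Gamma_k$.
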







\section{Geometry of the Flow Matching Cells}

Next, we are interested in geometric properties of the flow matching cells $\Gamma_k$. In particular, we want to examine geometric properties, like non-convexity and boundedness, and the neighborhood relations between the cells. Our results in this section will primarily be negative. In particular, we will demonstrate on a simple example with four atoms in the $\R^2$ that the cells are generally non-convex and that their boundedness properties and neighborhood relations sharply differ from the Laguerre cells.

Afterwards, we are interested whether the flow map generated by the semi-discrete flow matching is monotone, which is for particular interest in the context of reflow \cite{liu2023flow}. More precisely, \cite{bansal2025wassersteinconvergencestraightnessrectified, PSM2026} present monotonicity as a sufficient criterion for the second reflow to be a straight-line flow. Also in this context we will find a negative result and show that monotonicity is violated for our four-points example as well as for a smoothed version with Gaussian mixtures.


\subsection{A Four-Point Counterexample for Geometric Comparison with Semi-Discrete OT}
\label{sec:counter}

In the following, we provide an example in which we can explicitly prove that the geometric structure of the flow matching cells differs from the Laguerre cells obtained from OT.
More specifically, we study the setting of $n=4$ points in $\R^2$, where the points $a_k$ are defined by $a_k=(\cos(\frac{2k\pi}{3}),\sin(\frac{2k\pi}{3}))$ for $k=1,2,3$ and $a_4=(0,0)$.
The corresponding Laguerre cells and flow matching cells are illustrated in \Cref{fig:counterex} (a) and (b). 

\begin{figure}[h]
    \centering
    \begin{subfigure}{\threeimagesfigure}
    \centering
    \includegraphics[width=0.9\linewidth]{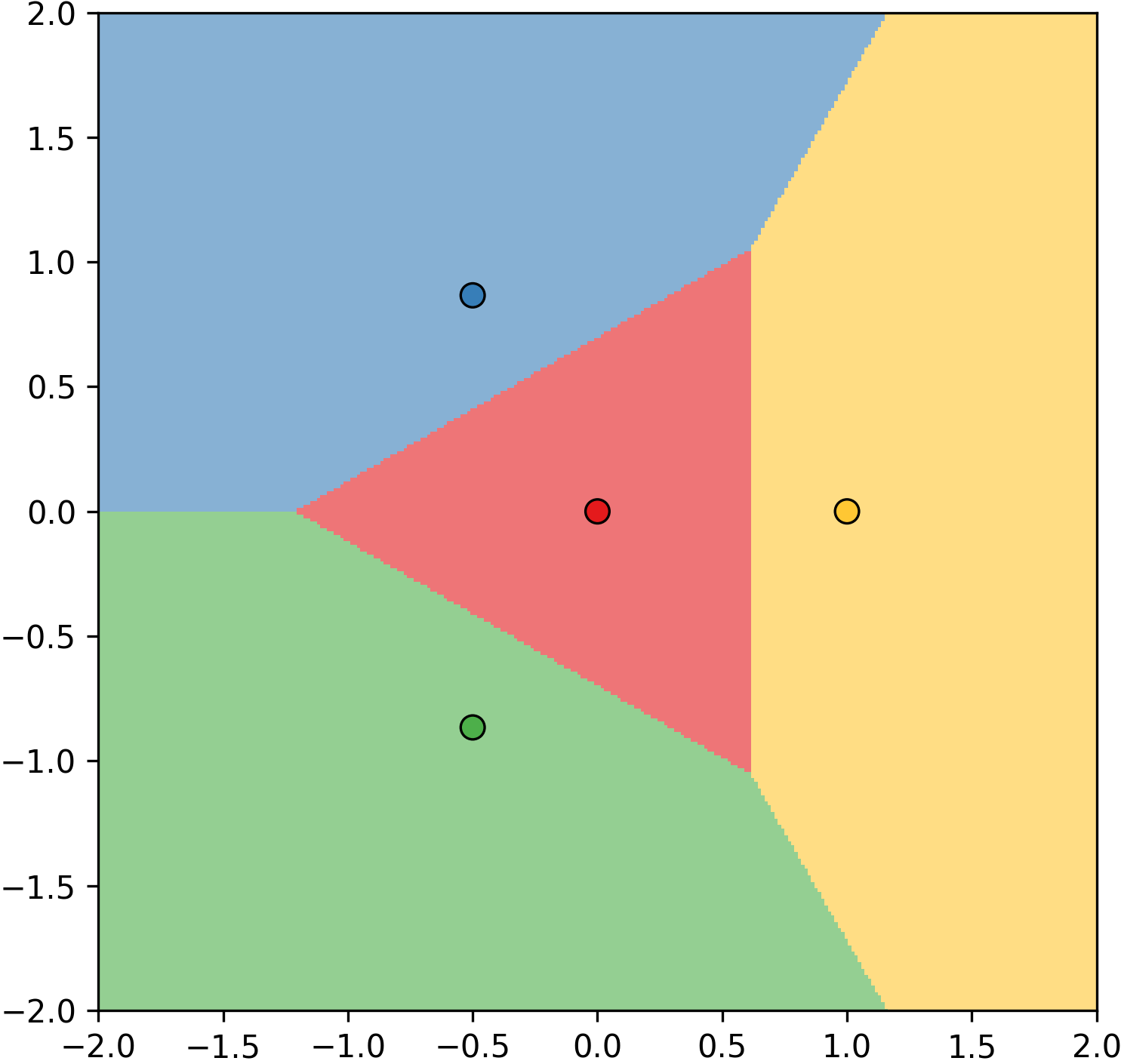}
     \caption{OT}
     \end{subfigure}%
    \begin{subfigure}{\threeimagesfigure}
    \centering
    \includegraphics[width=0.9\linewidth]{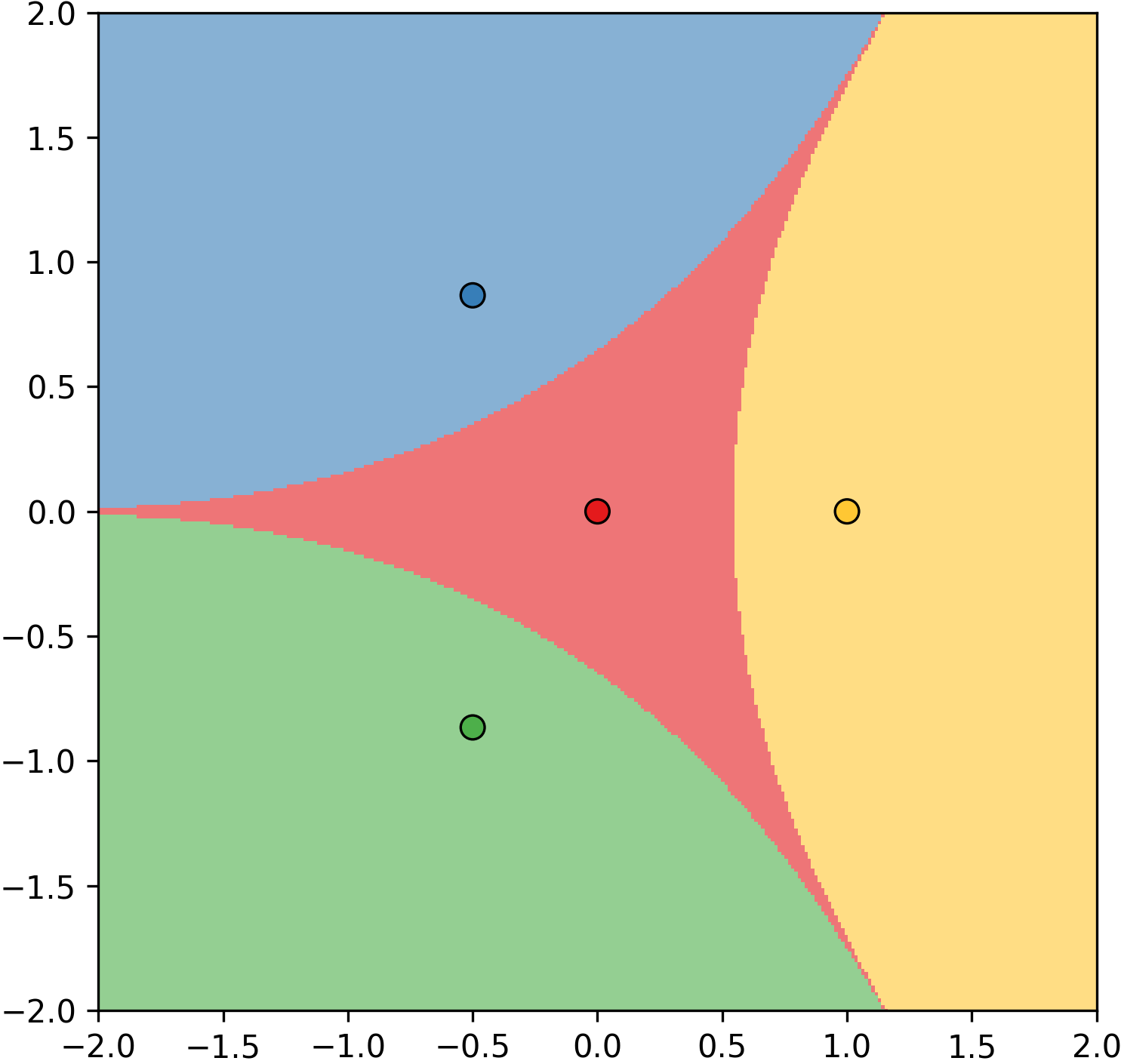}
     \caption{FM}
     \end{subfigure}%
     \begin{subfigure}{\threeimagesfigure}
     \centering
     \includegraphics[width=0.8\linewidth]{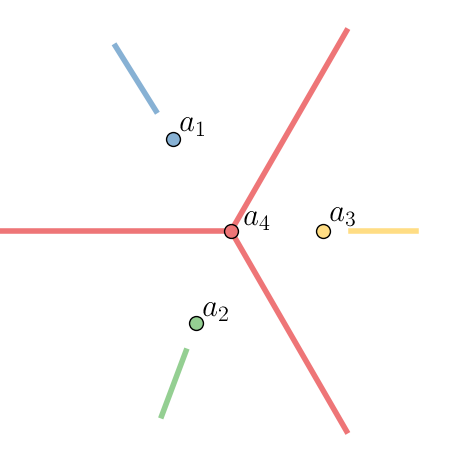}
     \caption{Illustration of \Cref{lem1:regular_triangle_example,lem2:regular_triangle_example}}
     \end{subfigure}%
     \caption{\label{fig:four_points_counterex} Illustration of the counterexample analyzed in \Cref{sec:counter} With this particular configuration of points $a_k$, we show the numerical computation of (a) the Optimal Transport (OT) Laguerre cells (b) the Flow Matching (FM) cells. (c) Visualization of the lines that are proved to be contained in the corresponding cells, with each line shown in the color of its cell (see \Cref{lem1:regular_triangle_example} and \ref{lem2:regular_triangle_example}).}
     \label{fig:counterex}
\end{figure}

In particular, we observe the structural differences listed in the theorem below. We will formalize these differences in the remainder of this subsection.
\begin{thm}\label{prop:four_point_geometry}
Denote by $\Gamma_1,...,\Gamma_n$ the flow matching cells and by $\mathcal L_1,...,\mathcal L_n$ the Laguerre cells \eqref{eq:cells_laguerre} corresponding to the atoms $a_1,...,a_n$. Then, in our example:
\begin{enumerate}[nosep,leftmargin=2em]
    \item[(i)] There exists $k$ such that the flow matching cell $\Gamma_k$ is unbounded, but the corresponding Laguerre cell $\mathcal L_k$ is bounded.
    \item[(ii)] There exist $k$ and $l$ such that the Laguerre cells $\mathcal L_k$ and $\mathcal L_l$ are neighbored (i.e., $\overline{\mathcal L_k}\cap\overline{\mathcal L_l}\neq \emptyset$), but $\Gamma_k$ and $\Gamma_l$ are not neighbored (i.e., $\overline{\Gamma_k}\cap\overline{\Gamma_l}=\emptyset$).
    \item[(iii)] There exists $k$ such that $\Gamma_k$ is non-convex.
    \item[(iv)] There exist $k$ and $l$ such that the boundary $\overline{\Gamma_k}\cap\overline{\Gamma_l}$ is not affine.
\end{enumerate}
\end{thm}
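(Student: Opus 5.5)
The plan is to reduce everything to two families of half-lines on which the flow can be controlled exactly by symmetry, and then to derive (i)--(iv) from these half-lines by elementary planar arguments. The configuration is invariant under the dihedral group of the triangle $a_1,a_2,a_3$; $a_4=0$ is fixed. For the Laguerre side, this symmetry forces $\psi_1=\psi_2=\psi_3$, with $\psi_4$ fixed by the mass constraint $\mu_0(\mathcal L_k)=1/4$. Then $\mathcal L_4$ is a bounded triangle around $0$ and $\mathcal L_1,\mathcal L_2,\mathcal L_3$ are unbounded wedges. Consecutive outer cells share an unbounded boundary ray, directed along $-a_m$ for the third index $m$. In particular $\overline{\mathcal L_1}\cap\overline{\mathcal L_2}\neq\emptyset$. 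This part is routine.

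\textbf{Step 1 (invariant axes).} Let $R_1$ be the reflection across the line $\R a_1$; it swaps $a_2,a_3$ and fixes $a_1,a_4$. The closed form \eqref{eq:velocity_Gaussian_to_Dirac} gives $v_t\circ R_1=R_1 v_t$, so by uniqueness of the flow (\Cref{sec:existence_uniqueness}) the line $\R a_1$ is invariant. On it, $\gamma_t(s a_1)=y(t)a_1$ with $y$ solving a scalar ODE. At $y=0$ the full rotational symmetry gives $\alpha_1=\alpha_2=\alpha_3$ and $v_t(0)=0$. Hence $y\equiv0$ is a solution, so $\gamma_t(0)=0$ and $0\in\Gamma_4$. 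By scalar uniqueness, trajectories starting at $s<0$ stay in $\{y<0\}$, and those starting at $s>0$ stay in $\{y>0\}$.

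\textbf{Step 2 (the half-line $\{-s a_1:\ s\ge0\}$ lies in $\Gamma_4$).} For $s>0$ the trajectory stays on the open half-line $\{-y a_1: y>0\}$, whose closure contains no atom except $a_4$. I then need that the terminal point is an atom. To get this I would show that for $t$ close to $1$ the scalar velocity points toward $0$ whenever $-y a_1$ is not close to an atom. The idea is that the softmax weights concentrate on the atom with smallest $\|x-ta_i\|$, and along this half-line the nearest atoms are $a_4$ or the symmetric pair $a_2,a_3$; the pair's drifts project onto $-a_1$ with a positive sign only beyond the point where $a_4$ dominates. This forces the trajectory into $\bar B_r(a_4)$ after time $\underline t_r$, and \Cref{prop:into_the_ball_implies_convergence_to_ak} gives $\gamma(-s a_1)=a_4$. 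By rotation the same holds for the half-lines $\{-s a_2\}$ and $\{-s a_3\}$. Similarly, for $s$ large, $s a_1$ must end at $a_1$: the trajectory stays on $\{y>0\}$, and for large $s$ the weight on $a_1$ dominates for all $t$, so the trajectory enters $\bar B_r(a_1)$ late. This presumably gives $\{s a_1:\ s\ge s_0\}\subset\Gamma_1$. These are the lines drawn in \Cref{fig:counterex}(c). I expect Step 2 to be the main obstacle, namely a clean quantitative estimate of the scalar velocity sign on the half-line uniformly for $t$ near $1$. I would first try a direct comparison of the exponents $\|x-ta_i\|^2/(1-t)^2$.

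\textbf{Step 3 (conclusions).} (i) $\Gamma_4$ contains the unbounded half-line $\{-sa_1\}$, whereas $\mathcal L_4$ is bounded. (ii) The open cells are disjoint. The connected set $\Gamma_4$ contains the half-line $\{-sa_3: s\ge0\}$, which separates the open $60^\circ$-sectors around $a_1$ and $a_2$ at infinity. To get $\overline{\Gamma_1}\cap\overline{\Gamma_2}=\emptyset$, I would strengthen Step 2 to an open neighbourhood: by openness (\Cref{cor:Gamma_k_connectedness}) and the uniformity in Step 2, a tube around $\{-sa_3\}$ of width bounded below lies in $\Gamma_4$. Then $\Gamma_1$ and $\Gamma_2$ lie in different components of the complement and stay at positive distance. (iii) If $\Gamma_4$ were convex, it would contain the convex hull of three half-lines in directions $-a_1,-a_2,-a_3$, which is all of $\R^2$; this contradicts $\Gamma_1\neq\emptyset$. (iv) $\overline{\Gamma_1}\cap\overline{\Gamma_4}$ is $R_1$-invariant. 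Suppose it is contained in a line. It cannot be the axis $\R a_1$, whose points lie in $\Gamma_1\cup\Gamma_4$ except for at most one point. So the line would be $\{x\cdot a_1=c\}$, and $\Gamma_1$ would sit on one side of it. But the half-lines $\{-sa_2\},\{-sa_3\}\subset\Gamma_4$ have $x\cdot a_1=s/2\to\infty$, so they enclose $\{sa_1\}\subset\Gamma_1$. A connectedness argument then shows that the boundary between $\Gamma_1$ and $\Gamma_4$ must reach arbitrarily large values of $x\cdot a_1$ on both sides, a contradiction.
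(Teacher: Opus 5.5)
Your route matches the paper's. You reduce everything to the two half-line statements (\Cref{lem1:regular_triangle_example,lem2:regular_triangle_example}), prove them on the invariant axes with a scalar velocity estimate and \Cref{prop:into_the_ball_implies_convergence_to_ak}, and read off (i)--(iv). Your (i) and (iii) are fine. For \Cref{lem2:regular_triangle_example}, ``the weight on $a_1$ dominates for all $t$'' is not literally true, since at $t=0$ all weights equal $1/4$. The paper's two-sided bound $(1-t)s-\frac t2\le y_t\le(1-t)s+t$ plus a crossing argument settles the late entry into $\bar B_r(a_1)$ directly.

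\textbf{The genuine gap is in (ii).} First, a single half-line, or a tube around it, does not disconnect the plane, so it cannot put $\Gamma_1$ and $\Gamma_2$ into different components. Second, nothing you cite gives a tube of width bounded below. Openness of $\Gamma_4$ gives a neighbourhood of each point with no uniform size, and Step 2 only controls trajectories on the axis, where the reflection symmetry holds. Third, a short heuristic indicates the uniform tube is false:
\begin{itemize}
\item In coordinates with $a_3=(1,0)$, start at $(-R,h_0)$ with $h_0>0$ small. Since $a_3,a_4$ have zero second coordinate, $(1-t)\dot h=\frac{\sqrt{3}}{2}(\alpha_1-\alpha_2)-h$ with $\alpha_1/\alpha_2=\exp(\sqrt{3}\,th/(1-t)^2)$.
\item The atom $a_4$ only starts competing with $a_1,a_2$ once $1-t$ is of order $1/R$.
\item Until then $h$ follows the one-dimensional two-atom flow and is amplified roughly by $e^{3/(8(1-t)^2)}$. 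So already $h_0$ of order $e^{-cR^2}$, for suitable $c>0$, ends at $a_1$.
\end{itemize}
Hence the arm of $\Gamma_4$ around $\{-sa_3\}$ thins out and $\operatorname{dist}(\Gamma_1,\Gamma_2)=0$. Positive distance is also the wrong target: disjoint closed subsets of $\R^2$ can be at distance zero.

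\textbf{The fix needs no tube.} Let $Y=\bigcup_{j=1}^3\{-sa_j:s\ge0\}\subseteq\Gamma_4$. Then $\R^2\setminus Y$ is the union of three open $120^\circ$ sectors $S_1,S_2,S_3$, with $\{sa_j:s>0\}\subseteq S_j$. Each $\Gamma_j$ ($j\le3$) has three properties:
\begin{itemize}
\item it is connected by \Cref{cor:Gamma_k_connectedness} (it is the connectedness of $\Gamma_1,\Gamma_2$ that matters, not that of $\Gamma_4$);
\item it is disjoint from $Y$;
\item it meets $S_j$ by \Cref{lem2:regular_triangle_example}.
\end{itemize}
Hence $\Gamma_j\subseteq S_j$, and so $\overline{\Gamma_1}\cap\overline{\Gamma_2}\subseteq\overline{S_1}\cap\overline{S_2}=\{-sa_3:s\ge0\}\subseteq\Gamma_4$. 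Since $\Gamma_4$ is open and disjoint from $\Gamma_1$, it does not meet $\overline{\Gamma_1}$. Therefore $\overline{\Gamma_1}\cap\overline{\Gamma_2}=\emptyset$.

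\textbf{Same ingredients for (iv).} Your ``connectedness argument'' in (iv) needs this confinement $\Gamma_j\subseteq S_j$. It also needs that $\R^2\setminus\bigcup_k\Gamma_k$ has empty interior, which holds because it is $\mu_0$-null by $\gamma_\#\mu_0=\mu_1$. With both, walk along the segment from $-2La_2\in\Gamma_4$ to $La_1\in\Gamma_1$ inside $\overline{S_1}$. This produces an off-axis point of $\overline{\Gamma_1}\cap\overline{\Gamma_4}$ with $x\cdot a_1=L$, for every $L\ge c_0$. Combined with $R_1$-invariance, this excludes every line at once. Your intermediate claims (at most one non-captured point on the axis, $\Gamma_1$ lying on one side of the line) can then be dropped.
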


In order to formally show the theorem, we prove two lemmas, which determine certain subsets of the FM cells $\Gamma_k$. These subsets are illustrated in \Cref{fig:counterex}~(c). The proofs are deferred to \Cref{app:counterex_proofs}.
The first lemma shows that the flow map $\gamma$ maps all points on the red rays in \Cref{fig:counterex}~(c) to the central point $a_4=(0,0)$. 

\begin{lem}\label{lem1:regular_triangle_example}
In our example, it holds $\{(-c\cos(\frac{2k\pi}{3}),-c\sin(\frac{2k\pi}{3})):c\geq 0\}\subseteq \Gamma_4$ for $k=1,2,3$. 
\end{lem}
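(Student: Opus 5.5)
By \Cref{cor:scaling} the problem is equivariant under orthogonal maps, and the configuration is invariant under the rotations by $2\pi/3$. It therefore suffices to treat one value of $k$; I take $k=3$. Then $a_3=(1,0)$ and the ray in question is $\{(-c,0):c\ge0\}$.

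\textbf{Reduction to one dimension.} Let $S$ be the reflection across the horizontal axis. It fixes $a_3$ and $a_4$ and swaps $a_1=(-\tfrac12,\tfrac{\sqrt3}{2})$ with $a_2=(-\tfrac12,-\tfrac{\sqrt3}{2})$. From the closed form \eqref{eq:velocity_Gaussian_to_Dirac} one gets $\alpha(t,Sx)=$ the same weight vector with indices $1,2$ swapped, and hence $v_t(Sx)=Sv_t(x)$. Uniqueness of the flow for $t<1$ (\Cref{sec:existence_uniqueness}) then gives $\gamma_t(Sx)=S\gamma_t(x)$. Consequently, a trajectory starting on the horizontal axis stays on it for all $t<1$. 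On the axis $\alpha_1=\alpha_2$, and the first coordinate $x_t$ of $\gamma_t(x)$ obeys the scalar ODE
\[
(1-t)\dot x_t=\alpha_3(1-x_t)-\alpha_4x_t-2\alpha_1\bigl(\tfrac12+x_t\bigr).
\]
Any limit point of $\gamma_t(x)$ also lies on the axis. So if the limit is an atom, it can only be $a_3$ or $a_4$, since $a_1,a_2$ lie off the axis.

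\textbf{Upper barrier.} On the axis, the point $x=t/2$ is equidistant from $ta_3$ and $ta_4=0$, so $\alpha_3=\alpha_4=:\alpha$ there. Moreover $\alpha<\tfrac12$, because $2\alpha+2\alpha_1=1$ with $\alpha_1>0$. Substituting into the ODE gives
\[
\dot x_t=\alpha-\alpha_1\frac{1+t}{1-t}<\tfrac12,
\]
which is the speed of the barrier $b(t)=t/2$. A comparison argument for scalar ODEs then shows that any trajectory starting at $x_0=-c\le0<b(0)$ satisfies $x_t<t/2$ for all $t<1$. In particular the trajectory never enters $\bar B_r(a_3)$ for any small $r$.

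\textbf{Convergence to $a_4$.} This is the main obstacle: I must show that $x_t$ actually reaches a small ball $\bar B_r(0)$ after time $\underline t_r$, so that \Cref{prop:into_the_ball_implies_convergence_to_ak} yields $\gamma_1(x)=a_4$. The difficulty is excluding escape to $-\infty$ and excluding a boundary-type limit.
\begin{itemize}
\item First I would use a lower barrier of the form $b_-(t)=-\beta t$ for a suitable small $\beta>0$, at least for $t$ close to $1$. At $x=b_-(t)$ the squared distance to $ta_4$ is $\beta^2t^2$, while the squared distance to $ta_{1,2}$ is larger by an amount of order $t^2$. Hence $\alpha_1,\alpha_3$ are exponentially small, of size $e^{-c/(1-t)^2}$. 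The drift is then $\approx\alpha_4\beta t/(1-t)>0$, far above the barrier speed $-\beta$.
\item The lower barrier only helps once a trajectory is above it, so for an arbitrary starting point I still need to bring it up. For this I would use the fact that the part of the drift with $\alpha_4\approx1$ is $-x_t/(1-t)$, which pulls any negative $x_t$ toward $0$ at a rate integrating to $+\infty$ as $t\to1$. I would also check that the region $x\le-\beta t$ is never dominated by $a_{1,2}$ strongly enough to drive $x$ further left. Near $x=-t/2$ the weights $\alpha_1=\alpha_2$ act only through $-(\tfrac12+x_t)$, which is nonnegative when $x_t\le-\tfrac12$.
\end{itemize}
Combining the two barriers gives $x_t\in[-\beta t,\,t/2]$ eventually. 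From there, the same exponential dominance of $\alpha_4$ in the strip $|x|<t/2-\eta$ drives $x_t$ into $B_r(0)$ after $\underline t_r$, and \Cref{prop:into_the_ball_implies_convergence_to_ak} concludes that $\gamma_1(x)=a_4$.
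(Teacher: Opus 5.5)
\textbf{There is a genuine gap: the convergence to $a_4$, which is the real content of the lemma, is only sketched, and the mechanism you offer for it is wrong on part of the axis.}

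Your reduction to $k=3$, the reflection argument keeping trajectories on the horizontal axis, the scalar ODE, and the barrier computation at $x=t/2$ are all correct. The problem is in the step you flag as the main obstacle.

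For $x$ on the axis one has $\|x-ta_1\|^2-\|x\|^2=t(x+t)$. So for $x_t<-t$ we get $\alpha_4<\alpha_1=\alpha_2$, and $\alpha_4$ is exponentially small as $t\to1$ (away from $x=-t$). There the drift is approximately $-(\tfrac12+x_t)/(1-t)$, which pulls toward $-\tfrac12$. It is not $-x_t/(1-t)$ pulling toward $0$, as your sketch assumes.

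The idea that closes this cleanly, and which the paper uses, is a convex-combination bound:
\begin{itemize}
\item The first coordinates of the atoms are $-\tfrac12,-\tfrac12,1,0$. Hence your scalar ODE gives $(1-t)\dot x_t\ge-\tfrac12-x_t$, whichever weights dominate.
\item Gronwall then yields $x_t\ge(1-t)x_0-\tfrac t2$, so $x_t\ge-\tfrac12-\epsilon$ for all late $t$.
\item On $[-\tfrac12-\epsilon,-\tfrac12+\epsilon]$ one has $\|x\|^2\approx\tfrac14$, versus $\|x-ta_{1,2}\|^2\approx\tfrac34$ and $\|x-ta_3\|^2\approx\tfrac94$. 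So $\alpha_4\to1$ uniformly there, and $(1-t)\dot x_t\ge -\tfrac{1-\alpha_4}{2}-x_t\ge c>0$.
\item Since $\int_{t_2}^1\frac{dt}{1-t}=\infty$, the trajectory cannot stay below $-\tfrac12+\epsilon$. It must therefore exceed $-\tfrac12+\epsilon$ at some time after $\underline t_r$, with $r=\tfrac12-\epsilon<\tfrac12$.
\end{itemize}
Your lower barrier $-\beta t$ is then unnecessary.

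There is a second, smaller hole. The upper barrier $x_t<t/2$ is too weak for your final step. It does not exclude that $x_t$ lingers in $[t/2-\eta,t/2)$, near the boundary with $\Gamma_3$, and your strip argument for $|x|<t/2-\eta$ does not cover that region.

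Use instead that $v_t(0,0)=0$ for every $t$. At the origin $\alpha_1=\alpha_2=\alpha_3$ and $a_1+a_2+a_3=0$, so $v_t(0,0)=\frac{\alpha_1}{1-t}(a_1+a_2+a_3)=0$. The origin is thus a stationary trajectory, and by uniqueness for the scalar ODE, $x_0\le0$ implies $x_t\le0$ for all $t<1$. This also covers $c=0$, where your strict inequality $x_0<b(0)=0$ fails.

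With $x_t\le0$, the first part places $x_t$ in $[-r,0]\subseteq\bar B_r(0)$ at some time after $\underline t_r$. \Cref{prop:into_the_ball_implies_convergence_to_ak} then gives $\gamma_1(x)=a_4$.
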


In particular, this implies that the red cell $\Gamma_4$ is unbounded for Flow Matching, whereas the corresponding Laguerre cell is bounded. It also follows that, unlike in the OT case, the cells $\Gamma_1$ (blue), $\Gamma_2$ (green), and $\Gamma_3$ (yellow) do not all share a common boundary.
To further identify the geometry of the cells, we next show that for each $k=1,2,3$ (blue green and yellow cells), the half-line $\{c\,a_k : c \geq 0\}$  eventually lies entirely inside $\Gamma_k$.

\begin{lem}\label{lem2:regular_triangle_example}
    In our example, there exists for $k=1,2,3$ some $c_0>0$ such that $\{ca_k:c\geq c_0\}\subseteq \Gamma_k$.
\end{lem}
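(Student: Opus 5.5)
By the rotational symmetry of the configuration (\Cref{cor:scaling} with $A$ the rotation by $2\pi/3$, which permutes $a_1,a_2,a_3$ and fixes $a_4$), it suffices to treat a single $k$. Consider the ray $\{c\,a_k: c\ge 0\}$.

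\textbf{Step 1: the ray is invariant.} The reflection $R$ across the line $\R a_k$ permutes the atoms: it fixes $a_k$ and $a_4$ and swaps the other two. The velocity \eqref{eq:velocity_Gaussian_to_Dirac} is therefore equivariant, $v_t(Rx)=Rv_t(x)$. By uniqueness of the flow, any trajectory starting on $\R a_k$ stays on this line. So $\gamma_t(c\,a_k)=s(t)\,a_k$, where $s$ solves a scalar ODE whose right-hand side is obtained by projecting $v_t(s a_k)$ onto $a_k$. Write $a_j=(-\tfrac12) a_k+w$ and $a_l=(-\tfrac12)a_k-w$ for the two other atoms, with $w\perp a_k$. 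The scalar dynamics are
\[
\dot s=\frac{\alpha_k(1-s)+\alpha_4(-s)+(\alpha_j+\alpha_l)(-\tfrac12-s)}{1-t},
\]
with weights given by the softmax of $-\|s a_k-t a_i\|^2/(2(1-t)^2)$. Along the line we have $\|sa_k-ta_k\|^2=(s-t)^2$, $\|sa_k\|^2=s^2$ and $\|sa_k-ta_{j}\|^2=(s+t/2)^2+3t^2/4$.

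\textbf{Step 2: large $c$ gives capture by $a_k$.} Take $r<\frac12\min\|a_i-a_l\|$ and $\underline t_r$ from \Cref{prop:into_the_ball_implies_convergence_to_ak}. It suffices to show that for $c$ large, $s(t)\in[1-r,1+r]$ for some $t\in(\underline t_r,1)$. I would compare $s$ with the trajectory of the pure drift toward $a_k$.

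As long as $s\ge t/2$ (say), the exponent gaps satisfy
\[
(s)^2-(s-t)^2=t(2s-t)>0,\qquad (s+t/2)^2+3t^2/4-(s-t)^2=3st+\cdots>0.
\]
Both are bounded below by a multiple of $ts$, so $\alpha_k$ dominates. Hence $\alpha_4,\alpha_j,\alpha_l\le \exp(-c' t s/(1-t)^2)$.

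It is cleaner to track the reparametrised quantity $u=(s-t)/(1-t)$. For the pure drift toward $a_k$, $u$ is constant, since $s=t+(1-t)c$. Under the full dynamics,
\[
\dot u=\frac{\sum_{i\ne k}\alpha_i\,\langle a_i-a_k,a_k\rangle}{(1-t)^2},
\]
which is negative but tiny once $\alpha_k$ dominates. Also $\langle a_i-a_k,a_k\rangle$ is bounded ($=-1$ or $-\tfrac32$).

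\textbf{Step 3 (main obstacle): uniform control of $u$ for large $c$.} We need $\int_0^1 (1-t)^{-2}\sum_{i\neq k}\alpha_i\,dt$ to be small uniformly for large $c$, while $s$ stays $\gtrsim t$. Near $t=0$ there is no dominance (the exponents are all $\approx -s^2/2$), but the factor $(1-t)^{-2}$ is harmless there: for $t\in[0,t_1]$, $|\dot u|\le C$, so $u$ drops by at most $Ct_1$. For $t\ge t_1$, as long as $u\ge u_0>0$ we have $s\ge t$, so $ts\ge t_1^2$. Then
\[
\alpha_i\lesssim\exp\bigl(-c''/(1-t)^2\bigr),
\]
and $(1-t)^{-2}e^{-c''/(1-t)^2}$ is integrable on $[t_1,1)$ with integral $\le K$ independent of $c$. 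I expect this bound to require care, since the dominance gap also depends on $u$ being positive, which makes it a bootstrap/continuity argument.

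Choosing $c_0>1+Ct_1+K$ then keeps $u(t)\ge 1$ say, for all $t<1$ by continuity. Consequently $s(t)=t+(1-t)u(t)$ with $u$ bounded, because $\dot u\le 0$ gives $u\le c$, and in fact $u$ converges. So $s(t)\to 1$ and $s(t)\in B_r(1)$ for $t$ close to $1$. \Cref{prop:into_the_ball_implies_convergence_to_ak} then gives $\gamma(c\,a_k)=a_k$ for all $c\ge c_0$.
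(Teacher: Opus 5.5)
Your argument is correct, but it takes a different route from the paper.

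\textbf{The paper's route.} The paper never estimates the softmax weights on the axis. Writing $\gamma_t(b_0a_k)=b_ta_k$, it uses only that $(1-t)\dot b_t$ equals a convex combination of the projections $1,0,-\tfrac12$ of the atoms, minus $b_t$. Gronwall then gives the sandwich $(1-t)b_0-\tfrac t2\le b_t\le (1-t)b_0+t$. Next it fixes the capture time $t_0$ supplied by \Cref{prop:into_the_ball_implies_convergence_to_ak} and \Cref{cor:stable_balls} with $r=\tfrac14$. It picks $c_0$ so that the lower bound equals $1-r$ at $t_0$, and uses the upper bound to get $b_t\le 1+r$ eventually. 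By continuity the trajectory then meets $\bar B_r(a_k)$ after $t_0$. All exponential estimates are outsourced to the capture proposition. No bootstrap is needed, because the crude lower bound only has to be good at one fixed time $t_0<1$.

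\textbf{Your route.} You reparametrise by $u=(s-t)/(1-t)$ and use the exact identity $\dot u=(1-t)^{-2}\sum_{i\neq k}\alpha_i\langle a_i-a_k,a_k\rangle\le 0$; this identity alone already gives the paper's upper bound $s\le t+(1-t)c$. You then combine it with the explicit gaps $t(2s-t)$ and $3st$ and a continuity argument. This is more hands-on but self-contained, and it gives more. It shows $u\ge 1$, hence $s(t)\ge 1$ for all $t$, so the trajectory approaches $a_k$ from outside and $s(t)\to 1$ directly. Your final appeal to \Cref{prop:into_the_ball_implies_convergence_to_ak} is therefore unnecessary. It also yields a $c_0$ that is explicit up to a Gaussian integral, instead of depending on the non-explicit capture time $t_0$.

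\textbf{A small slip.} In Step 2, the condition $s\ge t/2$ does not give a gap bounded below by a multiple of $ts$: the gap $t(2s-t)$ to $a_4$ vanishes at $s=t/2$. You need $s\ge t$. That is exactly what your bootstrap hypothesis supplies, since $u\ge 1$ gives $s\ge 1\ge t$, so the argument is unaffected.
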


Combining the two lemmas above, it follows that the red cell $\Gamma_4$ is not convex and that the boundaries between $\Gamma_4$ and $\Gamma_k$, for $k=1,2,3$, cannot be straight lines. Applying both lemmas we obtain the statements from \Cref{prop:four_point_geometry}.

\subsection{Non-Monotone Flow Maps}

An interesting consequence of the non-affine boundaries of the cells $\Gamma_k$ is that the flow map is not monotone.
Monotonicity of flow maps plays an important role in the context of the reflow algorithm \cite{liu2023flow}.
Reflow is an iterative procedure that repeatedly applies flow matching, using at each step the coupling $(\mu_0,\gamma_\#\mu_0)$ as the initial coupling, where $\gamma$ denotes the flow map learned by the previous flow matching iteration.
More precisely, \cite{bansal2025wassersteinconvergencestraightnessrectified} prove that reflow yields straight trajectories within the second iteration whenever the flow map in the first iteration is monotone.
A similar condition was derived in \cite{PSM2026}.

More specifically, \cite{bansal2025wassersteinconvergencestraightnessrectified} assume that the flow map $\gamma=\gamma_1$ from the flow matching algorithm with independent coupling is differentiable and fulfills
\begin{align} \label{eq:ass_bensal}
      \inf_x \lambda_{min}\left( J_{\gamma}(x) + J_{\gamma}(x)^\top \right) \geq 0,
\end{align}
or equivalently, that $\gamma$ is differentiable and monotone.
In the four-points example from the previous subsection, we can now observe that $\gamma$ is non-monotone by the next lemma. The proof is deferred to Appendix \ref{app:non_monotony}.
\begin{lem}\label{lem3:regular_triangle_example}
    In the semi-discrete example of Figure~\ref{fig:counterex}, there are $c >0$ and $x_1, x_2 \in \R^d$ such that
    $
     \langle \gamma(x_2)  - \gamma(x_1), x_2 - x_1 \rangle <  - c
     $.
\end{lem}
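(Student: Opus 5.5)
The plan is to pick $x_1, x_2$ on the rays given by \Cref{lem1:regular_triangle_example} and \Cref{lem2:regular_triangle_example}, where $\gamma$ is known exactly, and to compute the inner product directly. Take $x_1 = c_1 a_1$ with $c_1 \geq c_0$ large, so that $\gamma(x_1) = a_1$ by \Cref{lem2:regular_triangle_example}. Take $x_2 = -c_2 a_1$ with $c_2 \geq 0$, a point on the red ray through the origin opposite to $a_1$, so that $\gamma(x_2) = a_4 = 0$ by \Cref{lem1:regular_triangle_example}. Then $\gamma(x_2) - \gamma(x_1) = -a_1$ and $x_2 - x_1 = -(c_1 + c_2) a_1$. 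Hence
\[
\langle \gamma(x_2) - \gamma(x_1), x_2 - x_1 \rangle = (c_1 + c_2)\|a_1\|^2 > 0 .
\]
This sign is monotone, so this pair does not give the claim. I need a pair where one point lies deeper in a direction whose image moves the opposite way.

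The better choice uses the opposite half-line of $a_1$ together with the red ray. Recall $a_1 + a_2 + a_3 = 0$, so $-a_1 = a_2 + a_3$, which bisects $a_2$ and $a_3$. The red ray $\{-c\,a_1\}$ lies in $\Gamma_4$. Points $c\,a_2$ and $c\,a_3$ with $c \geq c_0$ lie in $\Gamma_2$ and $\Gamma_3$. So take $x_1 = c\,a_2$, giving $\gamma(x_1) = a_2$, and $x_2 = -s\,a_3$ with $s \geq 0$, giving $\gamma(x_2) = 0$. Then
\[
\langle \gamma(x_2) - \gamma(x_1), x_2 - x_1 \rangle = \langle -a_2, -s\,a_3 - c\,a_2 \rangle = s\langle a_2, a_3 \rangle + c\|a_2\|^2 = -\tfrac{s}{2} + c .
\]
This uses $\|a_k\| = 1$ and $\langle a_2, a_3 \rangle = \cos(2\pi/3) = -1/2$. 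Fix $c = c_0$ from \Cref{lem2:regular_triangle_example} and choose $s > 2c_0 + 2$. The inner product is then below $-1$, so the statement holds with the constant $1$ in place of $c$ and with $d = 2$.

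The only steps needing care are these. First, check that the ray in \Cref{lem1:regular_triangle_example} for index $3$ is exactly $\{-s\,a_3 : s \geq 0\}$, which it is by definition. Second, verify the index bookkeeping in the inner products above. I expect no real obstacle: all the analytical difficulty is contained in the two lemmas, which are assumed. What remains is elementary geometry exploiting that $\Gamma_4$ is unbounded along $-a_3$ while $\Gamma_2$ contains the far part of the ray through $a_2$, at an obtuse angle to $-a_3$.
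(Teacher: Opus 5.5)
Your proof is correct and is essentially the paper's argument up to relabeling: the paper takes $x_1=c_1a_3$ on a far ray from \Cref{lem2:regular_triangle_example} and $x_2=-c_2a_2$ on a red ray from \Cref{lem1:regular_triangle_example}, which gives the expression $c_1-c_2/2$ (your $c_0-s/2$). The paper estimates $\langle\gamma_t(x_2)-\gamma_t(x_1),x_2-x_1\rangle$ for $t$ near $1$ via \Cref{cor:stable_balls}, while you substitute the exact terminal values, which is cleaner; note also that your closing heuristic has the angle backwards, since the ray through $a_2$ makes a $60^\circ$ (acute) angle with $-a_3$, and it is $\langle a_2,-a_3\rangle=\tfrac12>0$ that makes the pairing negative.
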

Of course our example is formulated in the semi-discrete regime, where $\mu_1$ is not absolutely continuous (such that our setting differs from the one in \cite{bansal2025wassersteinconvergencestraightnessrectified}). However, by applying the smoothing from \Cref{lem:GMM_0}, we can derive the same result for the Gaussian mixture case. As in \Cref{sec:simplifications} we denote by $\gamma^\epsilon$ the flow map of the flow matching algorithm in the Gaussian mixture case, where each mode has standard deviation $\epsilon$. Then, we obtain that the monotonicity assumption of \cite{bansal2025wassersteinconvergencestraightnessrectified} is violated. The proof can be found in Appendix \ref{app:non_monotony}.

\begin{lem}\label{lem3:GMM_1} There are $\varepsilon > 0$ and $c>0$ such that $
      \inf_x \lambda_{min}\left( J_{\gamma^\varepsilon_1}(x) + J_{\gamma^\varepsilon_1}(x)^\top \right) < - c$.
\end{lem}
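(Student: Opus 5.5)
The plan is to transfer the non-monotonicity of \Cref{lem3:regular_triangle_example} from the semi-discrete map $\gamma$ to the smooth map $\gamma^\varepsilon_1$ for small $\varepsilon$ by a limiting argument, and then turn a negative increment into a negative eigenvalue of the symmetrized Jacobian via the fundamental theorem of calculus.

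The first step is to fix the points $x_1,x_2$ and the constant $c$ provided by \Cref{lem3:regular_triangle_example}. We may assume $x_1,x_2\in\bigcup_k\Gamma_k$. This holds because the proof of that lemma uses points on the rays from \Cref{lem1:regular_triangle_example} and \Cref{lem2:regular_triangle_example}, which lie in the open cells $\Gamma_4$ and $\Gamma_k$. Alternatively, since the $\Gamma_k$ are open by \Cref{cor:Gamma_k_connectedness} and $\gamma$ is locally constant on them, we can perturb the points slightly inside the cells. Then \Cref{lem:GMM_0} gives $\gamma^\varepsilon_1(x_i)\to\gamma(x_i)$ as $\varepsilon\to0$. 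Hence there is $\varepsilon>0$ with
\[
\langle \gamma^\varepsilon_1(x_2)-\gamma^\varepsilon_1(x_1),x_2-x_1\rangle<-c/2 .
\]

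The second step uses smoothness. For $\varepsilon>0$ the velocity $v^\varepsilon_t$ in \eqref{eq:velocity_GMM} is smooth in $x$ and bounded in $t\in[0,1]$, since the denominator $(1-t)^2+t^2\varepsilon^2$ stays bounded away from zero. So $\gamma^\varepsilon_1$ is a $C^1$ diffeomorphism by standard ODE theory, consistent with the bi-Lipschitz remark in \Cref{sec:simplifications}. Set $h=x_2-x_1$ and write
\[
\langle \gamma^\varepsilon_1(x_2)-\gamma^\varepsilon_1(x_1),h\rangle=\int_0^1 \langle J_{\gamma^\varepsilon_1}(x_1+sh)h,h\rangle\,ds=\frac12\int_0^1 \langle (J+J^\top)(x_1+sh)h,h\rangle\,ds .
\]
The integrand is bounded below by $\lambda_{\min}\|h\|^2$. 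Therefore some $s$ satisfies $\lambda_{\min}\bigl(J+J^\top\bigr)(x_1+sh)<-c/\|h\|^2$, which gives the claim with constant $c'=c/\|h\|^2$. Note that $h\neq0$, since otherwise the inner product in the first step would be zero.

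The main obstacle is the first step. \Cref{lem:GMM_0} only gives pointwise convergence on $\bigcup_k\Gamma_k$, so we must check that the witnesses of \Cref{lem3:regular_triangle_example} can be taken inside the open cells and not on cell boundaries. I would handle this directly with the explicit ray points from the two earlier lemmas. For example, take $x_1=c_0a_1\in\Gamma_1$ and $x_2=-c'a_1\in\Gamma_4$ with suitable scalars; then $\gamma(x_2)-\gamma(x_1)=-a_1$. This choice gives $\langle -a_1,-(c'+c_0)a_1\rangle>0$, which has the wrong sign, so the actual witnesses must come from the non-affine boundary geometry. In that case I would rely on whatever points the proof of \Cref{lem3:regular_triangle_example} constructs, together with the openness perturbation described in the first step.
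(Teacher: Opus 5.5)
Your proposal is correct and follows the paper's proof essentially step for step. You get pointwise convergence $\gamma^\varepsilon_1(x_i)\to\gamma(x_i)$ from \Cref{lem:GMM_0} at the two witnesses of \Cref{lem3:regular_triangle_example}, then use the fundamental theorem of calculus along $[x_1,x_2]$ and symmetrization to find a point with $\lambda_{\min}(J+J^\top)<-c/\|h\|^2$.

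Your remark that \Cref{lem:GMM_0} needs $x_i\in\bigcup_k\Gamma_k$ is legitimate, since the paper uses it silently, and your first justification settles it. The witnesses are $x_1=c_1a_3\in\Gamma_3$ (with $c_1\ge c_0$, by \Cref{lem2:regular_triangle_example}) and $x_2=-c_2a_2=\frac{c_2}{2}(1,\sqrt3)\in\Gamma_4$ (by \Cref{lem1:regular_triangle_example}), taking $c_2>2c_1$; this gives $\langle\gamma(x_2)-\gamma(x_1),x_2-x_1\rangle=c_1-\frac{c_2}{2}<0$. So the perturbation fallback is unnecessary, and it would fail anyway at boundary points, where $\gamma$ jumps.
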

More generally, one can see that in the semi-discrete setting (and thus also for gaussian mixture setting with sufficiently small standard deviations $\varepsilon$) the flow map $\gamma$ can only be monotone if the borders between the cells $\Gamma_k$ are affine. While we only have proved that for our specific examples, we observe numerically that this is only true in very specific cases. Indeed, in Figure~\ref{fig:monotony}, we test condition \eqref{eq:ass_bensal} (and more generally monotonicity) for our four-point example and observe that it exactly fails around the boundaries of the cells.

\begin{rem}
    Although several works (\citealp{bansal2025wassersteinconvergencestraightnessrectified,PSM2026,reu2026gradient}) employ monotone couplings to construct straight flows, the same authors consistently remark that monotonicity is not a necessary criterion. In fact, in our four-point example, the second reflow appears to be straight, since the regions are ``star-shaped'' with centers $a_k$: for every point in the cell $x \in \Gamma_k$, the whole segment $[x, \gamma(x) = a_k]$ remains entirely within the cell. This guarantees that the second reflow iteration has straight trajectories.
\end{rem}

\begin{figure}[h]
    \centering
    \begin{subfigure}{0.48\linewidth}
    \centering
    \includegraphics[width=0.75\linewidth]{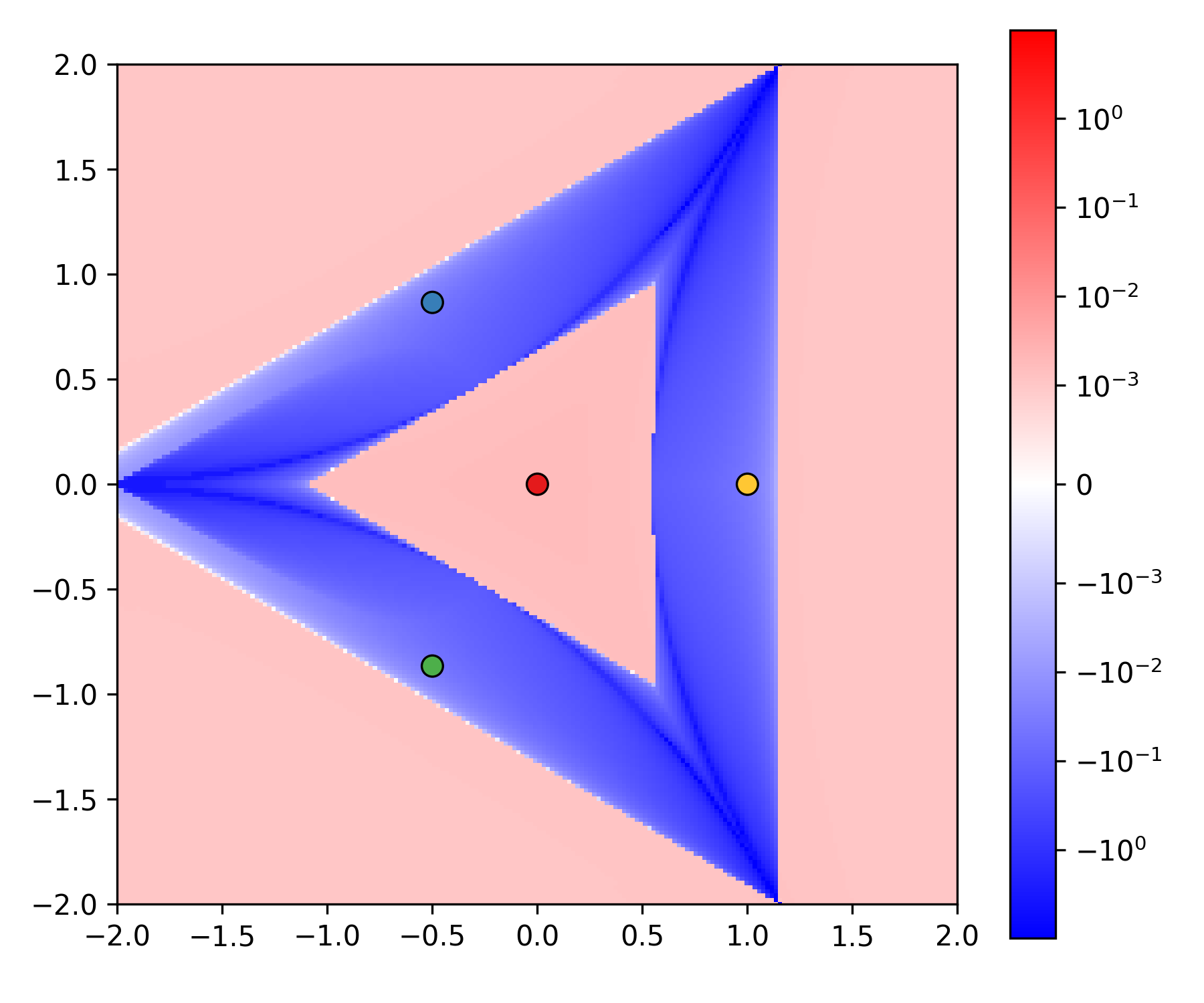}
     \caption{ $\min_y \langle \gamma_1(y)  - \gamma_1(x), y - x \rangle$}
     \end{subfigure}
    \begin{subfigure}{0.48\linewidth}
    \centering
    \includegraphics[width=0.75\linewidth]{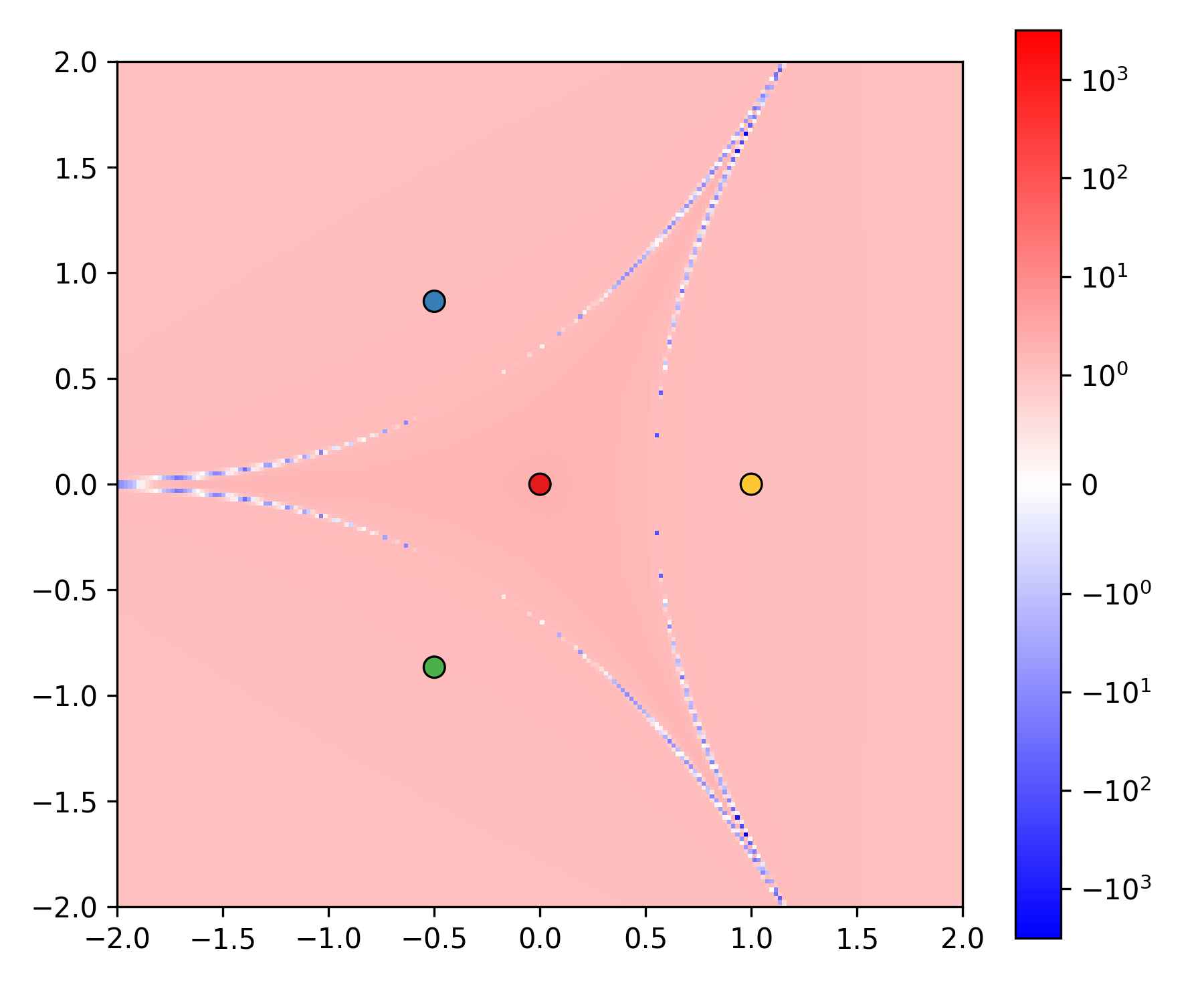}
     \caption{ $\lambda_{min}\left( J_{\gamma^\varepsilon_1}(x) + J_{\gamma^\varepsilon_1}(x)^\top \right)$}
     \end{subfigure}
     \caption{Illustration of the non-monoticity of the flow map, in the four-points configuration of Section~\ref{sec:counter}. 
    Figure (a) illustrates non-monoticity in the semi-discrete setting (Lemma~\ref{lem3:regular_triangle_example}), while Figure (b) illustrates Lemma~\ref{lem3:GMM_1} after Gaussian smoothing of the target atoms. In both cases, the loss of monotonicity is concentrated around the cell boundaries, suggesting that it is a geometric consequence of the non-affine FM boundaries.}
    \label{fig:monotony}
\end{figure}


\section{Conclusion and Discussion}

In this paper, we studied the geometry induced by the exact semi-discrete Flow Matching vector field, taking the viewpoint that this field is a canonical object in its own right and a useful theoretical proxy for the objectives used in generative modeling. Our main contribution is to show that the associated terminal assignment regions combine strong global topological structure with genuinely non-OT geometry. 
This gives a first qualitative picture of semi-discrete FM tessellations and clarifies in which sense FM can and cannot be viewed as a surrogate for semi-discrete OT.

\paragraph{Limitations and future directions.}
One limitation of our analysis is that it concerns the \emph{exact} semi-discrete FM vector field, whereas practical generative models only learn an approximation of it. Our results therefore do not directly characterize the geometry of the \emph{trained} velocity field or of the transport induced by a finite neural network after optimization. The exact cells should instead be viewed as a reference geometry: they identify which structural biases are already present in the objective before approximation, architecture, and optimization enter the picture. An important next step is to understand to what extent these qualitative properties survive training, and whether the learned velocity field inherits a meaningful analogue of the exact assignment cells in discrete or nearly discrete regimes. As a first numerical indication, Appendix~\ref{app:mnist} reports a MNIST experiment in which a simple neural FM model produces assignment cells that closely resemble the exact closed-form FM cells on a two-dimensional PCA slice of the latent space. 
Nevertheless, a systematic understanding of when and how these qualitative properties survive training remains open. 

Another limitation is that our results do not settle whether FM should be regarded as a useful proxy for OT from a geometric viewpoint. Our analysis shows that exact semi-discrete FM departs from Laguerre geometry outside effectively one-dimensional configurations. At the same time, this leaves open the possibility that trained or regularized FM models produce cells that are in some sense ``nicer'' than the exact ones, for instance smoother, more stable, or closer to OT cells after approximation effects are taken into account. Clarifying this point would help distinguish which geometric phenomena are intrinsic to FM and which are artifacts of the exact semi-discrete limit.

More broadly, several natural questions remain open. A first direction is to understand how the cells depend on the interpolation itself: how do they change for non-affine interpolations, or after adding noise to the target and considering smoothed variants such as Gaussian mixtures? A second direction is dynamical. One may study the geometry not only of the terminal cells $\Gamma_k$, but also of the intermediate cells defined along the flow, as well as their dependence on the locations of the support points $(a_k)_k$. A third direction concerns reflow and straightness: our counterexample rules out a general monotonicity principle in the present setting, but it remains unclear under which assumptions a second reflow becomes straight. 
Additionally, we conjecture that the assumption of \Cref{prop:contractible} is always fulfilled. While this was true in numerical simulations, it is important to find a formal proof.
Finally, it would be interesting to extend the present picture beyond the equal-weight distinct-atom setting considered here, both to general discrete weights and to asymptotic regimes such as a mean-field limit where the number of support points grows.

\section*{Acknowledgments}
This research was funded, in part, by the Agence nationale de la recherche (ANR), through the PEPR PDE-AI project (ANR-23-PEIA-0004). JH acknowledges funding by the Deutsche Forschungsgemeinschaft (DFG,
German Research Foundation) within project no 530824055. This work was performed using HPC resources from GENCI–IDRIS (Grant 2025-AD011015483R1).




\bibliographystyle{plainnat}
\bibliography{ref_neurips}

\appendix
\begin{appendices}

\section{Existence and Uniqueness of the flow map in the Semi-discrete case}
\label{sec:existence_uniqueness}

Several papers (including \cite{bansal2025wassersteinconvergencestraightnessrectified,mena2025statistical}) studied sufficient conditions for the existence and uniqueness of the flow map $\gamma_t$.
However, in this paper we only study the case where $\mu_0\sim\mathcal N(0,I)$ and where $(X_0,X_1)\sim\mu_0\otimes\mu_1$ is the independent coupling. In this case, the result is directly clear (for $\mu_0$-almost every $x$) without further assumptions. Due to the smoothness of the Gaussian, one finds that $v_t^*$ is globally Lipschitz on $[0,1-\epsilon]\times\R^d$ for any $\epsilon\in(0,1)$. In particular, we get that the flow map exists and is unique on $[0,1)\times\R^d$. Because of 
\begin{align*}
\left(\int \int_0^1 \|\dot \gamma_t(x)\| d t d\mu_0(x)\right)^2&\leq
 \int \int_0^1 \|\dot\gamma_t(x)\|^2 d t d\mu_0(x)
 =\int \int_0^1 \|v^*_t(\gamma_t(x))\|^2 d t d\mu_0(x)
 \\&=\int_0^1\int \|v^*_t(x)\|^2 d\mu_t(x)d t\leq \E{\|X_1-X_0\|^2}<\infty,
\end{align*}
we also obtain that the trajectories $t\mapsto\gamma_t(x)$ have finite length (a.e.) and can uniquely be extended to time $t=1$.

\section{Proofs of the Simplifications}

\subsection{Proof of \Cref{prop:only_depends_on_span}}
\label{proof:span_reduction}

By \Cref{cor:scaling} it suffices to consider the case $a_1=0$ in which case $E=\Span(a_k)_{1\leq k\leq n}$.
Then, for $1 \leq k \leq n$, $x \in \R^d$, $t \in [0,1)$, we have by the Pythagorean theorem that
$$
\begin{aligned}
\|x-ta_k\|^2 = \|x^\parallel-ta_k\|^2 +  \|x^\perp\|^2.
\end{aligned}
$$
Consequently,
$$
\begin{aligned}
    \alpha_k(t,x)
    & =\displaystyle\frac{e^{-\frac{1}{2(1-t)^2}(\|x^\parallel-ta_k\|^2 +  \|x^\perp\|^2)}}{\displaystyle\sum_{i=1}^ne^{-\frac{1}{2(1-t)^2}(\|x^\parallel-ta_k\|^2 +  \|x^\perp\|^2)} } = \displaystyle\frac{e^{-\frac{1}{2(1-t)^2}\|x^\parallel-ta_k\|^2 }}{\displaystyle\sum_{i=1}^ne^{-\frac{1}{2(1-t)^2}\|x^\parallel-ta_k\|^2 } } = \alpha_k(t,x^\parallel).
\end{aligned}
$$
Finally,
$$
\begin{aligned}
    \frac{dx_t}{dt} 
    = \frac{dx^\parallel_t}{dt} + \frac{dx^\perp_t}{dt} 
    = \underbrace{\sum_{k=1}^n \alpha_k(t,x_t^\parallel) \frac{a_k-x_t^\parallel}{1-t}}_{\in \Span(a_k)_{1 \leq k \leq n}} + \underbrace{-\frac{x_t^\perp}{1-t}}_{\in \Span(a_k)_{1 \leq k \leq n}^\perp}
\end{aligned}
$$
This completes the proof.\hfill$\Box$





\subsection{Proof of \Cref{lem:GMM_0}}
\label{app:GMM_0}
We start with the following intermediate results which uniformly controls the distance between $\gamma^\varepsilon_t$ and $\gamma_t$ for $t \leq T < 1$.
\begin{lem}  \label{lem:convergence_eps}
   For every $R >0$ and $T < 1$, 
   \begin{align}
       \sup_{\norm{x} \leq R} \sup_{0 \leq t \leq T} \| \gamma^\varepsilon_t(x) -  \gamma_t(x)\| \to 0
   \end{align}
\end{lem}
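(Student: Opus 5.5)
The plan is a Grönwall argument on the compact set $[0,T]\times\bar B$, where $\bar B$ is a ball containing all trajectories that start in $\bar B_R(0)$.

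\emph{Step 1: a priori bound on trajectories.} Fix $R>0$ and $T<1$. Both $v_t$ and $v_t^\varepsilon$ for $\varepsilon\le 1$ have the form of a convex combination of affine drifts. The coefficients of $x$ and the constant terms are bounded uniformly in $t\le T$, since $(1-t)^2+t^2\varepsilon^2\ge (1-T)^2$. Hence $\|v_t^\varepsilon(x)\|\le C_1(1+\|x\|)$ with $C_1$ independent of $\varepsilon\in[0,1]$ and $t\in[0,T]$. Grönwall then gives $\|\gamma_t^\varepsilon(x)\|\le R' := (R+C_1T)e^{C_1T}$ for all $\|x\|\le R$, all $t\le T$ and all $\varepsilon\in[0,1]$.

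\emph{Step 2: uniform Lipschitz bound for the limit field.} On $K=[0,T]\times\bar B_{R'}(0)$, the limit field $v_t=v_t^0$ is smooth in $x$. It is a softmax composed with quadratic forms whose scale $1/(1-t)^2$ is bounded by $1/(1-T)^2$, so its Jacobian is bounded on $K$ by a constant $L$. This is the only place where $T<1$ is really used. The Lipschitz bound is needed only for the limit field, not uniformly in $\varepsilon$.

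\emph{Step 3: uniform convergence of the fields.} Next I show $\delta_\varepsilon:=\sup_K\|v_t^\varepsilon(x)-v_t(x)\|\to0$. The map $(t,x,\varepsilon)\mapsto v_t^\varepsilon(x)$ is jointly continuous on $[0,T]\times\bar B_{R'}\times[0,1]$, because every denominator is bounded below by $(1-T)^2$ and softmax is continuous. A continuous function on a compact set is uniformly continuous, so $v^\varepsilon\to v^0$ uniformly on $K$ as $\varepsilon\to0$. Alternatively, one can bound the difference explicitly by $O(\varepsilon^2)$ through the derivative with respect to $\varepsilon^2$.

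\emph{Step 4: conclusion.} Set $e(t)=\|\gamma_t^\varepsilon(x)-\gamma_t(x)\|$. Both trajectories stay in $\bar B_{R'}$, so
\[
e(t)\le \int_0^t\|v_s^\varepsilon(\gamma_s^\varepsilon)-v_s(\gamma_s^\varepsilon)\|+\|v_s(\gamma_s^\varepsilon)-v_s(\gamma_s)\|\,ds\le T\delta_\varepsilon+L\int_0^t e(s)\,ds .
\]
Grönwall's inequality gives $e(t)\le T\delta_\varepsilon e^{LT}$ uniformly in $\|x\|\le R$ and $t\le T$, and this tends to $0$ as $\varepsilon\to0$.

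The main obstacle is keeping every constant independent of $\varepsilon$, most importantly the a priori confinement in Step 1. Without it, Steps 2 and 3 could not be restricted to a compact set. This works because the extra term $t\varepsilon^2x$ in \eqref{eq:velocity_GMM} only adds a bounded multiple of $x$ when $\varepsilon\le1$.
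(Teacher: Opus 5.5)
Your proof is correct and follows essentially the same route as the paper. Both arguments use an $\varepsilon$-uniform linear-growth bound to confine all trajectories to a fixed ball, then uniform convergence of $v^\varepsilon$ to $v$ on $[0,T]\times\bar B_{R'}(0)$, then Gr\"onwall's inequality. The one difference is how you split the error: $v^\varepsilon_s(\gamma^\varepsilon_s)-v_s(\gamma^\varepsilon_s)$ plus $v_s(\gamma^\varepsilon_s)-v_s(\gamma_s)$. This means you only need a Lipschitz bound for the limit field $v$, whereas the paper's split requires a Lipschitz constant for $v^\varepsilon$ that is uniform in $\varepsilon$, which it asserts without proof. Your version is therefore slightly cleaner.
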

\begin{proof}
Let $e_t^\varepsilon(x) = \gamma^\varepsilon_t(x)  - \gamma_t(x)$. Using $\dot e_t^\varepsilon(x)  = v_t^\varepsilon(\gamma_s^\varepsilon(x))-v_t(\gamma_s(x))$ and $e^\varepsilon_0(x) = 0$, we have:
\begin{align}
\norm{e_t(x)}
&\le \int_0^t
\norm{v_s^\varepsilon(\gamma_s^\varepsilon(x))-v_s(\gamma_s(x))} ds \\
&\le \int_0^t
\norm{v_s^\varepsilon(\gamma_s^\varepsilon(x))-v_s^\varepsilon(\gamma_s(x))}ds
+\int_0^t
\norm{v_s^\varepsilon(\gamma_s(x))-v_s(\gamma_s(x))}ds  \label{eq:bound_sup}
\end{align}
We now prove that for bounded initialization $x$ and $t \leq T$, the iterates $\gamma_t(x)$ and $\gamma^\varepsilon_t(x)$ remain uniformly bounded. Indeed, we can bound  
\begin{equation}
    \norm{v_t^\varepsilon(x)} \leq \sup_{a_k} \norm{\frac{(1-t)(a_k-x) + t \varepsilon^2 x}{(1-t)^2 + t^2 \varepsilon^2}} \leq C_T (1 + \norm{x}) 
\end{equation}
where $C_T$ depends on $T$, not on $\epsilon$.
In the same way $\norm{v_t(x)} \leq C_T (1 + \norm{x})$ and thus integrating, we get that there is $M >0$ such that
\begin{equation}
\sup_{\norm{x} \leq R} \sup_{0 \leq t \leq T} \| \gamma^\varepsilon_t(x)\| \leq M, \quad \sup_{\norm{x} \leq R} \sup_{0 \leq t \leq T} \| \gamma_t(x)\| \leq M 
\end{equation}
In order to handle the first term of~\eqref{eq:bound_sup}, note that $v_t^\varepsilon$ is uniformly Lipschitz on $K_T \eqdef [0,T] \times B(0,M)$: there is $L_T >0$, such that for all $t \in [0,T]$ and $x,y \in B(0,R)$, 
\begin{equation}
\norm{v_s^\varepsilon(\gamma_s^\varepsilon(x))-v_s^\varepsilon(\gamma_s(x))} \leq L_T \, \norm{e_s(x)} . 
\end{equation}
Therefore, we get 
\begin{align}
\norm{e_t(x)}
&\le L_T \int_0^t \norm{e_s(x)} ds
+ t \sup_{0 \leq t \leq T, \norm{x} \leq M} \norm{v_s^\varepsilon(x)-v_s(y)}ds .
\end{align}
And applying Gronwall lemma: 
\begin{align}
\sup_{t \leq T, \norm{x} \leq R}\norm{e_t(x)}
&\le Te^{T L_T}\sup_{0 \leq t \leq T, \norm{x} \leq M} \norm{v_s^\varepsilon(x)-v_s(y)}ds .
\end{align}
Finally, using that $v_t^\varepsilon$ converges uniformly on $K_T$ to its $\varepsilon=0$ counterpart, the right-hand side tends to $0$.
\end{proof}

Now, we can prove \Cref{lem:GMM_0}. 
\begin{proof}[Proof of \Cref{lem:GMM_0}]
Let \(k\in\{1,\dots,n\}\) be such that $\gamma_t(x)\to a_k$. Let
\[
d_k:=\min_{i\neq k}\norm{a_i-a_k},
\]
and choose \(0<\rho<d_k/4\). There exists \(T_0<1\) such that
\[
\gamma_t(x)\in B(a_k,\rho/8)
\qquad\text{for all }t\in[T_0,1).
\]
By \Cref{lem:convergence_eps}, for this fixed \(T_0\), there exists \(\varepsilon_0>0\) such that
\[
\norm{\gamma_{T_0}^\varepsilon(x)-\gamma_{T_0}(x)}\le \rho/8
\qquad\text{for all }0<\varepsilon<\varepsilon_0.
\]
Hence
\begin{equation}\label{eq:init_small}
\norm{\gamma_{T_0}^\varepsilon(x)-a_k}
\le \rho/4
\qquad\text{for all }0<\varepsilon<\varepsilon_0.
\end{equation}
We now define $\tau_\varepsilon$ as the time where the trajectory $\gamma_t^\varepsilon(x)$ leaves $B(a_k,\rho)$:
\begin{equation}
\tau_\varepsilon:=\inf\bigl\{t\in[T_0,1]:\ \gamma_t^\varepsilon(x)\notin B(a_k,\rho)\bigr\},
\end{equation}
with the convention \(\tau_\varepsilon=1\) if the set is empty. Then
\begin{equation}
\gamma_t^\varepsilon(x)\in B(a_k,\rho)
\qquad\text{for all }t\in[T_0,\tau_\varepsilon].
\end{equation}
Set
\begin{align}
D_\varepsilon(t):=(1-t)^2+t^2\varepsilon^2,
\qquad
r_t^\varepsilon(x):=\gamma^\varepsilon_t(x)-a_k,  \qquad m_t^\varepsilon(z):=\sum_{i=1}^n \alpha_i^\varepsilon(t,z)\,a_i.
\end{align}
The flow map $\gamma_t^\varepsilon(x)$ verifies the ODE
\begin{align}
\frac{d}{dt} \gamma_t^\varepsilon(x)
=
\frac{(1-t)\big(m_t^\varepsilon(\gamma_t^\varepsilon(x))-\gamma_t^\varepsilon(x)\big)+t\varepsilon^2 \gamma_t^\varepsilon(x)}{D_\varepsilon(t)},
\end{align}
which re-writes for $r_t^\varepsilon(x)$ as
\begin{equation}\label{eq:r_eps}
\frac{d}{dt} r_t^\varepsilon(x)
=
-\frac{1-t-t\varepsilon^2}{D_\varepsilon(t)}\,r_t^\varepsilon(x)
+
\frac{1-t}{D_\varepsilon(t)}\bigl(m_t^\varepsilon(\gamma_t^\varepsilon(x))-a_k\bigr)
+
\frac{t\varepsilon^2}{D_\varepsilon(t)}\,a_k.
\end{equation}
We now introduce the rescaled 
\begin{align}
q_t^\varepsilon(x):=\frac{r_t^\varepsilon(x)}{\sqrt{D_\varepsilon(t)}}.
\end{align}
in order to factor out the scale \(\sqrt{D_\varepsilon(t)}\) of \(r_t^\varepsilon\) near \(t=1\). It verifies the ODE:
\begin{align}\label{eq:q_eps}
\frac{d}{dt}q_t^\varepsilon(x)
&= 
F_\varepsilon(t),
\end{align}
where
\[
F_\varepsilon(t):=
\frac{1-t}{D_\varepsilon(t)^{3/2}}
\bigl(m_t^\varepsilon(\gamma_t^\varepsilon(x))-a_k\bigr)
+
\frac{t\varepsilon^2}{D_\varepsilon(t)^{3/2}}\,a_k.
\]

In order to study  $F_\epsilon$, we next estimate the norm $\norm{m_t^\varepsilon(z) - a_k}$ for any $z \in B(a_k,\rho)$. We have
\begin{equation}
\norm{m_t^\varepsilon(z)-a_k}
=
\norm{\sum_{i\neq k}\alpha_i^\varepsilon(t,z)(a_i-a_k)} \leq \sum_{i\neq k} \frac{\alpha_i^\varepsilon(t,z)}{\alpha_k^\varepsilon(t,z)}\norm{a_i-a_k}
\end{equation}

For $ i\neq k$, define
\begin{align}
\Delta_i(t,z) &:=\norm{z-ta_i}^2-\norm{z-ta_k}^2 \\
&= t\bigl(\norm{z-a_i}^2-\norm{z-a_k}^2\bigr)
-t(1-t)\big(\norm{a_i}^2-\norm{a_k}^2\big),
\end{align}
We have that $z\in B(a_k,\rho)$ implies $
\norm{z-a_i}^2-\norm{z-a_k}^2\ge c_0$
for some $c_0>0$. Thus, after increasing \(T_0\) if necessary, for all $t\in[T_0,1]$ and $z\in B(a_k,\rho)$, for any $i\neq k$, $\Delta_i(t,z)\ge c_0/2$
and
\begin{align}
\frac{\alpha_i^\varepsilon(t,z)}{\alpha_k^\varepsilon(t,z)}
=
\exp\!\left(
-\frac{\Delta_i(t,z)}{2((1-t)^2+t^2\varepsilon^2)}
\right)
\le
\exp\!\left(
-\frac{c_0}{4((1-t)^2+t^2\varepsilon^2)}
\right).
\end{align}
Summing over \(i\neq k\), we obtain constants \(c,C>0\)
such that 
\begin{equation}\label{eq:m_small_general}
\norm{m_t^\varepsilon(z)-a_k}
\le
C\exp\!\left(-\frac{c}{(1-t)^2+\varepsilon^2}\right)
\qquad
\text{for all }t\in[T_0,1],\ z\in B(a_k,\rho),\ 0<\varepsilon<\varepsilon_0.
\end{equation}
Hence, on \([T_0,\tau_\varepsilon]\), estimate~\eqref{eq:m_small_general} applies: the first term of $F_\varepsilon(t)$
is then bounded by
\begin{equation}
C\frac{1-t}{((1-t)^2+\varepsilon^2)^{3/2}}
\exp\!\left(-\frac{c}{(1-t)^2+\varepsilon^2}\right),
\end{equation}
whose integral on \([T_0,1]\) is uniformly bounded in $\varepsilon$. 
Note also that the last term of $F_\epsilon$ is uniformly bounded in $L^1([T_0, 1])$. Indeed:
\begin{equation}
I_{T_0} \eqdef \int_{T_0}^1 \frac{t\varepsilon^2}{D_\varepsilon(t)^{3/2}} dt\leq \int_{T_0}^1 \frac{\varepsilon^2}{((1-t)^2 + T_0^2 \varepsilon^2 )^{3/2}} dt = \frac{1}{T_0^2} \int_{T_0}^1 \frac{1}{(\frac{(1-t)^2}{T_0^2 \varepsilon^2} + 1)^{3/2}} \frac{1}{T_0 \varepsilon}dt
\end{equation}
and a change of variable $u = \frac{1-t}{T_0 \varepsilon}$ gives $I_{T_0} \leq  \frac{1}{T_0^2}$.

Thus, integrating~\eqref{eq:q_eps} on \([T_0,\tau_\varepsilon]\), we obtain
\begin{equation}
\norm{q_t^\varepsilon(x)}
\le
\norm{q_{T_0}^\varepsilon(x)}+C_1
\qquad\text{for all }t\in[T_0,\tau_\varepsilon].
\end{equation}
Multiplying by \(\sqrt{D_\varepsilon(t)}\), this gives
\begin{equation}
\norm{r_t^\varepsilon(x)}
\frac{\sqrt{D_\varepsilon(t)}}{\sqrt{D_\varepsilon(T_0)}}\norm{r_{T_0}^\varepsilon(x)}
+
C_1\sqrt{D_\varepsilon(t)} .
\end{equation}
Since $D_\varepsilon = (1-t)^2 + t^2 \varepsilon^2$ is convex in $t$, its maximum on $[T_0,1]$ is attained at one of the endpoints. Now, for $\varepsilon < \varepsilon_0 < \frac{1-T_0}{1+T_0}$, we get $D_\varepsilon(1) = \varepsilon^2  \leq D_\varepsilon(T_0) $, for all $t \in [T_0,1]$, $D_\varepsilon(1) \leq D_\varepsilon(T_0)$. Thus:
\begin{equation}
\norm{r_t^\varepsilon(x)}
\le\norm{r_{T_0}^\varepsilon(x)}
+
C_1\sqrt{D_\varepsilon(t)}
\qquad\text{for all }t\in[T_0,\tau_\varepsilon].
\end{equation}
Using \eqref{eq:init_small} and \(\sqrt{D_\varepsilon(t)}\le (1-t)+\varepsilon\), we get
\begin{equation}\label{eq:bootstrap_bound_general}
\norm{\gamma_t^\varepsilon(x)-a_k}
\le
\frac{\rho}{4}
+
C_1\bigl((1-t)+\varepsilon\bigr)
\qquad\text{for all }t\in[T_0,\tau_\varepsilon].
\end{equation}

Now choose \(T_0\) closer to \(1\), and decrease \(\varepsilon_0\) if necessary, so that
\begin{equation}
\frac{\rho}{4}
+
C_1\bigl((1-T_0)+\varepsilon_0\bigr)
<\rho.
\end{equation}
Then \eqref{eq:bootstrap_bound_general} implies
\begin{equation}
\gamma_t^\varepsilon(x)\in B(a_k,\rho)
\qquad\text{for all }t\in[T_0,\tau_\varepsilon].
\end{equation}
By the definition of \(\tau_\varepsilon\), this forces \(\tau_\varepsilon=1\).

We have therefore proved that there is $T_0 >0$ and $\varepsilon_0 >0$ such that for all \(t\in[T_0,1]\) and all \(0<\varepsilon<\varepsilon_0\),
\begin{equation}
\norm{\gamma_t^\varepsilon(x)-a_k}
\le
\rho
\end{equation}
\end{proof}
\begin{cor}\label{cor:cell_convergence}
Let
$
\Gamma^\varepsilon_k
=
\left\{
x\in\mathbb{R}^d:
\|\gamma^\varepsilon(x)-a_k\|
<
\|\gamma^\varepsilon(x)-a_l\|
\text{ for all } l\neq k
\right\}.
$
Then, for every $x\in \bigcup_{j=1}^n \Gamma_j$ and every $k\in\{1,\dots,n\}$,
$$
    \mathbf{1}_{\Gamma_k^\varepsilon}(x)
    \longrightarrow
    \mathbf{1}_{\Gamma_k}(x)
    \qquad\text{as }\varepsilon\to0.
$$
\end{cor}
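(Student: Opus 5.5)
The plan is to fix $x\in\bigcup_j\Gamma_j$, say $x\in\Gamma_j$, so that $\gamma(x)=a_j$, and apply \Cref{lem:GMM_0} at $t=1$, which gives $\gamma^\varepsilon(x)\to a_j$ as $\varepsilon\to0$. Once $\gamma^\varepsilon(x)$ is close to $a_j$, it is strictly closer to $a_j$ than to any other atom, so $x\in\Gamma_j^\varepsilon$. Since the generalized regions $\Gamma_l^\varepsilon$ are pairwise disjoint by their strict-inequality definition, it then follows that $x\notin\Gamma_k^\varepsilon$ for every $k\neq j$. On the other hand $x\in\Gamma_j$, and $x\notin\Gamma_k$ for $k\neq j$, because the $\Gamma_k$ are preimages of distinct points under the single-valued map $\gamma$. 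So both indicator families coincide for $\varepsilon$ small, which gives the pointwise convergence.

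Concretely, let $d:=\min_{l\neq j}\|a_l-a_j\|>0$, using that the atoms are pairwise distinct. Apply \Cref{lem:GMM_0} with $\delta=d/3$. This yields $\varepsilon_0>0$ such that $\|\gamma^\varepsilon(x)-a_j\|\le d/3$ for all $0<\varepsilon<\varepsilon_0$. For any $l\neq j$, the triangle inequality then gives
\[
\|\gamma^\varepsilon(x)-a_l\|\ \ge\ \|a_l-a_j\|-\|\gamma^\varepsilon(x)-a_j\|\ \ge\ d-\tfrac d3\ =\ \tfrac{2d}{3}\ >\ \tfrac d3\ \ge\ \|\gamma^\varepsilon(x)-a_j\|.
\]
Hence $x\in\Gamma_j^\varepsilon$, so $\mathbf 1_{\Gamma_j^\varepsilon}(x)=1=\mathbf 1_{\Gamma_j}(x)$. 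For $k\neq j$, membership in $\Gamma_k^\varepsilon$ would require $\|\gamma^\varepsilon(x)-a_k\|<\|\gamma^\varepsilon(x)-a_j\|$, which contradicts the display with $l=k$. So $\mathbf 1_{\Gamma_k^\varepsilon}(x)=0=\mathbf 1_{\Gamma_k}(x)$. The indicators are thus eventually equal, not merely convergent.

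There is no substantial obstacle, since all the analytic work is in \Cref{lem:GMM_0}. The only point to check is that its conclusion at $t=1$ indeed gives $\gamma^\varepsilon(x)=\gamma^\varepsilon_1(x)$ within $\delta$ of $\gamma_1(x)=a_j$, which is the stated "in particular" part. The restriction to $x\in\bigcup_j\Gamma_j$ is needed because \Cref{lem:GMM_0} is only available there. Points outside this union, where the limit $\gamma(x)$ is not an atom, are simply not covered by the corollary.
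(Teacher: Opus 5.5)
Your proposal is correct and follows the paper's proof essentially step for step: you invoke \Cref{lem:GMM_0} at $t=1$ to get $\gamma^\varepsilon(x)\to a_j$, then use the triangle inequality to show $\gamma^\varepsilon(x)$ is strictly closer to $a_j$ than to any other atom, so the indicators agree for all sufficiently small $\varepsilon$. The only cosmetic difference is that the paper works with the radius $\frac12\min_{l\neq j}\|a_j-a_l\|$, whereas you use $d/3$.
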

\begin{proof}
Let $x\in \bigcup_{j=1}^n \Gamma_j$. Then there exists a unique index $k\in\{1,\dots,n\}$ such that $x\in\Gamma_k$. By definition of $\Gamma_k$, this means that
$
    \gamma_1(x)=a_k.
$
By Proposition~\ref{lem:GMM_0}, we have
$
    \gamma^\varepsilon(x)\to a_k
$
as $\varepsilon\to0$.

Since the atoms $a_1,\dots,a_n$ are pairwise distinct, set
$
    r_k := \frac12 \min_{j\neq k}\|a_k-a_j\|>0.
$

Using Proposition~\ref{lem:GMM_0}, for sufficiently small $\varepsilon>0$, we have
$$
    \|\gamma^\varepsilon(x)-a_k\|<r_k
$$
and for every $j\neq k$,
$$
    \|\gamma^\varepsilon(x)-a_j\|
    \geq
    \|a_k-a_j\|-\|\gamma^\varepsilon(x)-a_k\|
    >
    2r_k-r_k
    =
    r_k.
$$
Therefore,
$
\|\gamma^\varepsilon(x)-a_k\|
    <
\|\gamma^\varepsilon(x)-a_j\|
$
for all $j\neq k$ and thus $x\in\Gamma_k^\varepsilon$. In particular, $x\notin\Gamma_j^\varepsilon$ for all $j\neq k$.
Consequently, for all sufficiently small $\varepsilon>0$,
$
    \mathbf{1}_{\Gamma_k^\varepsilon}(x)=1=\mathbf{1}_{\Gamma_k}(x),
$
while for every $j\neq k$,
$
    \mathbf{1}_{\Gamma_j^\varepsilon}(x)=0=\mathbf{1}_{\Gamma_j}(x).
$
\end{proof}

\section{Proofs for Topology of the FM Cells}
\label{appendix:proofs_topology}

\subsection{Proof of \Cref{prop:into_the_ball_implies_convergence_to_ak}}\label{proof:into_the_ball_implies_convergence_to_ak}

We start with proving stating that $\alpha_k$ converges to one in a ball around $a_k$ for late enough time. More precisely, we have the following lemma.
\begin{lem}\label{lem:alpha_abschaetzung}
Let $a_k\neq a_l$ for $k\neq l \in\{1,...,N\}$ and define $r<\frac12 \min_{k\neq l}\|a_k-a_l\|$. Then, there exists a time $t_0\in(0,1)$ and $\delta>0$ such that for all $t>t_0$ it holds
$$
\alpha_k(t,x)>\frac{\exp\left(\frac{\delta}{2(1-t)^2}\right)}{\exp\left(\frac{\delta}{2(1-t)^2}\right)+n-1}\to 1\quad\text{as }t\to 1.
$$
\end{lem}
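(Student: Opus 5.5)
The lemma is stated somewhat loosely: it omits the quantifier on $x$. It should be read as holding uniformly for all $x\in \bar B_r(a_k)$, since this is what \Cref{prop:into_the_ball_implies_convergence_to_ak} needs. The plan is to write the softmax as a ratio and bound the exponent gaps uniformly on the ball, exactly as in the proof of \Cref{lem:GMM_0} with $\varepsilon=0$.

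First write
\[
\alpha_k(t,x)=\frac{1}{1+\sum_{i\neq k}\exp\!\left(-\frac{\Delta_i(t,x)}{2(1-t)^2}\right)},\qquad \Delta_i(t,x)=\|x-ta_i\|^2-\|x-ta_k\|^2 .
\]
It then suffices to find $\delta>0$ and $t_0<1$ with $\Delta_i(t,x)\ge\delta$ for all $i\neq k$, all $x\in\bar B_r(a_k)$ and all $t>t_0$. Indeed, in that case each summand is at most $e^{-\delta/(2(1-t)^2)}$. Hence
\[
\alpha_k\ge \frac{1}{1+(n-1)e^{-\delta/(2(1-t)^2)}}=\frac{e^{\delta/(2(1-t)^2)}}{e^{\delta/(2(1-t)^2)}+n-1}.
\]
To get the strict inequality in the statement, use $\delta/2$ in the exponent, or note that the bound on $\Delta_i$ can be taken strict. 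The limit $\to1$ as $t\to1$ is immediate.

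The key step is the uniform lower bound on $\Delta_i$. Expanding gives
\[
\Delta_i(t,x)=t\bigl(\|x-a_i\|^2-\|x-a_k\|^2\bigr)-t(1-t)\bigl(\|a_i\|^2-\|a_k\|^2\bigr).
\]
For $x\in\bar B_r(a_k)$ we have $\|x-a_k\|\le r$ and $\|x-a_i\|\ge\|a_i-a_k\|-r\ge 2\rho-r$, where $2\rho:=\min_{k\neq l}\|a_k-a_l\|>2r$. So the first bracket is at least $c_0:=(2\rho-r)^2-r^2>0$; this is exactly where the strict inequality $r<\rho$ is used. The second term is bounded in absolute value by $(1-t)M$ with $M=\max_{i}\|a_i\|^2$. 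Choose $t_0\ge 1/2$ so large that $(1-t_0)M\le c_0/4$. Then for $t>t_0$ we get $\Delta_i\ge t c_0-c_0/4\ge c_0/4=:\delta$.

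There is no real obstacle. The only care needed is to make the bound uniform in $x$ over the closed ball and in $i\neq k$ (a finite family), and to handle the factor $t$ in front of the positive term, which is why $t_0\ge1/2$ is imposed.
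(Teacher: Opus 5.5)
Your proof is correct and follows the same route as the paper. Both arguments obtain a uniform positive gap $\|x-ta_l\|^2-\|x-ta_k\|^2\geq\delta$ for all $x\in\bar B_r(a_k)$ and all $t>t_0$, and then rewrite $\alpha_k=1/\bigl(1+\sum_{l\neq k}e^{-\Delta_l/(2(1-t)^2)}\bigr)$. Your explicit expansion of $\Delta_i$ makes quantitative the passage from $t=1$ to $t>t_0$ that the paper only asserts, and you are right that the statement is meant uniformly for $x\in\bar B_r(a_k)$.
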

\begin{proof}
There exists $\delta>0$ such that for all $x\in\bar B_r(a_k)$ and $l\neq k$  it holds
$$
\|x-a_k\|^2<\|x-a_l\|^2-2\delta.
$$
Thus there exists $t_0\in(0,1)$ such that for all $t>t_0$it holds
$$
\|x-ta_k\|^2<\|x-ta_l\|^2-\delta.
$$
Furthermore, we have
$$
\tilde\alpha_{k,l}(t,x)\coloneqq \frac{\exp\left(-\frac{\|x-ta_k\|^2}{2(1-t)^2}\right)}{\exp\left(-\frac{\|x-ta_l\|^2}{2(1-t)^2}\right)}>\exp\left(\frac{\delta}{2(1-t)^2}\right).
$$
This implies (for such $t,x$)
$$
\alpha_k(t,x)=\frac{1}{1+\sum_{k\neq l}\frac{1}{\tilde\alpha_{k,l}(t,x)}}>\frac{1}{1+(n-1)\exp\left(-\frac{\delta}{2(1-t)^2}\right)}=\frac{\exp\left(\frac{\delta}{2(1-t)^2}\right)}{\exp\left(\frac{\delta}{2(1-t)^2}\right)+n-1} $$
\end{proof}

This allows us now to prove that the velocity $v_t(x)$ on $\partial B_r(a_k)$ points inwards the ball for large enough time. We formalize
this observation in the following lemma.
\begin{lem}\label{lem:velocity_points_inwards}
Let $a_k\neq a_l$ for $k\neq l \in\{1,...,N\}$ and define $r<\frac12 \min_{k\neq l}\|a_k-a_l\|$. Then there exists $\underline{t}_r\in(0,1)$ such that for all $t>\underline{t}_r$ and $x\in\partial B_r(a_k)$ we have that $\langle a_k-x,v_t(x)\rangle>0$.
\end{lem}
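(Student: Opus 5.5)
The plan is to write the velocity on the sphere as a main term pointing towards $a_k$ plus a small remainder, and then use \Cref{lem:alpha_abschaetzung} to show the remainder is negligible. For $x\in\partial B_r(a_k)$ we have $\|a_k-x\|=r$, so
\begin{equation}
(1-t)\langle a_k-x, v_t(x)\rangle=\sum_{l=1}^n\alpha_l(t,x)\langle a_k-x,a_l-x\rangle
=\alpha_k(t,x)\,r^2+\sum_{l\neq k}\alpha_l(t,x)\langle a_k-x,a_l-x\rangle .
\end{equation}
Since $1-t>0$, it suffices to show that the right-hand side is positive.

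For the first step, we bound the off-diagonal terms crudely. By Cauchy--Schwarz, $|\langle a_k-x,a_l-x\rangle|\le r\,(\|a_l-a_k\|+r)\le rM$, where $M:=\max_{l}\|a_l-a_k\|+r$ does not depend on $x$ or $t$. Since $\sum_{l\neq k}\alpha_l=1-\alpha_k$, the right-hand side is at least
\begin{equation}
\alpha_k(t,x)r^2-(1-\alpha_k(t,x))rM .
\end{equation}
This is positive as soon as $\alpha_k(t,x)>M/(r+M)$, which is a fixed number strictly below $1$.

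For the second step, we make $\alpha_k$ large uniformly on the sphere. \Cref{lem:alpha_abschaetzung}, applied on $\bar B_r(a_k)\supseteq\partial B_r(a_k)$, gives $t_0$ and $\delta>0$ with $\alpha_k(t,x)>\exp(\delta/(2(1-t)^2))/(\exp(\delta/(2(1-t)^2))+n-1)$ for all $t>t_0$, uniformly in $x$. This lower bound tends to $1$ as $t\to1$. We therefore choose $\underline t_r\ge t_0$ such that the bound exceeds $M/(r+M)$ for all $t>\underline t_r$, which concludes the argument.

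The main obstacle is that the lower bound on $\alpha_k$ must hold uniformly over the sphere, not merely pointwise. This is supplied by the proof of \Cref{lem:alpha_abschaetzung}: $\delta$ is chosen from a strict inequality on the compact set $\bar B_r(a_k)$, which is valid because $r$ is less than half the minimal separation between the atoms. The perturbation $ta_l$ versus $a_l$ is then handled by taking $t_0$ close to $1$ uniformly in $x$. So this step reduces to checking that the lemma is indeed uniform in $x\in\bar B_r(a_k)$. With $\underline t_r$ chosen this way, it depends only on $r$ and the configuration of the $a_l$, and we may take the maximum over $k$.
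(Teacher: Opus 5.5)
Your proposal is correct and follows essentially the same route as the paper: you expand $(1-t)\langle a_k-x,v_t(x)\rangle$, bound the off-diagonal terms by Cauchy--Schwarz to get a lower bound of the form $\alpha_k r^2 - rM(1-\alpha_k)$, and then use the uniform convergence $\alpha_k\to 1$ on $\bar B_r(a_k)$ from \Cref{lem:alpha_abschaetzung}. Your explicit threshold $\alpha_k>M/(r+M)$ and your check that the lemma holds uniformly in $x$ spell out steps the paper leaves implicit.
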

\begin{proof}
We have for $M\coloneqq \max_{l}\|a_k-a_l\|$
\begin{align*}
(1-t)\langle a_k-x,v_t(x)\rangle &= \alpha_k(t,x)\|a_k-x\|^2 + \sum_{l\neq k} \alpha_l(t,x)\langle a_k-x,a_l-x\rangle\\
&\geq \alpha_k(t,x) r^2 - \sum_{l\neq k}\alpha_l(t,x)\|a_k-x\|\|a_l-x\|\\
&\geq \alpha_k(t,x) r^2 - r(M+r)\sum_{l\neq k}\alpha_l(t,x)\\
&=\alpha_k(t,x) r^2 - r(M+r)(1-\alpha_k(t,x))
\end{align*}
which converges to $r^2$ for $\alpha_k(t,x)\to 1$.
Since $\alpha_k(t,x)$ converges to $1$ uniformly on $\bar B_r(a_k)$ by \Cref{lem:alpha_abschaetzung}, we obtain that there exists some time $\underline{t}_r$ such that the above expression is positive for all $t>\underline{t}_r$.
\end{proof}

This lemma directly implies that a trajectory will never leave the ball $\bar B_r(a_k)$ once it entered it at late enough time as formalized in the following corollary.
\begin{cor}
\label{cor:stable_balls}
	Let $x \in \R^d$, if there exists $\underline{t}_r <t_0 < 1$ such that $\gamma_{t_0}(x) \in  \bar B_r(a_k)$ then $\forall t\geq t_0, \gamma_t(x) \in \bar B_r(a_k)$.
\end{cor}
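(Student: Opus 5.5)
We argue by contradiction, using \Cref{lem:velocity_points_inwards}. Fix $t_0>\underline{t}_r$ with $\gamma_{t_0}(x)\in\bar B_r(a_k)$, and suppose some later time $t_2>t_0$ has $\gamma_{t_2}(x)\notin\bar B_r(a_k)$. Consider the continuous function
\[
\phi(t)\coloneqq \|\gamma_t(x)-a_k\|^2 .
\]
It is well defined on $[t_0,1)$, since the flow exists and is smooth there (\Cref{sec:existence_uniqueness}). We have $\phi(t_0)\le r^2$ and $\phi(t_2)>r^2$. Define the last exit time
\[
s\coloneqq\sup\{t\in[t_0,t_2]:\phi(t)\le r^2\}.
\]
By continuity, $\phi(s)=r^2$, so $\gamma_s(x)\in\partial B_r(a_k)$. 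Moreover $\phi(t)>r^2$ for all $t\in(s,t_2]$.

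Next we compute the derivative at $s$. Because $t\mapsto\gamma_t(x)$ is $C^1$ on $[0,1)$,
\[
\phi'(s)=2\langle \gamma_s(x)-a_k, v_s(\gamma_s(x))\rangle = -2\langle a_k-\gamma_s(x), v_s(\gamma_s(x))\rangle .
\]
Since $s\ge t_0>\underline{t}_r$ and $\gamma_s(x)\in\partial B_r(a_k)$, \Cref{lem:velocity_points_inwards} gives $\phi'(s)<0$. Hence $\phi(t)<\phi(s)=r^2$ for all $t$ in a small right neighbourhood of $s$. If $s<t_2$, this contradicts $\phi>r^2$ on $(s,t_2]$. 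The case $s=t_2$ is impossible because $\phi(t_2)>r^2$. Therefore $\gamma_t(x)\in\bar B_r(a_k)$ for every $t\in[t_0,1)$.

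The case $t=1$ follows by passing to the limit: $\gamma_1(x)=\lim_{t\to1}\gamma_t(x)$ (\Cref{sec:existence_uniqueness}), and $\bar B_r(a_k)$ is closed.

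The only delicate point is the boundary case $t_0$ itself with $\gamma_{t_0}(x)\in\partial B_r(a_k)$. It is handled by the same strict-derivative argument, because $s\ge t_0>\underline{t}_r$ always.
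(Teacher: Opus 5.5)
Your proof is correct and follows the paper's approach. The paper gives no separate proof and simply notes that the corollary follows directly from \Cref{lem:velocity_points_inwards}. Your last-exit-time argument with $\phi(t)=\|\gamma_t(x)-a_k\|^2$, using $s\ge t_0>\underline{t}_r$ so that $\phi'(s)<0$ on the sphere, is the standard way to make that implication rigorous.
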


Finally, we can prove the desired proposition, by putting convergence rates onto the ``inward pointing lemma'' \Cref{lem:velocity_points_inwards}.

\begin{proof}[Proof of \Cref{prop:into_the_ball_implies_convergence_to_ak}]
By contradiction, we suppose that $\gamma_{t}(x) \in  \bar B_r(a_k)$ for $\underline{t}_r < t < 1$ and $\gamma_t(x)$ does not converge to $a_k$ when $t \to 1$.
With this assumption, there exists $0 <\varepsilon_0 < r$ such that for all $\underline{t}_{\varepsilon_0} < T<1 $ there exists $t > T$ such that $\|\gamma_{t}(x)-a_k\| > 2\varepsilon_0$. 
It implies that there exists a sequence $(t_n)$ such that $t_n \to 1$, $t_n > \underline{t}_{\varepsilon_0}$ and $\|\gamma_{t_n}(x)-a_k\| > 2\varepsilon_0$. 
By \Cref{cor:stable_balls}, it implies that $\forall t \geq t_1$, $\|\gamma_{t}(x)-a_k\| > \varepsilon_0$.
Consequently, for all $t \geq t_1$ and with $M\coloneqq \max_{l}\|a_k-a_l\|$,
$$
\begin{aligned}
&\quad (1-t)\langle a_k-\gamma_t(x),v_t(\gamma_t(x))\rangle \\
&= \alpha_k(t,\gamma_t(x))\|a_k-\gamma_t(x)\|^2 + \sum_{l\neq k} \alpha_l(t,\gamma_t(x))\langle a_k-\gamma_t(x),a_l-\gamma_t(x)\rangle\\
&\geq \alpha_k(t,\gamma_t(x)) \varepsilon_0^2 - \sum_{l\neq k}\alpha_l(t,\gamma_t(x))\|a_k-\gamma_t(x)\|\|a_l-\gamma_t(x)\|\\
&\geq \alpha_k(t,\gamma_t(x)) \varepsilon_0^2  - r(M+r)\sum_{l\neq k}\alpha_l(t,\gamma_t(x))\\
&=\alpha_k(t,\gamma_t(x)) \varepsilon_0^2  - r(M+r)(1-\alpha_k(t,\gamma_t(x))) \\
& = \alpha_k(t,\gamma_t(x))(\varepsilon_0^2 +r^2+rM)- r(M+r) \\
& > \frac{\exp\left(\frac{\delta}{2(1-t)^2}\right)}{\exp\left(\frac{\delta}{2(1-t)^2}\right)+n-1}(\varepsilon_0^2 +r^2+rM)- r(M+r) \\
& = \frac{\exp\left(\frac{\delta}{2(1-t)^2}\right)(\varepsilon_0^2 +r^2+rM- r(M+r))- r(M+r)(n-1)}{\exp\left(\frac{\delta}{2(1-t)^2}\right)+n-1} \\
& = \frac{\exp\left(\frac{\delta}{2(1-t)^2}\right)\varepsilon_0^2 - r(M+r)(n-1)}{\exp\left(\frac{\delta}{2(1-t)^2}\right)+n-1} \\
& = \varepsilon_0^2-\frac{  (\varepsilon_0^2+r(M+r))(n-1)}{\exp\left(\frac{\delta}{2(1-t)^2}\right)+n-1} \\
\end{aligned}
$$

Let us consider $F(t) = \frac12\|\gamma_t(x)-a_k\|^2$, for all $t \geq t_1$,

$$
\begin{aligned}
	\dot F(t)
    &= \langle \gamma_t(x)-a_k,\dot\gamma_t(x)\rangle
	= -\langle a_k-\gamma_t(x),v_t(\gamma_t(x))\rangle\\
	&\leq \frac{1}{1-t}\frac{(\varepsilon_0^2+r(M+r))(n-1)}{\exp\left(\frac{\delta}{2(1-t)^2}\right)+n-1}-\frac{\varepsilon_0^2}{1-t}
\end{aligned}
$$

And, by integrating between $t_1$ and $t \geq t_1$,
$$
\begin{aligned}
	F(t)
	& \leq F(t_1) + \int_{t_1}^t\frac{1}{1-s}\frac{(\varepsilon_0^2+r(M+r))(n-1)}{\exp\left(\frac{\delta}{2(1-s)^2}\right)+n-1}ds- \int_{t_1}^t\frac{\varepsilon_0^2}{1-s}ds \\
	& = F(t_1) + (\varepsilon_0^2+r(M+r))(n-1)\underbrace{\int_{t_1}^t\frac{1}{1-s}\frac{1}{\exp\left(\frac{\delta}{2(1-s)^2}\right)+n-1}ds}_{<+\infty}+\varepsilon_0^2[\log(1-s)]_{t_1}^t \\
    & = C_{t_1,r,\varepsilon_0}+\varepsilon_0^2\log(1-t) \to -\infty \quad\text{as }t\to 1.
\end{aligned}
$$
Since we know that $F(t) \geq 0$, this is a contradiction and we are done.
\end{proof}

\newpage
\subsection{Proofs of the Topological Consequences}
\label{sec:proofs_topological}

\begin{prop}
\label{cor:Gamma_k_are_open_sets}
    For $1 \leq k \leq n$, $\Gamma_k$ is an open set.
\end{prop}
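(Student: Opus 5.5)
Fix $r<\frac12\min_{k\neq l}\|a_k-a_l\|$ and let $\underline{t}_r$ be the time given by \Cref{prop:into_the_ball_implies_convergence_to_ak}. The plan is to write $\Gamma_k$ as a union of open sets:
\begin{equation}
\Gamma_k=\bigcup_{t\in(\underline{t}_r,1)}\gamma_t^{-1}\bigl(B_r(a_k)\bigr).
\end{equation}
Each set in this union is open, because $\gamma_t$ is continuous for $t<1$: it is the flow map of a velocity field that is globally Lipschitz on $[0,t]\times\R^d$ (see \Cref{sec:existence_uniqueness}), and in fact it is a bi-Lipschitz diffeomorphism. The preimage of the open ball $B_r(a_k)$ under $\gamma_t$ is therefore open, and a union of open sets is open. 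So everything reduces to proving the set equality above.

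For the inclusion ``$\supseteq$'', take $x$ with $\gamma_t(x)\in B_r(a_k)\subseteq\bar B_r(a_k)$ for some $t\in(\underline{t}_r,1)$. \Cref{prop:into_the_ball_implies_convergence_to_ak} applies directly and gives $\gamma_1(x)=a_k$, i.e.\ $x\in\Gamma_k$.

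For the inclusion ``$\subseteq$'', take $x\in\Gamma_k$, so $\gamma_1(x)=a_k$. Here $\gamma_1(x)$ is understood as $\lim_{t\to1}\gamma_t(x)$, the continuous extension of the trajectory to $t=1$. Since $\gamma_t(x)\to a_k$, there is $t_1<1$ with $\|\gamma_t(x)-a_k\|<r$ for all $t>t_1$. Choosing any $t>\max(t_1,\underline{t}_r)$ then gives $x\in\gamma_t^{-1}(B_r(a_k))$.

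The only step that needs care is the meaning of $\gamma_1$. In \Cref{sec:existence_uniqueness}, finite trajectory length, and hence the extension to $t=1$, was only established $\mu_0$-almost everywhere. I will therefore take the definition of $\Gamma_k$ to mean the set of $x$ for which $\lim_{t\to1}\gamma_t(x)$ exists and equals $a_k$. With that reading the argument above is complete and needs no continuity of $\gamma_1$ itself. It also makes no claim about points whose trajectories fail to converge: by \Cref{prop:into_the_ball_implies_convergence_to_ak}, any trajectory that enters some $\bar B_r(a_k)$ after time $\underline{t}_r$ does converge, so such points lie outside every $\Gamma_k$ anyway.
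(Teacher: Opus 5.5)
Your proof is correct and follows the paper's argument essentially verbatim. Both proofs write $\Gamma_k=\bigcup_{t\in(\underline{t}_r,1)}\gamma_t^{-1}(B_r(a_k))$, get ``$\subseteq$'' from convergence of the trajectory to $a_k$ and ``$\supseteq$'' from \Cref{prop:into_the_ball_implies_convergence_to_ak}, and conclude because each preimage is open. Your justification of openness via continuity of $\gamma_t$ (rather than of $\gamma_t^{-1}$, as the paper phrases it) and your explicit reading of $\gamma_1(x)$ as $\lim_{t\to1}\gamma_t(x)$ are slightly more precise than the paper's wording.
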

\begin{proof}
    For $1 \leq k \leq n$, let us show that $\Gamma_k = \bigcup_{\underline{t}_r < t < 1} \gamma_t^{-1}(B_r(a_k))$ which is an union of open set because $\gamma_t^{-1}$ is well-defined and continuous for $t<1$.

    First, $\Gamma_k \subset \bigcup_{\underline{t}_r < t < 1} \gamma_t^{-1}(B_r(a_k))$.
    Indeed, if $\lim_{t \rightarrow 1^{-}}\gamma_t(x) = a_k$, there exists a time $t_0$ such that $\forall t \geq t_0, \gamma_t(x) \in B_r(a_k)$.

    Second, by \Cref{prop:into_the_ball_implies_convergence_to_ak} and \Cref{cor:stable_balls}, $\bigcup_{t < \underline{t}_r < 1} \gamma_t^{-1}(B_r(a_k)) \subset \Gamma_k$.
    Indeed, if we suppose that $x \in  \gamma_{t_0}^{-1}(B_r(a_k))$ for $\underline{t}_r < t_0 <  1$ then by \Cref{prop:into_the_ball_implies_convergence_to_ak}, $x \in \Gamma_k$.
\end{proof}

\begin{prop}
\label{prop:Gamma_k_connected}
 $\Gamma_k$ is path-connected.
\end{prop}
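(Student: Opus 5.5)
Take two points $x,y\in\Gamma_k$. We will connect them by pushing both forward with the flow $\gamma_t$ at some late time $t$, joining the images inside the ball $B_r(a_k)$, and pulling the joining path back with $\gamma_t^{-1}$.

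First we choose the time. Fix $r<\frac12\min_{k\neq l}\|a_k-a_l\|$ and let $\underline{t}_r$ be as in \Cref{prop:into_the_ball_implies_convergence_to_ak}. Since $\gamma_t(x)\to a_k$ and $\gamma_t(y)\to a_k$ as $t\to1$, there are times $t_x,t_y>\underline{t}_r$ with $\gamma_{t_x}(x)\in B_r(a_k)$ and $\gamma_{t_y}(y)\in B_r(a_k)$. Set $t_0=\max(t_x,t_y)$. By \Cref{cor:stable_balls}, both trajectories stay in $\bar B_r(a_k)$ for all $t\ge t_0$.

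Next we build the path. Let $\sigma(s)=(1-s)\gamma_{t_0}(x)+s\gamma_{t_0}(y)$ be the segment between the two images. By convexity of the ball, $\sigma$ stays in $\bar B_r(a_k)$. Now define $\phi(s)=\gamma_{t_0}^{-1}(\sigma(s))$.

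We check that $\phi$ is a valid path in $\Gamma_k$:
\begin{itemize}
\item $\phi$ is continuous, because $\gamma_{t_0}$ is a homeomorphism for $t_0<1$ (it is a bi-Lipschitz diffeomorphism by \Cref{sec:existence_uniqueness}).
\item Its endpoints are $\phi(0)=x$ and $\phi(1)=y$.
\item For each $s$, the point $z=\phi(s)$ satisfies $\gamma_{t_0}(z)=\sigma(s)\in\bar B_r(a_k)$ with $\underline{t}_r<t_0<1$. So \Cref{prop:into_the_ball_implies_convergence_to_ak} gives $\gamma_1(z)=a_k$, i.e.\ $z\in\Gamma_k$.
\end{itemize}
Hence $\phi$ is a continuous path in $\Gamma_k$ from $x$ to $y$, which proves path-connectedness.

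The only delicate step is that $\gamma_{t_0}$ must be defined and invertible on all of $\R^d$ and not merely $\mu_0$-almost everywhere. This holds because $v_t$ is globally Lipschitz on $[0,t_0]\times\R^d$, so the flow is a global diffeomorphism up to time $t_0$. We only need the closed ball here, which is exactly what \Cref{prop:into_the_ball_implies_convergence_to_ak} covers. Nonemptiness of $\Gamma_k$ is not needed, since path-connectedness of the empty set is vacuous.
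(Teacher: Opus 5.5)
Your proof is correct and is essentially the paper's argument: push both points forward to a common late time $t_0>\underline{t}_r$ at which both images lie in $\bar B_r(a_k)$ (via \Cref{cor:stable_balls}), join the images by the straight segment inside the convex ball, pull this segment back with $\gamma_{t_0}^{-1}$, and apply \Cref{prop:into_the_ball_implies_convergence_to_ak} to see that the path stays in $\Gamma_k$. You also state explicitly that $\gamma_{t_0}$ is a global homeomorphism because $v_t$ is globally Lipschitz on $[0,t_0]\times\R^d$, a justification the paper's proof leaves implicit.
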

\begin{proof}
Let $x_0,x_1\in \Gamma_k$ and take $r$ and $\underline{t}_r$ from \Cref{lem:velocity_points_inwards}. Since the mappings $t\mapsto \gamma_t(x_0)$ and $t\mapsto \gamma_t(x_1)$ are continuous, there exist time points $t_0, {t_1\in(\overline{t}},1)$ such that $\gamma_{{t_0}}(x_0),\gamma_{{t_1}}(x_1)\in \bar B_r(a_k)$.
Given \Cref{cor:stable_balls}, it implies that for all $t \geq t_0, \gamma_{t}(x_0) \in \bar{B}(a_k,r)$ and for all $t \geq t_1, \gamma_{t}(x_1) \in \bar{B}(a_k,r)$.
Without loss of generality we can assume that {$t_0>t_1$}. 
Now, we define $x_c=\gamma_{t_0}^{-1}((1-c)\gamma_{t_0}(x_0)+c\gamma_{t_0}(x_1))$ and note that the path $c\mapsto x_c$ is continuous and connects $x_0$ and $x_1$.
It remains to show that that $x_c\in\Gamma_k$. This is provided by \Cref{prop:into_the_ball_implies_convergence_to_ak}. Let $c \in [0,1]$, $\gamma_{t_0}(x_c) = (1-c)\gamma_{t_0}(x_0)+c\gamma_{t_0}(x_1) \in \bar B_r(a_k)$ (by convexity of the ball). Because $t_0 \geq \underline{t}_r$, by \Cref{prop:into_the_ball_implies_convergence_to_ak}, $x_c \in \Gamma_k$.
\end{proof}

To show that $\Gamma_k$ is actually simply connected, we first prove that for any compact subset $A\subseteq\Gamma_k$ we have that $\gamma_t(A)\subseteq \bar B_r(a_k)$ for $r$ small enough and $t$ large enough. More precisely, we have the following lemma.
\begin{lem}\label{lem:compact_sets_in_Br}
Let $A\subset \Gamma_k$ be compact and $r< \frac{1}{2}\min_{l\neq k}\|a_k-a_l\|$. Then, there exists some time $t^*\in(0,1)$ such that for all $t>t^*$ it holds $\gamma_t(A)\subseteq \bar B_r(a_k)$
\end{lem}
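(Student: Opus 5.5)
The plan is a compactness argument built on the open cover from the proof of \Cref{cor:Gamma_k_are_open_sets}, combined with the trapping property of \Cref{cor:stable_balls}.

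First, I may shrink $r$ if needed so that $r<\frac12\min_{k\neq l}\|a_k-a_l\|$. This lets \Cref{lem:velocity_points_inwards}, \Cref{cor:stable_balls} and \Cref{prop:into_the_ball_implies_convergence_to_ak} all be applied with this same $r$. Let $\underline{t}_r$ be the threshold time they provide.

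Next, for each $x\in A\subseteq\Gamma_k$ we have $\gamma_t(x)\to a_k$. So there is a time $t_x\in(\underline{t}_r,1)$ with $\gamma_{t_x}(x)\in B_r(a_k)$, the open ball. Set
\[
U_x\coloneqq\gamma_{t_x}^{-1}(B_r(a_k)).
\]
Since $\gamma_{t_x}$ is a homeomorphism (indeed a bi-Lipschitz diffeomorphism for $t<1$), $U_x$ is an open neighbourhood of $x$. The family $\{U_x\}_{x\in A}$ is therefore an open cover of $A$. By compactness, finitely many sets $U_{x_1},\dots,U_{x_m}$ already cover $A$. Set $t^*\coloneqq\max_i t_{x_i}<1$.

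Now take $y\in A$ and $t>t^*$. Then $y\in U_{x_i}$ for some $i$, so $\gamma_{t_{x_i}}(y)\in B_r(a_k)\subseteq\bar B_r(a_k)$ with $t_{x_i}>\underline{t}_r$. \Cref{cor:stable_balls} then gives $\gamma_s(y)\in\bar B_r(a_k)$ for all $s\ge t_{x_i}$, and in particular for $s=t$, since $t>t^*\ge t_{x_i}$. Hence $\gamma_t(A)\subseteq\bar B_r(a_k)$ for all $t>t^*$.

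The only delicate point is the forward invariance used in the last step. \Cref{cor:stable_balls} is stated as a consequence of \Cref{lem:velocity_points_inwards}, and its justification is a standard barrier argument. Let $F(s)=\frac12\|\gamma_s(y)-a_k\|^2$. If the trajectory left $\bar B_r(a_k)$, there would be a last time $s_0$ with $F(s_0)=r^2/2$ and $F>r^2/2$ just after $s_0$. But at $s_0$ we have
\[
\dot F(s_0)=-\langle a_k-\gamma_{s_0}(y),\,v_{s_0}(\gamma_{s_0}(y))\rangle<0,
\]
which is a contradiction. I take this corollary as given. Beyond it, the argument needs only that $\gamma_t$ is continuous for $t<1$, which is what makes each $U_x$ open.
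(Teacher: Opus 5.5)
Your proposal is correct and takes essentially the same approach as the paper: cover $A$ by the open sets $\gamma_t^{-1}(B_r(a_k))$ with $t>\underline{t}_r$, extract a finite subcover, let $t^*$ be the largest of the finitely many times, and conclude with \Cref{cor:stable_balls}. The only cosmetic difference is that the paper merges the subcover into the single set $\gamma_{t^*}^{-1}(B_r(a_k))$ by using that these sets increase in $t$. You instead apply the trapping corollary pointwise from each $t_{x_i}$, which avoids needing that nestedness for the open ball.
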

\begin{proof}
    Let $\underline{t}_r$ be from \Cref{prop:into_the_ball_implies_convergence_to_ak}. Then,
    we denote $S_k(t) \eqdef \gamma_t^{-1}(B_r(a_k))$ and observe that $\Gamma_k = \cup_{t\geq t_r} S_k(t)$ with increasing $S_k(t)$.
    Since $A$ is compact and the $S_k(t)$ for $t\geq t_r$ are open and covering $A$, there exists a finite subset $S_k(t_1), ..., S_k(t_T)$ such that $A\subset \cup_{i=1}^T S_k(t_T)=S_k(t^*)$ for $t^*=\max(t_1,...,t_T)$.
    In other words, $\gamma_{t^*}(A)\subseteq \bar B_r(a_k)$ which implies by \Cref{prop:into_the_ball_implies_convergence_to_ak,cor:stable_balls} also that $\gamma_{t}(A)\subseteq \bar B_r(a_k)$ for all $t\geq t^*$.
\end{proof}

Applying this lemma to an arbitrary loop yields the following proposition.

\begin{prop}
\label{prop:no_holes}
    $\Gamma_k$ is simply connected.
\end{prop}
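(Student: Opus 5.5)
Path-connectedness is already given by \Cref{prop:Gamma_k_connected}, so it remains to show that every loop in $\Gamma_k$ is null-homotopic inside $\Gamma_k$. The idea is to transport the loop by the flow to a late time, where it sits inside a ball around $a_k$. We contract it there using the convexity of the ball, and then pull the contraction back with the inverse flow map. This inverse is a homeomorphism because $\gamma_t$ is a bi-Lipschitz diffeomorphism for $t<1$.

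Fix a loop $f\colon S^1\to\Gamma_k$ and $r<\frac12\min_{l\neq k}\|a_k-a_l\|$. Let $\underline{t}_r$ be the time from \Cref{prop:into_the_ball_implies_convergence_to_ak}. The image $A\coloneqq f(S^1)$ is compact, being the continuous image of a compact set. By \Cref{lem:compact_sets_in_Br} there is $t^*$ with $\gamma_t(A)\subseteq\bar B_r(a_k)$ for all $t>t^*$. Fix one time $t_0>\max(t^*,\underline{t}_r)$ and define
\begin{equation}
H(s,x)\coloneqq \gamma_{t_0}^{-1}\bigl((1-s)\,\gamma_{t_0}(f(x))+s\,a_k\bigr),\qquad s\in[0,1],\ x\in S^1 .
\end{equation}
This map has the required properties:
\begin{enumerate}[nosep,leftmargin=2em]
\item[(a)] $H$ is continuous, since it is a composition of $f$, $\gamma_{t_0}$, an affine map and $\gamma_{t_0}^{-1}$, all continuous for $t_0<1$.
\item[(b)] $H(0,\cdot)=f$.
\item[(c)] $H(1,\cdot)\equiv\gamma_{t_0}^{-1}(a_k)$ is constant.
\item[(d)] Each point $(1-s)\gamma_{t_0}(f(x))+s a_k$ lies in $\bar B_r(a_k)$, because that ball is convex and contains both endpoints. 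Since $\gamma_{t_0}(H(s,x))\in\bar B_r(a_k)$ with $t_0>\underline{t}_r$, \Cref{prop:into_the_ball_implies_convergence_to_ak} gives $\gamma_1(H(s,x))=a_k$, that is, $H(s,x)\in\Gamma_k$.
\end{enumerate}
So $H$ is a homotopy in $\Gamma_k$ from $f$ to a constant loop, and $\Gamma_k$ is simply connected.

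The substantive step is securing the uniform time $t_0$ for the whole loop. This is exactly \Cref{lem:compact_sets_in_Br}, which relies on the openness of the sets $\gamma_t^{-1}(B_r(a_k))$ and on their monotonicity in $t$ (from \Cref{cor:stable_balls}), and so is already available. The remaining detail is to check that $\gamma_{t_0}$ is a homeomorphism of $\R^d$, so that $\gamma_{t_0}^{-1}$ is defined on the entire segment. This follows from the global Lipschitz property of $v_t$ on $[0,t_0]\times\R^d$ noted in \Cref{sec:existence_uniqueness}: the backward flow from time $t_0$ exists globally and provides a continuous inverse.
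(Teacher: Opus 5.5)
Your proposal is correct and follows essentially the same route as the paper: \Cref{lem:compact_sets_in_Br} pushes the whole loop into $\bar B_r(a_k)$ at a single late time, you contract it there and pull back by $\gamma_{t_0}^{-1}$, and \Cref{prop:into_the_ball_implies_convergence_to_ak} keeps the homotopy inside $\Gamma_k$. Your explicit straight-line homotopy and your choice $t_0>\max(t^*,\underline{t}_r)$ make precise what the paper leaves implicit when it asserts $\gamma_{t^*}^{-1}(\bar B_r(a_k))\subseteq\Gamma_k$ and appeals to that set being diffeomorphic to a ball.
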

\begin{proof}
    Take a continuous loop $f : S^1 \to \Gamma_k$. We have to build a homotopy $H : S^1 \times [0,1] \to \Gamma_k$ s.t. $H(.,0) = f$ and $H(., 1)$ is a constant loop.
    By \Cref{lem:compact_sets_in_Br}, we know that there exists some $t^*$ such that $f(S^1)\subseteq  \gamma_{t^*}^{-1}(\bar B_r(a_k))\subseteq \Gamma_k$.
    Since $\gamma_{t^*}^{-1}(\bar B_r(a_k))$ is diffeomorphic to a ball and thus simply connected and there exists a continuous $H_{t^*} : S^1 \times [0,1] \to \gamma_{t^*}^{-1}(\bar B_r(a_k))$ such that $H_{t^*}(.,0) = f$ and $H_{t^*}(., 1)$ is a constant loop.
\end{proof}

\subsection{Existence of the limit of $\gamma_t^{-1}(a_k)$ for $t\to\infty$ and Contractibility}\label{appendix:zentren}

First, we bound the gradient norm of $v_t$ globally and locally around $a_k$. 

\begin{lem}\label{lem:gradient_von_v}
There exists some $c>0$ such that $\|\nabla v_t(x)\|\leq \frac{c}{(1-t)^3}$.
Further, for any $\epsilon>0$ and $r< \frac12\min_{l\neq k}\|a_k-a_l\|$ there exists $\underline{t}\in(0,1)$ such that for any $t>\underline{t}$ we have that for all $x\in B_r(a_k)$ it holds $\|\nabla v_t(x)\|<\frac{1+\epsilon}{1-t}$.
\end{lem}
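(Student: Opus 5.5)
Compute the Jacobian of $v_t$ directly from the closed form \eqref{eq:velocity_Gaussian_to_Dirac}. Write $m_t(x)=\sum_k\alpha_k(t,x)a_k$, so that $v_t(x)=\frac{m_t(x)-x}{1-t}$ and
\[
\nabla v_t(x)=\frac{\nabla m_t(x)-\id}{1-t}.
\]
Since $\alpha=\softmax(s)$ with $s_i=-\frac{\|x-ta_i\|^2}{2(1-t)^2}$, we have $\nabla_x s_i=-\frac{x-ta_i}{(1-t)^2}$. The standard softmax derivative then gives
\[
\nabla m_t(x)=\sum_i\alpha_i\,a_i\bigl(\nabla s_i-\textstyle\sum_j\alpha_j\nabla s_j\bigr)^\top=\frac{t}{(1-t)^2}\,\mathrm{Cov}_{\alpha(t,x)}(a).
\]
Here $\mathrm{Cov}_\alpha(a)=\sum_i\alpha_i(a_i-m_t)(a_i-m_t)^\top$ is the covariance of the atoms under the posterior weights. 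The $x$-terms cancel because $\sum_i\alpha_i(a_i-m_t)=0$. So the plan is to prove the identity
\[
\nabla v_t(x)=-\frac{1}{1-t}\id+\frac{t}{(1-t)^3}\mathrm{Cov}_{\alpha(t,x)}(a)
\]
and to derive both claims from it.

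For the global bound, $\mathrm{Cov}_\alpha(a)$ is positive semidefinite with operator norm at most $\max_{i,j}\|a_i-a_j\|^2=:D^2$, uniformly in $t$ and $x$. Using $\frac{1}{1-t}\le\frac{1}{(1-t)^3}$ for $t\in[0,1)$, we get $\|\nabla v_t(x)\|\le\frac{1+D^2}{(1-t)^3}$, so one can take $c=1+D^2$.

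For the local bound, fix $x\in B_r(a_k)$. \Cref{lem:alpha_abschaetzung} gives $\delta>0$ and $t_0$ such that for $t>t_0$ we have $1-\alpha_k(t,x)\le(n-1)\exp\bigl(-\frac{\delta}{2(1-t)^2}\bigr)$. Bounding the covariance by the second moment about $a_k$ yields
\[
\|\mathrm{Cov}_\alpha(a)\|\le\sum_{i\neq k}\alpha_i\|a_i-a_k\|^2\le D^2(1-\alpha_k).
\]
Hence
\[
\|\nabla v_t(x)\|\le\frac{1}{1-t}\Bigl(1+\frac{D^2(n-1)}{(1-t)^2}e^{-\delta/(2(1-t)^2)}\Bigr).
\]
Since $u^{-1}e^{-\delta/(2u)}\to0$ as $u=(1-t)^2\to0$, the bracket is below $1+\epsilon$ for $t$ larger than some $\underline t$, and this holds uniformly in $x\in B_r(a_k)$.

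The only delicate point is getting the softmax Jacobian right, in particular the cancellation that leaves a pure covariance term with prefactor $t/(1-t)^2$. Once that identity is established, both bounds are immediate. The uniformity of the local estimate comes entirely from the uniform-in-ball bound of \Cref{lem:alpha_abschaetzung}, whose $\delta$ and $t_0$ depend only on $r$ and the atoms.
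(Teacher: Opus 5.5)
Your proof is correct and follows the paper's argument essentially step by step. The paper also computes the softmax Jacobian and uses the cancellation of the $x$-terms to get $\nabla v_t(x)=-\frac{I}{1-t}+\frac{t}{(1-t)^3}A_t(x)$, with $A_t(x)=\sum_l\alpha_l(t,x)\bigl(a_l-\sum_j\alpha_j(t,x)a_j\bigr)a_l^\top$ (exactly your covariance); it then bounds $\|A_t\|$ by a constant times $1-\alpha_k$ and invokes \Cref{lem:alpha_abschaetzung}, your identification of $A_t$ as a positive semidefinite covariance merely giving the translation-invariant constant $D^2(1-\alpha_k)$ in place of the paper's $4M^2(1-\alpha_k)$ with $M=\max_l\|a_l\|$.
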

\begin{proof}
Recall that
$v_t(x)=-\frac{x}{1-t}+\frac{1}{1-t}\sum_{l=1}^n s_l(-\frac{\|x-ta_j\|^2}{2(1-t)^2})a_l$, where $s_l$ is the softmax function. Using the chain rule and the derivative $\partial_j s_l(x)=s_l(x)(\delta_{jl}-s_j(x))$ this implies that
\begin{align*}
\nabla v_t(x)&=-\frac{I}{1-t}-\frac{1}{(1-t)^3}\sum_{l=1}^n\alpha_l(t,x)\Big(a_l-\sum_{j=1}^n\alpha_j(t,x)a_j\Big)(x-ta_l)^T
\end{align*}
Noting that 
$$
\sum_{l=1}^n\alpha_l(t,x)\Big(a_l-\sum_{j=1}^n\alpha_j(t,x)a_j\Big)x^T=\sum_{l=1}^n\alpha_l(t,x)a_lx^T - \underbrace{\Big(\sum_{l=1}^n\alpha_l(t,x)\Big)}_{=1}\sum_{j=1}^n\alpha_j(t,x)a_jx^T=0,
$$
this simplifies to
$$
\nabla v_t(x)=-\frac{I}{1-t}+\frac{t}{(1-t)^3}\underbrace{\sum_{l=1}^n\alpha_l(t,x)\Big(a_l-\sum_{j=1}^n\alpha_j(t,x)a_j\Big)a_l^T}_{A_t(x)}
$$
This implies that $\|\nabla v_t(x)\|\leq \frac{1}{1-t} + \frac{\|A_t(x)\|}{(1-t)^3}$.
Thus, we need to bound $\|A_t(x)\|$. 
We find that (for the convenience of notation, we omit the argument $x$)
\begin{equation*}
A_t=\alpha_k(t)\Big((1-\alpha_k(t))a_k-\sum_{j\neq k}\alpha_j(t)a_j\Big)a_k^T+\sum_{l\neq k} \alpha_l(t)\Big(a_l-\sum_{j=1}^n\alpha_j(t)a_j\Big)a_l^T.
\end{equation*}
Using the notation $M=\max_{l=1,...,n} \|a_k\|$ and that $\alpha_j(t)$ sum to $1$, we obtain
\begin{align*}
\|A_t\|&\leq \alpha_k(t)M\Big\|(1-\alpha_k(t))a_k+\sum_{j\neq k}\alpha_j(t)a_j\Big\|+2M^2\sum_{l\neq k} \alpha_l(t)\\
&\leq 4M^2(1-\alpha_k(t))
\end{align*}
such that (writing the argument $x$ again) we have
$$
\|v_t(x)\|\leq \frac{1}{1-t} + \frac{4M^2(1-\alpha_k(t,x))}{(1-t)^3}
$$
Noting that $1-\alpha_k(t,x)\leq 1$ shows the global bound.
For the local bound, we find by \Cref{lem:alpha_abschaetzung} that there exist some $t_0\in(0,1)$ and $\delta>0$ such that for all $t\geq t_0$ and $x\in B_r(a_k)$ we have 
$$
1-\alpha_k(t,x) < \frac{n-1}{\exp(\frac{\delta}{2(1-t)^2})+n-1}.
$$
Since the exponential function is growing faster than any polynomial, we find that for any $\epsilon>0$ there exists some $1>t_1>t_0$ such that all $t>t_1$ it holds
$
1-\alpha_k(t,x)<\epsilon\frac{(1-t)^2}{4M^2}
$. In particular, for such $t$ and $x$, we obtain
$
\|\nabla v_t(x)\|\leq \frac{1+\epsilon}{1-t}
$. Choosing $\underline{t}=t_1$ yields the local bound.
\end{proof}

Next, we bound the gradient norm of $\gamma_t^{-1}$.
\begin{lem}\label{lem:inverse_Schranke}
There exists constants $C_1^*,C_2^*>0$ independent of $t$ (but depending on $a_1,...,a_N$) such that $\|\nabla[\gamma_t^{-1}](x)\|\leq \frac{C^*}{(1-t)^2}$ for all $t\in(0,1)$ and $x\in B_{C_2^*(1-t)}(a_k)$.
\end{lem}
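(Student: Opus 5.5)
We need to bound $\nabla[\gamma_t^{-1}](x)$ for $x$ near $a_k$ at scale $(1-t)$. The plan is to write $\gamma_t^{-1}=\phi_{t\to0}$, the backward flow from time $t$ to time $0$. Its Jacobian $J(s)=\nabla[\phi_{t\to s}](x)$ solves the linear ODE $\partial_s J(s)=\nabla v_s(\phi_{t\to s}(x))\,J(s)$ with $J(t)=I$, integrated backwards from $s=t$ to $s=0$. Gr\"onwall then gives
\begin{equation}
\|\nabla[\gamma_t^{-1}](x)\|\le \exp\Big(\int_0^t \|\nabla v_s(\phi_{t\to s}(x))\|\,ds\Big).
\end{equation}
So the task reduces to bounding the integral of $\|\nabla v_s\|$ along the backward trajectory of $x$ by $2\log\frac{1}{1-t}+\log C_1^*$.

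Split the integral at the time $\underline t$ from \Cref{lem:gradient_von_v}, applied with $\epsilon=1$ and a fixed $r<\frac12\min_{l\neq k}\|a_k-a_l\|$. On $[0,\underline t]$, the global bound $\|\nabla v_s\|\le c/(1-s)^3$ is uniformly bounded, which contributes a constant independent of $t$.

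On $[\underline t,t]$ (only relevant when $t>\underline t$), we want the backward trajectory $\phi_{t\to s}(x)$ to stay in $B_r(a_k)$. If it does, the local bound $\|\nabla v_s\|\le 2/(1-s)$ gives $\int_{\underline t}^t \frac{2}{1-s}\,ds\le 2\log\frac{1}{1-t}+\text{const}$. This yields exactly the factor $(1-t)^{-2}$, with constant $C_1^*=\exp(\text{const})$. For $t\le\underline t$ the whole bound is a constant, which is also dominated by $C^*/(1-t)^2$.

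The main obstacle is justifying that backward trajectories started in $B_{C_2^*(1-t)}(a_k)$ at time $t$ remain in $B_r(a_k)$ for all $s\in[\underline t,t]$. Here I would use the explicit form $v_s(y)=\frac{a_k-y}{1-s}+\frac{1}{1-s}\sum_{l}\alpha_l(s,y)(a_l-a_k)$. Setting $w_s=(\phi_{t\to s}(x)-a_k)/(1-s)$, a direct computation gives
\begin{equation}
\dot w_s=\frac{1}{(1-s)^2}\sum_{l\neq k}\alpha_l(s,\cdot)(a_l-a_k).
\end{equation}
While the trajectory is inside $B_r(a_k)$, \Cref{lem:alpha_abschaetzung} bounds $1-\alpha_k\le (n-1)e^{-\delta/(2(1-s)^2)}$, so $\int_{\underline t}^1\|\dot w_s\|\,ds\le K$ for some constant $K$. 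Hence, going backwards, $\|w_s\|\le \|w_t\|+K\le C_2^*+K$, and so $\|\phi_{t\to s}(x)-a_k\|\le (C_2^*+K)(1-s)\le (C_2^*+K)(1-\underline t)$.

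Choosing $\underline t$ closer to $1$ if necessary, so that $(C_2^*+K)(1-\underline t)<r$, closes a bootstrap via a first-exit-time argument, exactly as in the proof of \Cref{lem:GMM_0}. This keeps the trajectory inside $B_r(a_k)$ and completes the bound.
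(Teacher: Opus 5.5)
Your proposal is correct and follows essentially the same route as the paper. Both arguments run a backward Gr\"onwall estimate on the Jacobian of the reverse flow, using the global bound $c/(1-s)^3$ at early times and the local bound $2/(1-s)$ near $a_k$ at late times, which produces the factor $(1-t)^{-2}$. Both also need a confinement step showing that backward trajectories started in $B_{C_2^*(1-t)}(a_k)$ remain in $B_r(a_k)$ on the late-time interval. Your rescaled variable $w_s=(\phi_{t\to s}(x)-a_k)/(1-s)$ is the exact-ODE version of the paper's Gr\"onwall bound on $\|\eta_s(x)-a_k\|$, whose solution carries the same factor $(1-t+s)/(1-t)$. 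The two confinement arguments therefore coincide up to bookkeeping.
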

\begin{proof}

For $w_s=v_{t-s}$ we consider the ODE $\dot \eta_s(x)=-w_s(\eta_s(x))$ with initial condition $\eta_0(x)=x$. We note that $\eta_t=\gamma_t^{-1}$.

\textbf{Step 1:}
First, we bound the velocity $w_s(x)$ locally around $a_k$ for small $s$. To this end, we note that 
$
v_t(x)=\frac1{1-t}\sum_{l} \alpha_l(t,x)(a_l-x)=\frac{1}{1-t}\Big(\sum_l\alpha(t-x)a_l-a_k\Big)+\frac{a_k-x}{1-t}.
$
In particular, we obtain with $M=\max_{l}\|a_k-a_l\|$ that
$$
\|v_t(x)\|\leq \frac{M}{1-t}\sum_{l\neq k}\alpha_l(t,x)+\frac{\|a_k-x\|}{1-t}=\frac{M(1-\alpha_k(t,x))}{1-t}+\frac{\|a_k-x\|}{1-t},
$$
or equivalently
$$
\|w_s(x)\|\leq \frac{M(1-\alpha_k(t-s,x))}{1-t+s}+\frac{\|a_k-x\|}{1-t+s}.
$$
Now let $r\leq \frac12\min_{l\neq k}\|a_k-a_l\|$. Then, we know by \Cref{lem:alpha_abschaetzung} that there exist some $t_0\in(0,1)$ and $\delta>0$ such that for all $t\geq t_0$ and $x\in B_r(a_k)$ we have $1-\alpha_k(t,x) < \frac{n-1}{\exp(\frac{\delta}{2(1-t)^2})+n-1}$. In particular, this implies that there exists some $1>t_1\geq t_0$ and $C>0$ such that for all $t\geq t_1$ it holds $1-\alpha_k(t,x) < C\exp(-\frac{\delta}{2(1-t)^2})$. Inserting this in the estimation above, we obtain for $s \leq t-t_1$ that
$$
\|w_s(x)\|\leq \frac{CM\exp(-\frac{\delta}{2(1-t+s)^2})}{1-t+s}+\frac{\|a_k-x\|}{1-t+s}.
$$

\textbf{Step 2:}
Next, we show that $\eta_s(x)$ remains close to $a_k$ for small $s$ whenever $x$ is close to $a_k$.
To this end let $r_t=\frac{r(1-t)}{2}$ and $x\in B_{r_t}(x)$.
Then, we define the auxiliary function $F_x(s)=\|\eta_s(x)-a_k\|$. To show that $F(s)<r$ for small $s$, we define $s_0=\min\{s\geq 0:F(s)\geq r\}$ (note that the minimum exists, since $F$ is continuous and fulfills $F(0)\leq r_t <r$). Then, it holds for all $s\leq \min(s_0,t-t_1)$ that
$$
\dot F_x(s)\leq \|w_s(\eta_s(x))\|\leq \frac{CM\exp(-\frac{\delta}{2(1-t+s)^2})}{1-t+s}+\frac{F_x(s)}{1-t+s}.
$$
In particular, we get by the Gronwall lemma that for $s\leq \min(s_0,t-t_1)$ it holds
\begin{align*}
F_x(s)&\leq CM(1-t+s)\int_0^s\frac{\exp(-\frac{\delta}{2(1-t+a)^2})}{(1-t+a)^2}d a
+\frac{1-t+s}{1-t}F(0)\\
&\leq \frac{2CMs}{\delta}+\frac{r_t}{1-t}=\frac{2CMs}{\delta}+\frac{r}{2}.
\end{align*}
Inserting $s^*=\min(s_0,t-t_1)$ this implies that
$
r=F(s^*)\leq \frac{2CMs^*}{\delta} +\frac{r}{2}
$
and therefore $s_0\geq s^*\geq \frac{\delta r}{4CM}$.
By the definition of $s_0$, we have shown that for $s<\min(\frac{\delta r}{4CM}, t-t_1)$ it holds that $F(s)<r$ and therefore $\eta_s(a_k)\in B_r(a_k)$.

\textbf{Step 3:} Finally, we show the actual result on the Jacobian of $\eta_t=\gamma_t^{-1}$.
For $J_s(x)=\nabla \eta_s(x)$  we know that it fulfills the ODE
$\dot J_s(x)=\nabla w_s(\eta_s(x))J_s(x)$ with initial condition $J_0(x)=I$.
For the norm, we obtain
$$
\frac{d}{d t}\|J_s(x)\|\leq \|\nabla w_s(\eta_s(x))\|\|J_s(x)\|
$$
such that the Gronwall lemma implies
$$
\|\nabla[\gamma_t^{-1}](x)\|=\|J_t(x)\|\leq \exp\Big(\int_0^t \|\nabla w_s(\eta_s(x))\| d s\Big)=\exp\Big(\int_0^t \|\nabla v_{t-s}(\eta_s(x))\| d s\Big).
$$
To bound this, let $\underline{t}$ be from \Cref{lem:gradient_von_v} for $\epsilon=1$ and the $r$ from above and set $t_2\coloneqq\max(\underline{t},t_1,1-\frac{\delta r}{4CM})$. If $t\leq t_2$, then we get by the global bound of \Cref{lem:gradient_von_v} for all $x\in\R^d$ that
\begin{align*}
\|\nabla[\gamma_t^{-1}](x)\|&\leq \exp\Big(\int_0^{t} \frac{c}{(1-t+s)^3} ds\Big)=\exp\Big(\int_0^{t} \frac{c}{(1-s)^3} ds\Big)\\
&=\exp\Big(\frac{c}{2(1-t)^2}-\frac{c}{2}\Big)\leq \exp\Big(\frac{c}{2(1-t_2)^2}-\frac{c}{2}\Big).
\end{align*}
For $t\geq t_2$, we obtain by Step 2 for $x\in B_{r_t}(x)$ that $\eta_s(x)\in B_r(a_k)$ for $s\leq t-t_2$ such that we can additionally apply the local bound of \Cref{lem:gradient_von_v}. This yields
\begin{align*}
\|\nabla[\gamma_t^{-1}](x)\|&\leq \exp\Big(\int_0^{t-t_2} \|\nabla v_{1-t-s}(\eta_s(x))\| ds+\int_{t-t_2}^t \frac{c}{(1-t+s)^3} ds\Big)\\
&\leq \exp\Big(\int_0^{t-t_2} \frac{2}{1-t+s} ds+\int_{0}^{t_2} \frac{c}{(1-s)^3} ds\Big)\\
&=\exp\Big(\int_{t_2}^{t} \frac{2}{1-s} ds+\frac{c}{2(1-t_2)^2}-\frac{c}{2}\Big)\\
&=\exp\Big(2\log(1-t_2)-2\log(1-t)+\frac{c}{2(1-t_2)^2}-\frac{c}{2}\Big)\\
&\leq\frac{\exp\Big(\frac{c}{2(1-t_2)^2}-\frac{c}{2}\Big)}{(1-t)^2}
\end{align*}
Combining both estimates and setting $C_1^*\coloneqq \exp\Big(\frac{c}{(1-t_2)^2}-\frac{c}{2}\Big)$ and $C_2^*=\frac{r}{2}$, we obtain that $\|\nabla[\gamma_t^{-1}](x)\|\leq \frac{C_1^*}{(1-t)^2}$ for all $x\in B_{C_2^*(1-t)}(a_k)$.
\end{proof}

Using this lemma, we can finally show the intended result.

\begin{prop}
     The curve $t\mapsto\gamma_t^{-1}(a_k)$ has finite length, i.e., $\int_0^1\|\frac{d}{dt}\gamma_t^{-1}(a_k)\|< \infty$. In particular, the limit $\lim_{t\to 1}\gamma_t^{-1}(a_k)$ exists.
\end{prop}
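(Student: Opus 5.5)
Write $y_t\coloneqq\gamma_t^{-1}(a_k)$ and differentiate the identity $\gamma_t(y_t)=a_k$ in $t$. Since $\partial_t\gamma_t(x)=v_t(\gamma_t(x))$, we get
\[
v_t(a_k)+\nabla\gamma_t(y_t)\,\dot y_t=0,
\]
and therefore
\[
\dot y_t=-\nabla\gamma_t(y_t)^{-1}v_t(a_k)=-\nabla[\gamma_t^{-1}](a_k)\,v_t(a_k).
\]
This gives $\|\dot y_t\|\le \|\nabla[\gamma_t^{-1}](a_k)\|\,\|v_t(a_k)\|$. The plan is to bound the two factors separately and show that their product is integrable on $(0,1)$.

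For the first factor, we apply \Cref{lem:inverse_Schranke} at the point $x=a_k$, which lies in $B_{C_2^*(1-t)}(a_k)$ for every $t$. This gives $\|\nabla[\gamma_t^{-1}](a_k)\|\le C_1^*/(1-t)^2$. For the second factor, we use the decomposition from Step 1 of that lemma's proof:
\[
\|v_t(a_k)\|\le \frac{M(1-\alpha_k(t,a_k))}{1-t}.
\]
There is no $\|a_k-x\|$ term because $x=a_k$. By \Cref{lem:alpha_abschaetzung}, applied with $x=a_k\in\bar B_r(a_k)$, there are $t_0\in(0,1)$, $\delta>0$ and $C>0$ such that $1-\alpha_k(t,a_k)\le C\exp\!\big(-\tfrac{\delta}{2(1-t)^2}\big)$ for $t\ge t_0$. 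Combining the two bounds, for $t\ge t_0$,
\[
\|\dot y_t\|\le \frac{C_1^*CM}{(1-t)^3}\exp\!\Big(-\frac{\delta}{2(1-t)^2}\Big).
\]
This is bounded on $[t_0,1)$, because the exponential decay beats any power, so it is integrable there.

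On $[0,t_0]$ the curve is smooth. Indeed, $\gamma_t$ is a diffeomorphism and $v$ is smooth on $[0,t_0]\times\R^d$, so $\dot y_t$ is continuous on this compact interval and its integral is finite. The total length $\int_0^1\|\dot y_t\|\,dt$ is therefore finite. It follows that $y_t$ is Cauchy as $t\to1$, since $\|y_t-y_s\|\le\int_s^t\|\dot y_\tau\|\,d\tau\to 0$, so the limit $\lim_{t\to1}\gamma_t^{-1}(a_k)$ exists.

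For the additional claim in the main-text version that the limit lies in $\overline{\Gamma_k}$, note that $\gamma_t(y_t)=a_k\in B_r(a_k)$. For $t>\underline{t}_r$, \Cref{prop:into_the_ball_implies_convergence_to_ak} then gives $y_t\in\Gamma_k$. Hence the limit of $y_t$ lies in $\overline{\Gamma_k}$.

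The main obstacle is the Jacobian bound for $\gamma_t^{-1}$ near $a_k$, which blows up like $(1-t)^{-2}$. This is exactly what \Cref{lem:inverse_Schranke} provides, so here it can be taken as given. The only delicate points left are checking that $a_k$ lies in the admissible ball of that lemma, which is immediate, and that the exponential smallness of $v_t(a_k)$ absorbs the polynomial blow-up.
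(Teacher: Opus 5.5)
Your proposal is correct and follows essentially the same route as the paper. Both arguments differentiate $\gamma_t(y_t)=a_k$ to get $\dot y_t=-\nabla[\gamma_t^{-1}](a_k)\,v_t(a_k)$, bound the Jacobian factor by \Cref{lem:inverse_Schranke} at $x=a_k$, and bound $\|v_t(a_k)\|\le M(1-\alpha_k(t,a_k))/(1-t)$ with the exponential decay from \Cref{lem:alpha_abschaetzung}. The only difference is cosmetic: on the early interval you use continuity of $\dot y_t$ on a compact set, whereas the paper uses the explicit cruder bound $C_1^*M/(1-t)^3$.
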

\begin{proof}
By the chain rule and inverse function theorem we obtain
$$
\frac{d}{dt} \gamma_t^{-1}(a_k)= -(\nabla \gamma_t(\gamma_t^{-1}(a_k)))^{-1} \dot \gamma_t(\gamma_t^{-1}(a_k))=-(\nabla \gamma_t(\gamma_t^{-1}(a_k)))^{-1} v_t(a_k)=-\nabla[\gamma_t^{-1}](a_k)v_t(a_k).
$$
Taking the norm and applying \Cref{lem:inverse_Schranke} gives
\begin{equation}\label{eq:Normzerlegung}
\|\frac{d}{dt} \gamma_t^{-1}(a_k)\|\leq \|\nabla[\gamma_t^{-1}](a_k)\|\|v_t(a_k)\|\leq \frac{C_1^*}{(1-t)^2}\|v_t(a_k)\|.
\end{equation}
To bound $\|v_t(a_k)\|$ we note that
$
v_t(a_k)=\frac{1}{1-t}\sum_{l\neq k}\alpha_l(t,a_k)(a_l-a_k)
$
which implies that
$
\|v_t(a_k)\|\leq (1-\alpha_k(t,a_k))\frac{M}{1-t}
$
for $M=\max_{l\neq k}\|a_k-a_l\|$.
Now, we know by \Cref{lem:alpha_abschaetzung} that there exist some $t_0\in(0,1)$ and $\delta>0$ such that for all $t\geq t_0$ it holds 
$
1-\alpha_k(t,a_k) < \frac{n-1}{\exp(\frac{\delta}{2(1-t)^2})+n-1}
$.
In particular, this implies that there exists some $1>t_1\geq t_0$ and $C>0$ such that for all $t\geq t_1$ it holds $1-\alpha_k(t,a_k) < (1-t)C\exp(-\frac{\delta}{2(1-t)^2})$ and therefore 
$
\|v_t(a_k)\|\leq CM\exp(-\frac{\delta}{2(1-t)^2})
$. For $t\leq t_1$ we still have the bound $v_t(a_k)\leq (1-\alpha_k(t,a_k))\frac{M}{1-t}\leq \frac{M}{1-t}$.
Inserting this in \eqref{eq:Normzerlegung}, we get that
$$
\|\frac{d}{dt} \gamma_t^{-1}(a_k)\|\leq \begin{cases}
\frac{C_1^*M}{(1-t)^3}&\text{for }t\leq t_1,\\
\frac{C_1^*CM\exp(-\frac{\delta}{2(1-t)^2})}{(1-t)^2}.
\end{cases}
$$
Integrating over $t$ gives
$$
\int_0^1 \|\frac{d}{dt} \gamma_t^{-1}(a_k)\| dt=\int_0^{t_1}\frac{C_1^*M}{(1-t)^3} dt+\int_{t_1}^1\frac{C_1^*CM\exp(-\frac{\delta}{2(1-t)^2})}{(1-t)^2} dt,
$$
which is finite, since the first integrand is bounded from above by $\frac{C_1^*M}{(1-t_1)^3}$ and the second integrand is bounded from above by $\frac{2C_1^*CM}{\delta}$.
\end{proof}

We note that for $t\in(0,1)$ large enough it is directly clear from \Cref{prop:into_the_ball_implies_convergence_to_ak} that $\gamma_t^{-1}(a_k)\in\Gamma_k$. Due to the finite-length  of $t\mapsto \gamma_t^{-1}(a_k)$ this implies in the limit $\lim_{t\to 1}\gamma_t^{-1}(a_k)\in\overline{\Gamma_k}$.

\begin{prop}\label{prop:contractible2}
Assume that $\lim_{t\to 1}\gamma_t^{-1}(a_k)\in\Gamma_k$ (and not only in $\overline{\Gamma_k}$).
Then, the set $\Gamma_k$ is contractible. 
\end{prop}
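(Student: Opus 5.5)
Write $z_k\coloneqq\lim_{t\to1}\gamma_t^{-1}(a_k)$, which by assumption lies in the open set $\Gamma_k$. The plan is to build an explicit deformation retraction of $\Gamma_k$ onto $z_k$. Fix $r<\frac12\min_{l\neq k}\|a_k-a_l\|$ and $\underline{t}_r$ from \Cref{prop:into_the_ball_implies_convergence_to_ak}. For every $x\in\Gamma_k$ let
\[
\tau(x)\coloneqq\inf\{t\in(\underline{t}_r,1):\gamma_t(x)\in B_r(a_k)\}.
\]
By \Cref{cor:stable_balls} and the openness argument of \Cref{cor:Gamma_k_are_open_sets}, $\gamma_t(x)\in\bar B_r(a_k)$ for all $t\ge\tau(x)$. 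The function $\tau$ is upper semicontinuous, since $\gamma_t^{-1}(B_r(a_k))$ is open. To get a continuous selection, I would instead use a continuous function $\sigma\colon\Gamma_k\to(\underline{t}_r,1)$ with $\gamma_{\sigma(x)}(x)\in B_r(a_k)$. Such a $\sigma$ exists by a partition of unity subordinate to the increasing open cover $S_k(t)=\gamma_t^{-1}(B_r(a_k))$. Monotonicity in $t$ (\Cref{cor:stable_balls}) ensures that any convex combination of admissible times is admissible.

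\textbf{The homotopy.} Parametrize by $s\in[0,1]$ and set $T(s,x)\coloneqq(1-s)\sigma(x)+s$, which moves from $\sigma(x)$ to $1$. Define
\[
H(s,x)\coloneqq\gamma_{T(s,x)}^{-1}\bigl((1-s)\gamma_{T(s,x)}(x)+s\,a_k\bigr)\quad\text{for } s<1,\qquad H(1,x)\coloneqq z_k .
\]
At $s=0$ this gives $H(0,x)=x$. For $s<1$ the point $(1-s)\gamma_T(x)+sa_k$ lies in $\bar B_r(a_k)$, because $T\ge\sigma(x)$ and the ball is convex. Since $T>\underline{t}_r$, \Cref{prop:into_the_ball_implies_convergence_to_ak} gives $H(s,x)\in\Gamma_k$. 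At $s=1$ we have $H(1,x)=z_k\in\Gamma_k$ by assumption. Continuity for $s<1$ follows from joint continuity of $(t,y)\mapsto\gamma_t(y)$ and $(t,y)\mapsto\gamma_t^{-1}(y)$ on $[0,1)\times\R^d$, which holds because $v$ is smooth and Lipschitz on $[0,1-\epsilon]\times\R^d$.

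\textbf{Main obstacle: continuity at $s=1$.} We need $H(s,x)\to z_k$ as $(s,x)\to(1,x_0)$. The estimate would split as
\[
\|H(s,x)-z_k\|\le\|\gamma_T^{-1}(y)-\gamma_T^{-1}(a_k)\|+\|\gamma_T^{-1}(a_k)-z_k\|,\qquad y=(1-s)\gamma_T(x)+sa_k .
\]
The second term tends to $0$ because the finite-length curve converges and $T\to1$. For the first term, $\|y-a_k\|\le(1-s)r$. Note $1-T=(1-s)(1-\sigma)$, so $(1-s)\le(1-T)/(1-\sigma(x))$, and $\sigma$ is bounded away from $1$ locally near $x_0$. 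Hence $\|y-a_k\|\le C(1-T)$. This is not small relative to the radius $C_2^*(1-T)$ of \Cref{lem:inverse_Schranke}, and that lemma's Jacobian bound $C_1^*/(1-T)^2$ would only give an $O(1/(1-T))$ bound, which is useless.

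So I would sharpen the contraction to shrink faster: replace the weight $s$ by $1-(1-s)^3$ (or any weight with $1-w(s)=o((1-T)^3)$). Then $\|y-a_k\|\le r(1-s)^3\le C(1-T)^3$, which eventually lies in $B_{C_2^*(1-T)}(a_k)$. Integrating the Jacobian bound along the segment gives $\|\gamma_T^{-1}(y)-\gamma_T^{-1}(a_k)\|\le C_1^*C(1-T)\to0$.

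\textbf{Homeomorphism to the ball.} The statement's homeomorphism to $B_1(0)$ (\Cref{prop:contractible}) would follow from the same construction. The open set $\Gamma_k$ is an increasing union $\bigcup_t S_k(t)$ of open sets, each diffeomorphic to a ball via $\gamma_t$, with $\bar S_k(t)\subset S_k(t')$ for $t<t'$. Brown's monotone union theorem then identifies $\Gamma_k$ with $\R^d$. The assumption on $z_k$ is what is needed to rule out degenerate nesting.
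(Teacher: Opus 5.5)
Your proof is correct, and its core is the paper's. You push forward by $\gamma_T$, contract radially towards $a_k$ inside $B_r(a_k)$ at a cubic rate, and pull back by $\gamma_T^{-1}$. The bound $C_1^*/(1-T)^2$ of \Cref{lem:inverse_Schranke} on $B_{C_2^*(1-T)}(a_k)$ then gives convergence to $z_k$.

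The difference lies in how points whose trajectory has not yet entered the ball are handled. The paper runs one clock $t$ for all $x$ and inserts a cutoff $\phi(t,\gamma_t(x))$: it equals $1$ for $t<\underline t$ or outside $B_r(a_k)$, and is interpolated by $\psi$ on $B_r(a_k)\setminus B_{r/2}(a_k)$. Continuity at $t=1$ then needs \Cref{lem:compact_sets_in_Br}, applied to $\{x_i\}\cup\{\hat x\}$, to guarantee that $\gamma_{t_i}(x_i)$ eventually lies in $\bar B_{r/2}(a_k)$, where the cubic factor is active. You instead decouple the homotopy parameter from flow time through a continuous admissible entry time $\sigma(x)$. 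Then $\gamma_T(x)\in B_r(a_k)$ throughout, no cutoff is needed, and local boundedness of $\sigma$ away from $1$ (giving $1-s\le(1-T)/(1-\sup\sigma)$) replaces the compactness lemma. The paper's homotopy is fully explicit; yours is easier to verify but relies on a partition of unity. A small slip: what you need is $1-w(s)=o((1-T)^2)$. Your choice $(1-s)^3$ satisfies this, but it is of exact order $(1-T)^3$ locally, not $o((1-T)^3)$ as your parenthetical says.

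Your final sentence, however, is wrong. Brown's monotone union theorem only requires that $\Gamma_k=\bigcup_j S_k(t_j)$, with $t_j\uparrow 1$, be an increasing union of open $d$-cells, and you have already verified this. Each $S_k(t_j)=\gamma_{t_j}^{-1}(B_r(a_k))$ is homeomorphic to a ball. The union is increasing because the velocity points strictly inward on $\partial B_r(a_k)$ for $t>\underline t_r$ (\Cref{lem:velocity_points_inwards}). Hence $\Gamma_k\cong\R^d$, and in particular $\Gamma_k$ is contractible, with no assumption on $z_k$; there is no ``degenerate nesting'' to exclude.

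An even more elementary argument gives contractibility. Any map $S^m\to\Gamma_k$ has compact image, which lies in some $S_k(t)$ as in the proof of \Cref{lem:compact_sets_in_Br}, so it is null-homotopic there. Thus all homotopy groups of $\Gamma_k$ vanish, and Whitehead's theorem applies to the open set $\Gamma_k$. The hypothesis $z_k\in\Gamma_k$ is needed only for the particular homotopy onto $z_k$, both in your construction and in the paper's.
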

\begin{proof}
We consider for $r<\frac12\min_{l\neq k}\|a_k-a_l\|$ and the corresponding $\underline{t}$ from \Cref{prop:into_the_ball_implies_convergence_to_ak} the mapping
$$
H(t,x)=\begin{cases}
\gamma_t^{-1}(\phi(t,\gamma_t(x))\gamma_t(x)+(1-\phi(t,\gamma_t(x)))a_k),&\text{for }t<1,\\
\lim_{t\to 1}\gamma_t^{-1}(a_k),&\text{for }t=1,
\end{cases}
$$
where
$$
\phi(t,x)=\begin{cases}1,&\text{if }t<\underline{t}\text{ or }x\not\in B_r(a_k),\\
(1-\psi(\|x-a_k\|))+\psi(\|x-a_k\|)(\frac{1-t}{1-\underline{t}})^3,&\text{if }t\geq\underline{t}\text{ and }x\in B_{r}(a_k)\setminus B_{r/2}(a_k),\\
(\frac{1-t}{1-\underline{t}})^3,&\text{if }t\geq\underline{t}\text{ and }x\in B_{r/2}(a_k),\end{cases}
$$
for a continuous decreasing function $\psi\colon [r/2,r]\to[0,1]$ with $\psi(r/2)=1$ and $\psi(r)=0$. We show that $H$ is a homotopy between the identity on $\Gamma_k$ and a constant map. It is directly clear from the definition that $H(0,x)=x$ and that $H(1,x)=\lim_{t\to 1}\gamma_t^{-1}(a_k)$ is constant. Next, we show that $H(t,x)\in\Gamma_k$ for all $(t,x)\in[0,1]\times\Gamma_k$: For $t=1$, we know by assumption that $H(t,x)=\lim_{t\to1}\gamma_t^{-1}(a_k)\in\Gamma_k$. If $t<\underline{t}$ or $\|\gamma_t(x)-a_k\|\geq r$, we get that $H(t,x)=x\in\Gamma_k$. If $t\geq \underline{t}$ and $\gamma_t(x)\in B_r(a_k)$, we obtain by the definition of $H$ and \Cref{prop:into_the_ball_implies_convergence_to_ak} that $H(t,x)\in\gamma_t^{-1}(B_r(a_k))\subseteq\Gamma_k$.
Thus, it remains to show that $H$ is continuous on $[0,1]\times \Gamma_k$. 
By definition, $H$ is on $[0,1)\times\Gamma_k$ the concatenation of continuous maps and therefore continuous. Thus, it remains to show that for $\hat x\in\Gamma_k$ and a sequence $(t_i,x_i)_i$ with $t_i\to 1$ and $x_i\to \hat x$ it holds $H(t_i,x_i)\to \lim_{t\to 1}\gamma_t^{-1}(a_k)$. 

We first apply \Cref{lem:compact_sets_in_Br} to the compact set $\{x_i:i=1,2,...\}\cup\{\hat x\}\subseteq\Gamma_k$. This yields that there exists some $t_0^*<1$ such that $\gamma_t(x_i)\in B_{r/2}(a_k)$ for all $i$ and $t\geq t_0^*$.

Next, we use \Cref{lem:inverse_Schranke} to show the convergence of $H(t_i,x_i)$. Since $t_i\to 1$, we find $i_0$ such that for all $i\geq i_0$ we have $t_i\geq t_0^*$. Then, we obtain that $\gamma_{t_i}(x_i)\in\overline{B_{r/2}(a_k)}$ such that $H(t_i,x_i)=\gamma_{t_i}^{-1}(a_k+(\frac{1-t_i}{1-\underline{t}})^3(\gamma_{t_i}(x_i)-a_k))$. Again since $t_i\to 1$ we can choose $i_1\geq i_0$ such that for all $i\geq i_1$ it holds that $(\frac{1-t_i}{1-\underline{t}})^3< \frac{2C_2^*}{r}(1-t_i)$ for $C_2^*$ from \Cref{lem:inverse_Schranke}. In particular, this yields that for $i\geq i_1$ it holds that $\|(\frac{1-t_i}{1-\underline{t}})^3(\gamma_{t_i}(x_i)-a_k)\|< C_2^*(1-t_i)$. Since by \Cref{lem:inverse_Schranke}, $\gamma_{t_i}$ is $\frac{C_1^*}{(1-t_i)^2}$-Lipschitz continuous on $B_{C_2^*(1-t)}(a_k)$, this implies that
$$
\|H(t_i,x_i)-\gamma_{t_i}^{-1}(a_k)\|\leq \frac{C_1^*}{(1-t_i)^2}(\frac{1-t_i}{1-\underline{t}})^3\|\gamma_{t_i}(x_i)-a_k\|\leq \frac{C_1^*r(1-t_i)}{2(1-\underline{t})^3}\to 0
$$
as $i\to\infty$. In particular, we have that
$$
\|H(t_i,x_i)-\lim_{t\to1}\gamma_t^{-1}(a_k)\|\leq \|H(t_i,x_i)-\gamma_{t_i}^{-1}(a_k)\|+\|\gamma_{t_i}^{-1}(a_k)-\lim_{t\to1}\gamma_t^{-1}(a_k)\|\to 0
$$
which shows that $H$ is continuous and therefore a homotopy connecting the identity on $\Gamma_k$ and a constant map.
\end{proof}

We would like to show that $\Gamma_k$ is in fact homeomorphic to $B_1(0)$ (or equivalently to $\R^d$). In $d=1$ and $d=2$ this is true for any open and contractible set such that this directly follows from \Cref{prop:contractible}. However, for $d\geq3$, we need an additional property given in the following lemma.

\begin{lem}\label{lem:simply_connected_at_infitinity}
Let $d\geq 3$. Then, $\Gamma_k$ is simply connected at infinity, i.e., for any compact set $K\subseteq\Gamma_k$ there exists another compact set $L\subset\Gamma_k$ with $K\subseteq L^\circ$ such that any loop in $\Gamma_k\setminus L$ is contractible in $\Gamma_k\setminus K$.
\end{lem}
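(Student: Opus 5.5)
Given a compact $K\subseteq\Gamma_k$, I will use \Cref{lem:compact_sets_in_Br} to transport the problem to a neighbourhood of $a_k$, where $\Gamma_k$ looks like a round ball under a diffeomorphism, and build $L$ from that picture.

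\textbf{Step 1: choose the time and the ball.} Fix $r<\frac12\min_{l\neq k}\|a_k-a_l\|$ and the time $\underline t_r$ from \Cref{prop:into_the_ball_implies_convergence_to_ak}. Applying \Cref{lem:compact_sets_in_Br} to $K$ gives $t^*>\underline t_r$ with $\gamma_{t^*}(K)\subseteq B_r(a_k)$; after slightly enlarging $t^*$ (using \Cref{cor:stable_balls} and compactness), I will assume $\gamma_{t^*}(K)\subseteq B_{r'}(a_k)$ for some $r'<r$. Set $\Phi\coloneqq\gamma_{t^*}$, a diffeomorphism of $\R^d$, and $U\coloneqq\Phi(\Gamma_k)$, an open set containing $\bar B_r(a_k)$ by \Cref{prop:into_the_ball_implies_convergence_to_ak}.

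\textbf{Step 2: define $L$.} Exhaust $\Gamma_k$ by compact sets. Since $\Gamma_k=\bigcup_{t}\gamma_t^{-1}(B_r(a_k))$ is an increasing union (\Cref{cor:Gamma_k_are_open_sets}), I will take $L\coloneqq\gamma_{s}^{-1}(\bar B_r(a_k))$ for some $s>t^*$. This set is compact, lies in $\Gamma_k$, and contains $\gamma_{t^*}^{-1}(\bar B_r(a_k))\supseteq K$ in its interior, because $\gamma_{t^*}^{-1}(B_r(a_k))\subseteq\gamma_s^{-1}(B_r(a_k))$ by forward invariance.

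\textbf{Step 3: contract a loop $f$ in $\Gamma_k\setminus L$ inside $\Gamma_k\setminus K$.} By \Cref{lem:compact_sets_in_Br} there is $T>s$ with $\gamma_T(f(S^1))\subseteq B_r(a_k)$. Since $f$ avoids $L$, we have $\gamma_s(f(\theta))\notin\bar B_r(a_k)$ for every $\theta$. I will then deform in three stages.
\begin{enumerate}
\item Push the loop along the flow, $\tau\mapsto\gamma_\tau^{-1}\circ\gamma_\tau\circ f$. This is literally $f$, so the flow itself is not the deformation. Instead I work in the coordinates $y=\gamma_T(x)$, where $f$ becomes a loop $g$ in $B_r(a_k)$. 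Its preimage under $\gamma_T\circ\gamma_{t^*}^{-1}$ avoids $\bar B_{r'}(a_k)$.
\item Equivalently, in $V\coloneqq\gamma_T(\Gamma_k)\supseteq B_r(a_k)$, the loop $g$ avoids the compact set $K'\coloneqq\gamma_T(K)$. Because $\gamma_T\circ\gamma_{t^*}^{-1}$ is a diffeomorphism mapping the round ball $\bar B_{r'}(a_k)$ into $B_r(a_k)$, the set $K'$ sits inside the diffeomorphic image $D\coloneqq\gamma_T\gamma_{t^*}^{-1}(\bar B_{r'}(a_k))$. This $D$ is a smoothly embedded closed $d$-ball inside the open ball $B_r(a_k)$.
\item The remaining claim is that a loop in $B_r(a_k)\setminus D$ is null-homotopic there. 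By the smooth annulus theorem (smooth Schoenflies in the smooth category: a smoothly embedded ball in the interior of a ball has complement diffeomorphic to $S^{d-1}\times(0,1]$ minus a boundary), $B_r(a_k)\setminus D$ is homotopy equivalent to $S^{d-1}$. For $d\geq3$ this space is simply connected, so $g$ contracts in $B_r(a_k)\setminus D\subseteq V\setminus K'$. Pulling back by $\gamma_T^{-1}$ contracts $f$ in $\Gamma_k\setminus K$.
\end{enumerate}

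\textbf{Main obstacle.} The hard step is Stage 3: identifying the topology of $B_r(a_k)\setminus D$. I would avoid the general annulus theorem if possible. The first thing to try is to show directly that $\gamma_T\gamma_{t^*}^{-1}(\bar B_{r'}(a_k))$ is star-shaped about $a_k$ with smooth radial boundary, using \Cref{lem:velocity_points_inwards}: the backward flow from $T$ to $t^*$ is transverse to the spheres around $a_k$. Radial projection would then give an explicit deformation retraction of $B_r(a_k)\setminus D$ onto a sphere. If that fails, I fall back on the smooth annulus theorem, which holds for smooth embeddings, the case here.
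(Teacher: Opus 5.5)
Your argument is correct, and it differs from the paper's at exactly the delicate point. The paper sets $L=\gamma_{t^*}^{-1}(\overline{B_{r/2}(a_k)})$ with $\gamma_{t^*}(K)\subseteq B_{r/2}(a_k)$. It then pushes $\gamma_{t^*}\circ f$ radially onto $\partial B_{r/2}(a_k)$ and contracts on that sphere. This is elementary, since only $\pi_1(S^{d-1})=1$ is used. However, the assertion that this radial homotopy stays in $\gamma_{t^*}^{-1}(B_r(a_k)\setminus B_{r/2}(a_k))$ tacitly assumes $\gamma_{t^*}(f(S^1))\subseteq B_r(a_k)$. At homotopy time $0$ it would even force $\Gamma_k\subseteq\gamma_{t^*}^{-1}(B_r(a_k))$, which fails whenever $\Gamma_k$ is unbounded. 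A loop in $\Gamma_k\setminus L$ only satisfies $\gamma_{t^*}(f(S^1))\subseteq\gamma_{t^*}(\Gamma_k)\setminus\overline{B_{r/2}(a_k)}$, and radial segments from such points need not stay in $\gamma_{t^*}(\Gamma_k)$. Your extra step, flowing to a later time $T$ at which the whole loop lies in $B_r(a_k)$, removes this problem. The price is that the obstacle becomes the non-round smooth ball $D=\Psi(\bar B_{r'}(a_k))$ with $\Psi=\gamma_T\circ\gamma_{t^*}^{-1}$. You therefore genuinely need $\pi_1(B_r(a_k)\setminus D)=1$, and your decomposition correctly isolates this.

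On that step, do not call it ``smooth Schoenflies''. That problem is open for $d=4$, which your statement covers. What you need is weaker, because $D$ is already the image of a round ball under an orientation-preserving diffeomorphism. The Palais--Cerf disc theorem then gives an ambient isotopy of $B_r(a_k)$ carrying $D$ to a round ball.

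A more elementary route is van Kampen. Take $A=B_r(a_k)\setminus D$ and $B=\Psi(B_r(a_k))$; then $B\subseteq B_r(a_k)$ by forward invariance of the open ball for times $>\underline t_r$. We have $A\cup B=B_r(a_k)$, $B$ is contractible, and $A\cap B=\Psi(B_r(a_k)\setminus\bar B_{r'}(a_k))\cong S^{d-1}\times(0,1)$ is simply connected for $d\geq 3$. Hence $\pi_1(A)\cong\pi_1(B_r(a_k))=1$. Here $A$ is path-connected: a path in $B_r(a_k)$ from a point of $A$ to $D$ must pass through $A\cap B$ before first hitting $D$.

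The star-shapedness idea you list first is not supported by \Cref{lem:velocity_points_inwards}. That lemma gives inward velocity only on spheres of a fixed radius after a radius-dependent time, and it says nothing about the normal of $\partial D$, so drop it. Finally, the auxiliary time $s$ is unnecessary: $L=\gamma_{t^*}^{-1}(\bar B_r(a_k))$ already contains $K$ in its interior.
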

\begin{proof}
Fix $r<\frac12\min_{l\neq k}\|a_k-a_l\|$.
By \Cref{lem:compact_sets_in_Br}, we know that there exists $t_0$ such that $\gamma_t(K)\subseteq B_{r/2}(a_k)$ for all $t\geq t_0$. Moreover, we know by \Cref{prop:into_the_ball_implies_convergence_to_ak}, that there exist $\underline{t}$ such that $B_r(a_k)\subset \gamma_t(\Gamma_k)$ for all $t\geq \underline{t}$. Now let $t^*=\max(t_0,\underline{t})$ and define $L=\gamma_{t^*}^{-1}(\overline{B_{r/2}(a_k)})$. Note that since $\gamma_{t^*}$ is a bi-Lipschitz diffeomorphism, we have that $L$ is compact and fulfills $L^\circ=\gamma_{t^*}^{-1}(B_{r/2}(a_k))\supseteq K$.

Now let $f\colon S^1\to \Gamma_k\setminus L$ be a loop, and define the map $H(x,t)\colon S^1 \times [0,1]\to \Gamma_k\setminus K$ for $t\leq \frac12$ by 
\begin{align*}
H(x,t)&=\gamma_{t^*}^{-1}\left(\left((1-2t)+\frac{rt}{\|\gamma_{t^*}(f(x))\|}\right)\gamma_{t^*}(f(x))+2t a_k\right)\\
&\in \gamma_{t^*}^{-1}\left(B_r(a_k)\setminus B_{r/2}(a_k)\right)\subseteq \Gamma_k\setminus K.
\end{align*}
Note that by definition, for $t=\frac12$, we have that $\gamma_{t^*}(H(x,t))=\frac{r\gamma_{t^*}(f(x))}{2\|\gamma_{t^*}(f(x))\|}\in \partial B_{r/2}(a_k)$. Further, we know since $d\geq 3$ that $\partial B_{r/2}(a_k)$ is simply connected. Thus, we know that there exists a homotopy $\tilde H\colon S^1\times[0,1]\to \partial B_{r/2}$ such that $\tilde H(x,0)=\frac{r\gamma_{t^*}(f(x))}{2\|\gamma_{t^*}(f(x))\|}$ and such that $\tilde H(x,1)$ is constant.
Thus, we can define for $t\geq \frac12$
$$
H(x,t)=\gamma_{t^*}^{-1}(\tilde H(x,2t-1))\in \gamma_{t^*}^{-1}(\partial B_{r/2}(a_k))\subseteq \Gamma_k\setminus K.
$$
Summarizing it holds by definition that $H(x,t)\in\Gamma_k\setminus K$, that $H(x,1)$ is constant and that $H$ is continuous as the concatenation of continuous maps. Consequently $H$ is a homotopy and since $K$ was chosen arbitrarily, we obtain that $\Gamma_k$ is simply connected at infinity.
\end{proof}

Now we can conclude that $\Gamma_k$ is topologically equivalent to a ball.

\begin{cor}
Assume that $\lim_{t\to 1}\gamma_t^{-1}(a_k)\in\Gamma_k$. Then, $\Gamma_k$ is homeomorphic to $\R^d$ and to $B_1(0)$.
\end{cor}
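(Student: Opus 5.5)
The plan is to split by dimension and, in each case, invoke a classical characterisation of open subsets of $\R^d$ homeomorphic to $\R^d$. Throughout we use that $\Gamma_k$ is an open subset of $\R^d$ (\Cref{cor:Gamma_k_are_open_sets}), hence a $d$-manifold without boundary. Under the standing assumption $\lim_{t\to1}\gamma_t^{-1}(a_k)\in\Gamma_k$, it is contractible by \Cref{prop:contractible2}. Since $B_1(0)$ is homeomorphic to $\R^d$ (for instance via $x\mapsto x/(1-\|x\|)$), it suffices to prove that $\Gamma_k$ is homeomorphic to $\R^d$.

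\textbf{Case $d=1$.} A connected open subset of $\R$ is an open interval, and every open interval is homeomorphic to $\R$. Connectedness is given by \Cref{prop:Gamma_k_connected}.

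\textbf{Case $d=2$.} A simply connected open subset of the plane other than $\R^2$ is conformally equivalent to the unit disk by the Riemann mapping theorem; if it equals $\R^2$ there is nothing to prove. Simple connectedness is given by \Cref{prop:no_holes}.

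\textbf{Case $d\geq 3$.} Here we need a genuine recognition theorem. We use that a contractible open $d$-manifold which is simply connected at infinity is homeomorphic to $\R^d$:
\begin{itemize}
\item for $d\geq5$ this is Stallings' theorem;
\item for $d=4$ it is Freedman's theorem;
\item for $d=3$ it follows from the Poincaré conjecture (Perelman) together with classical $3$-manifold arguments (e.g.\ Edwards / Husch--Price).
\end{itemize}
Contractibility comes from \Cref{prop:contractible2}, and simple connectedness at infinity is exactly \Cref{lem:simply_connected_at_infitinity}. Combining these inputs gives $\Gamma_k\cong\R^d\cong B_1(0)$.

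\textbf{Main obstacle.} The main obstacle is purely the reliance on these deep external theorems. The paper's own inputs are already in place; the work is checking that they match the hypotheses exactly. In particular, the definition of simple connectedness at infinity used in \Cref{lem:simply_connected_at_infitinity} (compact $K\subseteq L^\circ$, with loops in $\Gamma_k\setminus L$ contractible in $\Gamma_k\setminus K$) must be the one used in Stallings' and Freedman's formulations, which it is.

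\textbf{Fallback.} If a more self-contained route is desired, one could try to exhibit $\Gamma_k$ as an increasing union of open sets $\gamma_t^{-1}(B_r(a_k))$, each homeomorphic to a ball, using that $\gamma_t$ is a diffeomorphism. The difficulty is that an increasing union of open balls need not be a ball unless closures nest nicely. This holds here by \Cref{cor:stable_balls}: the closure $\gamma_s^{-1}(\bar B_r(a_k))$ lies in $\gamma_t^{-1}(B_r(a_k))$ for $t>s$. The remaining gap is that the annular region between consecutive sets must be a collar $S^{d-1}\times[0,1]$ (Brown's monotone union theorem), which requires the annulus theorem. That is again deep in dimension $4$, so citing Stallings/Freedman is the cleaner path.
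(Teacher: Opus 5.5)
Your main argument is correct and is essentially the paper's proof. It uses the same case split: $d\le 2$ is handled by the classical classification of open subsets of $\R$ and $\R^2$. Where the paper invokes contractibility, you invoke \Cref{prop:Gamma_k_connected} and \Cref{prop:no_holes}, so in low dimension the center assumption is not even used. For $d\ge 3$, contractibility (\Cref{prop:contractible2}) and simple connectedness at infinity (\Cref{lem:simply_connected_at_infitinity}) are fed into Stallings, Freedman and a three-dimensional recognition theorem; your $d=3$ citation differs from the paper's but is valid.

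Your dismissal of the fallback is mistaken, however. Brown's monotone union theorem (an increasing union of open $d$-cells is an open $d$-cell) needs no annulus theorem. Here the collar is even explicit. Fix $s>\underline{t}_r$. By \Cref{lem:velocity_points_inwards} and \Cref{prop:into_the_ball_implies_convergence_to_ak}, the trajectory of each $x\in\Gamma_k\setminus\gamma_s^{-1}(B_r(a_k))$ meets $\partial B_r(a_k)$ at exactly one time $u\in[s,1)$. So $(p,u)\mapsto\gamma_u^{-1}(p)$ is a homeomorphism from $\partial B_r(a_k)\times[s,1)$ onto $\Gamma_k\setminus\gamma_s^{-1}(B_r(a_k))$. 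This yields $\Gamma_k\cong\R^d$ in every dimension:
\begin{itemize}
\item without Stallings or Freedman;
\item without \Cref{lem:simply_connected_at_infitinity}, whose written proof tacitly assumes $\gamma_{t^*}(f(S^1))\subseteq B_r(a_k)$;
\item even without the hypothesis on the center.
\end{itemize}
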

\begin{proof}
For $d=1$ and $d=2$ any open and contractible subset of $\R^d$ is homeomorphic to $\R^d$ (and thus also to $B_1(0)$) such that the claim follows from \Cref{prop:contractible2}. For $d\geq3$, we additionally know from \Cref{prop:contractible2} and \Cref{lem:simply_connected_at_infitinity} that $\Gamma_k$ is simply connected at infinity such that the claim follows from the fact that any set which is open, contractible and simply connected at infinity is homeomorphic to $\R^d$ (and to $B_1(0)$), see \cite{BT1989} for $d=3$, \cite{FQ1990} for $d=4$ and \cite{Stallings1962} for $d>4$.
\end{proof}

\section{Proofs for Geometry of the FM Cells}
\label{appendix:proofs_geometry}

\subsection{Proofs for the Four-Point Counterexample}
\label{app:counterex_proofs}

\begin{proof}[Proof of \Cref{lem1:regular_triangle_example}]
By symmetry it suffices to show the claim for $k=3$ for which the line segment reads as $L_3 \eqdef \{(-c,0):c\geq0\}$ (See red line in Figure \ref{fig:counterex} (c)). 

First we show that for any initialization $x \in L_3$, its image by the flow map $\gamma_t(x) \in L_3$.  First, note that by definition it holds for $x=(b,0)$ with $b\in\R$ that $\alpha_1(t,x)=\alpha_2(t,x)$. In particular, we get that $v_t(b,0)=(a,0)$ for some $a\in\R$. Moreover, we get for any $t$ that $\alpha_1(t,(0,0))=\alpha_2(t,(0,0))=\alpha_3(t,(0,0))$ such that $v_t(0,0)=\alpha_1(t,(0,0))(x_1+x_2+x_3)=0$. From this, we can conclude for $b\geq0$ that $\gamma_t(-b,0) \in L_3$. 

Now we prove that for $x\in L_3$, $\gamma_t(x)$ actually converges to $a_4 = (0,0)$ when $t \to 1$. Let $b_0\leq 0$ and let $b_t$ such that $(b_t,0)=\gamma_t(b_0,0)$. We show that $b_t$ converges to zero for $t\to 1$ in three steps.

\textbf{Step 1:} We show that for all $\epsilon>0$ there exists some time $t_0$ such that $b_{t}\geq-\frac12-\epsilon$ for all $t\geq t_0$. To this end, we observe
\begin{align*}
(1-t)\dot b_t&=(\alpha_1(t,(b_t,0))+\alpha_2 (t,(b_t,0)))(-\frac12-b_t)\\
&\quad+\alpha_3(t,(b_t,0))(1-b_t) -\alpha_4(t,(b_t,0))b_t\\
&\geq \sum_{k=1}^4\alpha_k(t,(b_t,0))(-\frac12-b_t)=-\frac12-b_t.
\end{align*}
Applying the Gronwall lemma on $\dot b_t\geq - \frac{\frac12+b_t}{1-t}$ gives $b_t \geq (1-t)b_0-\frac{t}2 \xrightarrow[t \to 1]{} -\frac12$, which shows step 1. 

\textbf{Step 2:} Next, we show that for $\epsilon = 0.1$ there exists some $t_1$ 
such that $v_t(b,0)\geq \frac{c}{1-t}$ for some $c>0$ and all $b\in -\frac12+[-\epsilon,\epsilon]$.
Using the formula for $v_t$ from the previous step, we get that $v_t(b,0)=(w_t(b),0)$ with
\begin{align*}
(1-t)w_t(b)&\geq(1-\alpha_4(t,(b,0)))(-\frac12-b) -\alpha_4(t,(b,0))b\\
&=-\frac{1-\alpha_4(t,(b,0))}{2}-b
\geq \frac{1}{2}-\epsilon -\frac{1-\alpha_4(t,(b,0))}{2}
\end{align*}
Observing that $\alpha_4(t,(b,0))$ converges to $1$ uniformly on $\{(c,0):c\in-\frac12+[-\epsilon,\epsilon]\}$ yields the claim of step 2.

\textbf{Step 3:} Set $t_2=\max(t_0,t_1,\underline{t}_r)$ with $\underline{t}_r$ from \Cref{prop:into_the_ball_implies_convergence_to_ak} and $r=\frac12-\epsilon$. We show that there exists some $t\geq t_2$ such that $b_t\geq -\frac12+\epsilon$.
We get from step 1 that $b_{t_2}\geq -\frac12-\epsilon$.  
To this end, we build a contradiction. Assume that $b_t<-\frac12+\epsilon$ for all $t\in [t_2,1)$. Then, we get by step 1 that $b_t\in-\frac12+[-\epsilon,\epsilon]$. By step 2 this implies that $\dot b_t\geq \frac{c}{1-t}$ for some $c > 0$. Applying the Gronwall lemma on this leads for $t\in[t_2,1)$ to $b_t\geq b_{t_2}+c \log(\frac{1-t_2}{1-t})$. This converges to $\infty$ for $t\to 1$ whcih contradicts the assumption that $b_t<-\frac12+\epsilon$.

\textbf{Conclusion:}
By step 3, we know that there exists some $t\geq \underline{t_r}$ such that $(b_t,0)\in B_r(0,0)$. Applying \Cref{prop:into_the_ball_implies_convergence_to_ak} yields $b_1=0$ such that $(b_0,0)\in\Gamma_4$.
\end{proof}

\begin{proof}[Proof of \Cref{lem2:regular_triangle_example}]
By symmetry it suffices to show the claim for $k=3$ for which $a_k=(1,0)$. For $b_0\in\R$, we consider the curve $\gamma_t(b_0,0)=(b_t,0)$. We note that for any $t$ we can estimate
\begin{align*}
(1-t)\dot b_t&=(\alpha_1(t,(b_t,0))+\alpha_2 (t,(b_t,0)))(-\frac12-b_t)\\
&\quad+\alpha_3(t,(b_t,0))(1-b_t) -\alpha_4(t,(b_t,0))b_t\\
&\begin{cases}
    \geq -\frac12-b_t\\
    \leq 1-b_t
\end{cases}
\end{align*}
With the Gronwall lemma for both estimates this implies that
\begin{equation}\label{eq:regular_triangle_example_estimate}
(1-t)b_0 -\frac{t}2\leq b_t\leq (1-t)b_0 + t.  
\end{equation}
Now we note that by Corollary~\ref{cor:stable_balls} and \Cref{prop:into_the_ball_implies_convergence_to_ak}, we deduce that for $r=1/4$ there exists a time $t_0$ such that, for all $t\geq t_0$, the implication
\[
1-r \leq b_t \leq 1+r \quad \Rightarrow \quad 1-r \leq b_s \leq 1+r \text{ for all } s \geq t
\]
holds, and moreover $b_1 = 1$. 
Now choose $c_0=\frac{1-r}{1-t_0}+\frac{t_0}{2(1-t_0)}$ such that $(1-t_0)c_0-\frac{t_0}{2} = 1-r$. Then it holds by \eqref{eq:regular_triangle_example_estimate} for $b_0\geq c_0$ that $b_{t_0}\geq (1-t)b_0-\frac{t_0}{2}\geq 1-r$. In particular, we have that $b_t\geq 1-r$ for all $t\geq t_0$. At the same time the other estimate in \eqref{eq:regular_triangle_example_estimate} gives that there exists $t\geq t_0$ such that $b_t\leq 1+r$. Together, we obtain that there exists $t\geq t_0$ such that $1-r\leq b_t \leq 1+r$ such that $b_1=1$.
\end{proof}

\subsection{Proof of the Non-Monotonicity}
\label{app:non_monotony}

\begin{proof}[Proof of \Cref{lem3:regular_triangle_example}]
   Consider the two points $x_1 = 
    c_1 a_3 = (c_1,0)$ with $c_1 > c_0$ where $c_0$ is given by \Cref{lem2:regular_triangle_example} and $x_2 = (-c_2 \cos(-\frac{4\pi}{3}), -c_2 \sin(\frac{4 \pi}{3})) = \frac{c_2}{2}(1, \sqrt{3})$ such that $\frac{c_2}{2}-c_1 > 0$. 
    \begin{itemize}
        \item By \Cref{lem2:regular_triangle_example},  $\gamma_t(x_1) \xrightarrow[t \to 1]{} a_3$ and by Corollary~\ref{cor:stable_balls}, there exists $t_0 > 0$ such that for $t \geq t_0$, $\gamma_t(x_1) = (b_t,0)$ with $b_t \geq 1 - r$. 
        \item By \Cref{lem1:regular_triangle_example},  $\gamma_t(x_2) \xrightarrow[t \to 1]{} a_4 = (0,0)$ and by Corollary~\ref{cor:stable_balls}, there exists $t_1 > 0$ such that for $t \geq t_0$, $\gamma_t(x_2) = \frac{c_t}{2}(1, \sqrt{3})$ and $\norm{\gamma_t(x_2)} = c_t \leq r$
    \end{itemize}
    We then get for $t \geq t_m \eqdef \max(t_0,t_1)$, 
    \begin{align}
        \langle \gamma_t(x_2)  - \gamma_t(x_1), x_2 - x_1 \rangle &= \left(\frac{c_t}{2} - b_t\right)\left(\frac{c_2}{2} - c_1\right) + \frac{3 c_t c_2}{4} \\
        &\leq \left(\frac{r}{2} - (1-r)\right)\left(\frac{c_2}{2} - c_1\right)+ \frac{3 r c_2}{4} \\
        &= - \frac{1}{2}\Big((1-3r)c_2 - (2-3r)c_1 \Big) .
    \end{align}
    Then for any $c_2 = K c_1$ with $K > \frac{2-3r}{1-3r}$, we get
     \begin{align}
     \langle \gamma_t(x_2)  - \gamma_t(x_1), x_2 - x_1 \rangle <  - \frac{c_1}{2}\Big((1-3r)K - (2-3r) \Big).
     \end{align}
\end{proof}

\begin{proof}[Proof of Lemma 12]
Let $x_1,x_2\in \mathbb{R}^d$ and $c>0$ be given by Lemma 11, so that
$$
\langle \gamma(x_2)-\gamma(x_1),x_2-x_1\rangle<-c .
$$
By Proposition 3, for each fixed $x_i \in \{ x_1, x_2\}$, we have
$\gamma^\varepsilon(x_i)\to \gamma(x_i)$ as $\varepsilon\to 0$. Hence, for all
$\varepsilon>0$ small enough,
$$
\langle \gamma^\varepsilon(x_2)-\gamma^\varepsilon(x_1),x_2-x_1\rangle
<-\frac{c}{2}.
$$
Set $h:=x_2-x_1$. Since $\gamma^\varepsilon$ is smooth, the fundamental theorem
of calculus gives
$$
\langle \gamma^\varepsilon(x_2)-\gamma^\varepsilon(x_1),h\rangle
=
\int_0^1
\left\langle
J_{\gamma^\varepsilon}(x_1+sh)h,h
\right\rangle ds .
$$
Equivalently,
$$
\langle \gamma^\varepsilon(x_2)-\gamma^\varepsilon(x_1),h\rangle
=
\frac12\int_0^1
h^\top
\left(
J_{\gamma^\varepsilon}(x_1+sh)
+
J_{\gamma^\varepsilon}(x_1+sh)^\top
\right)
h\,ds <-\frac{c}{2}.
$$
Therefore there exists $s_0\in[0,1]$ such that
$$
h^\top
\left(
J\gamma^\varepsilon(x_1+s_0h)
+
J\gamma^\varepsilon(x_1+s_0h)^\top
\right)
h
<
-c .
$$
It follows that
$$
\lambda_{\min}\left(
J\gamma^\varepsilon(x_1+s_0h)
+
J\gamma^\varepsilon(x_1+s_0h)^\top
\right)
<
-\frac{c}{\|h\|^2}.
$$
Thus, setting $c':=c/\|x_2-x_1\|^2>0$, we obtain
$$
\inf_x
\lambda_{\min}\left(
J\gamma^\varepsilon_1(x)+J\gamma^\varepsilon_1(x)^\top
\right)
<-c'.
$$
\end{proof}

\section{Higher Dimensional Training Example (MNIST)}
\label{app:mnist}

We complement the two-dimensional examples from the main text with a training experiment on MNIST. 
We train a VAE encoder, parameterized by a two-layer MLP, to embed MNIST images from dimension $784$ into a latent space of dimension $16$. 
We then select $100$ latent training points $(a_k)_{k=1}^{100}$, with $10$ points from each digit class, and use them as the atoms of the semi-discrete target. 
On this dataset, we compare the semi-discrete OT Laguerre cells, the exact Flow Matching cells computed from the closed-form velocity, and the cells induced by a trained Flow Matching velocity field, parameterized by a $2$-layer MLP with $256$ hidden neurons.

To visualize these $16$-dimensional cells, we restrict the comparison to a two-dimensional affine slice spanned by the first two PCA directions of the selected latent points. 
For the trained Flow Matching model, terminal points do not necessarily coincide exactly with the atoms $a_k$, reflecting the usual generalization and approximation effects of the learned velocity. 
We therefore assign each grid point to the atom closest to its terminal image under the learned flow.

Figure 7 shows that the qualitative differences between OT and FM persist in this higher-dimensional latent setting. 
Compared with Laguerre cells, the FM cells have curved boundaries, are often non-convex, and display different locations and neighborhood relations. 
Moreover, the trained FM cells closely resemble the exact FM cells, suggesting that even a simple learned velocity can recover the main geometric features of the exact semi-discrete Flow Matching tessellation.
\begin{figure}[h]
    \centering
    \begin{subfigure}{\threeimagesfigure}
    \centering
    \includegraphics[width=\linewidth]{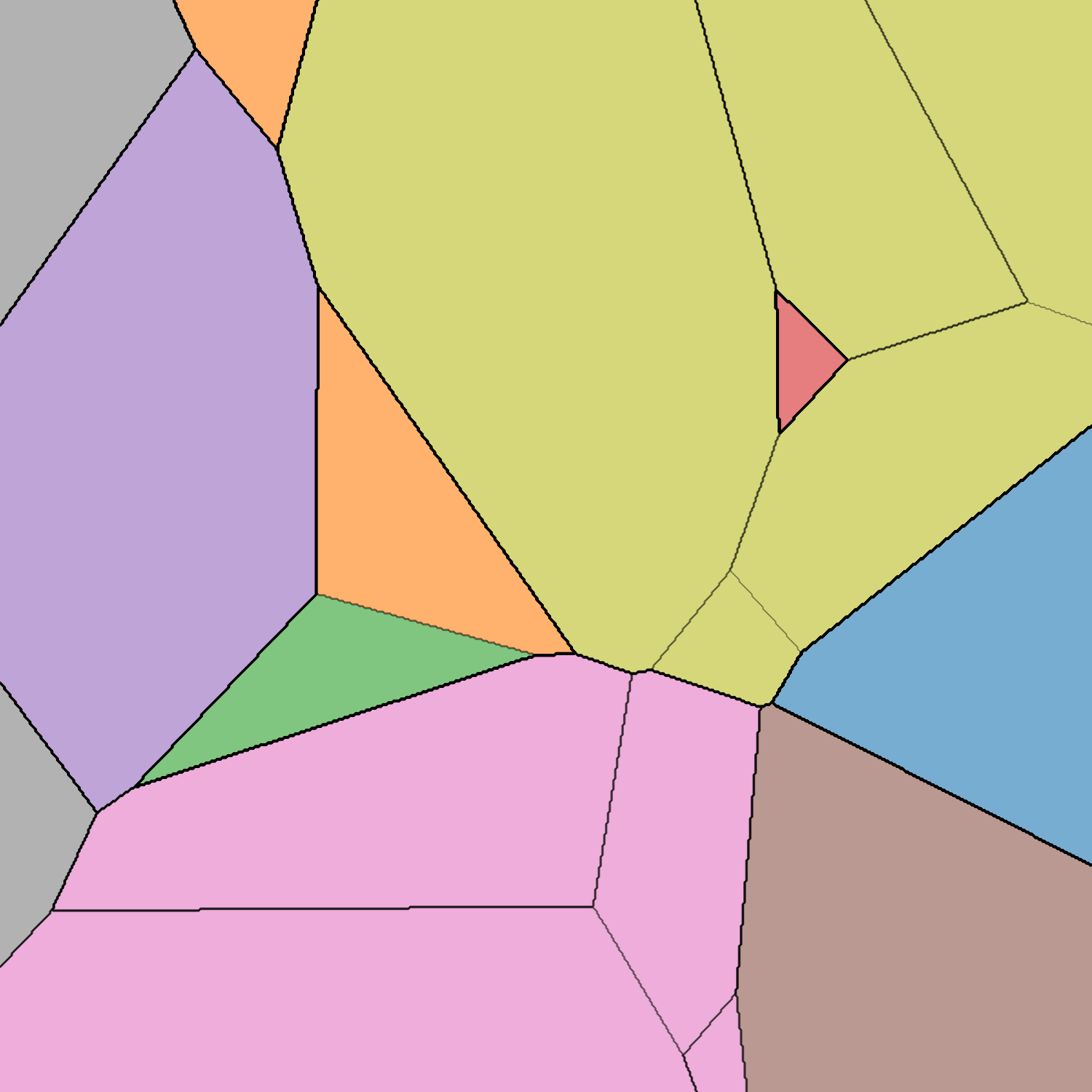}
     \caption{OT}
     \end{subfigure}
    \begin{subfigure}{\threeimagesfigure}
    \centering
    \includegraphics[width=\linewidth]{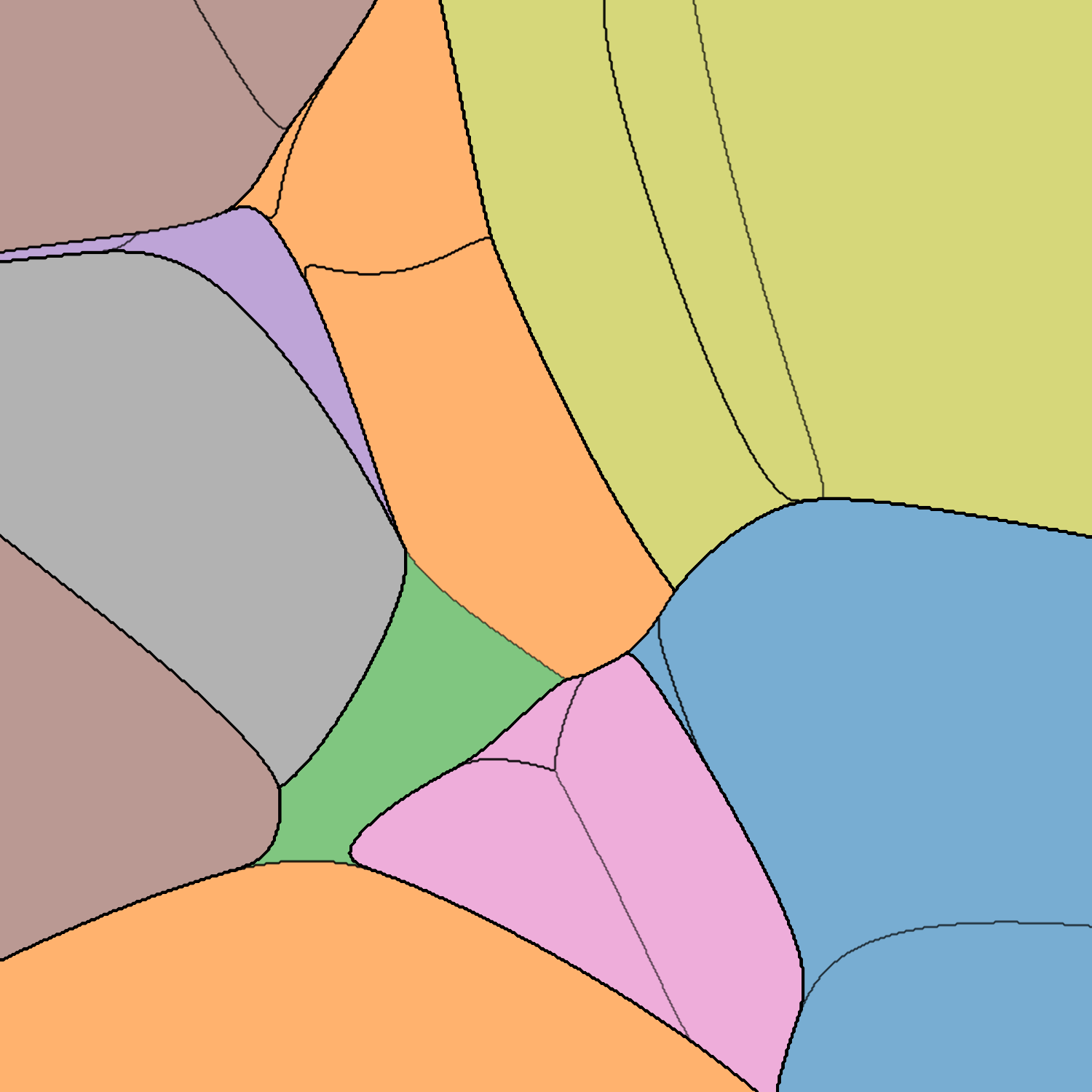}
     \caption{FM with exact velocity}
     \end{subfigure}
     \begin{subfigure}{\threeimagesfigure}
    \centering
    \includegraphics[width=\linewidth]{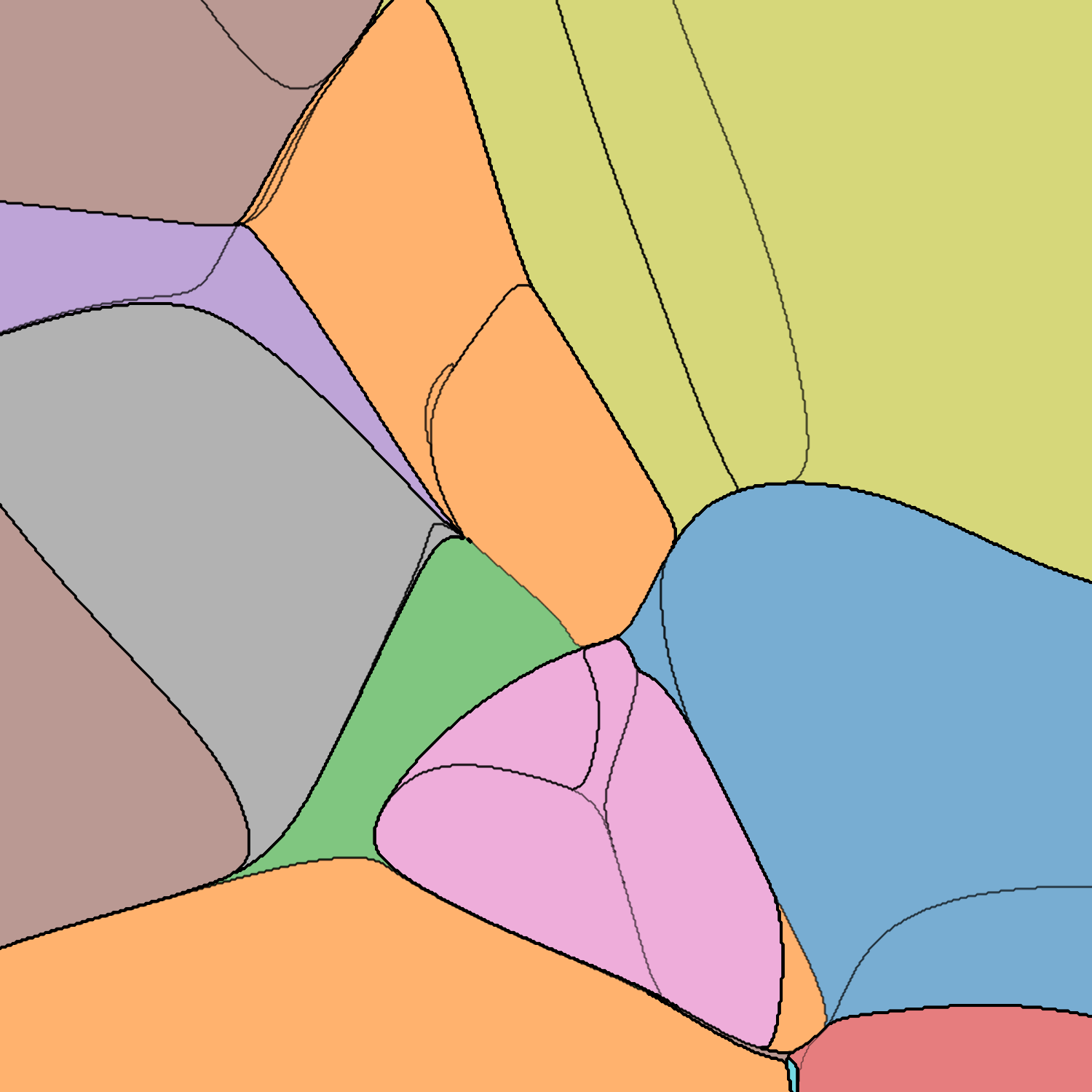}
     \caption{FM with learned velocity}
     \end{subfigure}

     \caption{ Semi-discrete assignment cells $\Gamma_k$ for $100$ MNIST training points in a VAE latent space of dimension $16$. Cells are visualized on a two-dimensional affine slice spanned by the first two PCA directions.  Cells are colored according to the MNIST digit label of the corresponding target point. 
    We compare (a) Optimal Transport Laguerre cells, (b) Flow Matching cells induced by the closed-form velocity, and (c) Flow Matching cells obtained from a neural-network approximation of the velocity. 
    In panel (c), each grid point is assigned the color of the target point closest to its image under the terminal flow.}
    \label{fig:MNIST}
\end{figure}

\end{appendices}

\end{document}